\let\origAND\AND
\let\AND\origAND
\newtheorem{lemma}{Lemma}
\title{Coupled Distributional Random Expert Distillation \\for World Model Online Imitation Learning}
\author{%
\begin{minipage}[t]{\textwidth}
\centering
\textbf{Shangzhe Li\textsuperscript{1}\thanks{This work was done during an internship at University of California, San Diego.}} \quad
\textbf{Zhiao Huang\textsuperscript{2}} \quad
\textbf{Hao Su\textsuperscript{2,3}} \\
\vspace{0.3cm}
\textsuperscript{1}\textnormal{UNC Chapel Hill} \quad
\textsuperscript{2}\textnormal{Hillbot} \quad
\textsuperscript{3}\textnormal{University of California, San Diego}
\end{minipage}
}
\begin{document}

\maketitle

\begin{abstract}
  Imitation Learning (IL) has achieved remarkable success across various domains, including robotics, autonomous driving, and healthcare, by enabling agents to learn complex behaviors from expert demonstrations. However, existing IL methods often face instability challenges, particularly when relying on adversarial reward or value formulations in world model frameworks. In this work, we propose a novel approach to online imitation learning that addresses these limitations through a reward model based on random network distillation (RND) for density estimation. Our reward model is built on the joint estimation of expert and behavioral distributions within the latent space of the world model. We evaluate our method across diverse benchmarks, including DMControl, Meta-World, and ManiSkill2, showcasing its ability to deliver stable performance and achieve expert-level results in both locomotion and manipulation tasks. Our approach demonstrates improved stability over adversarial methods while maintaining expert-level performance.
\end{abstract}

\section{Introduction}

Imitation Learning (IL) has recently shown remarkable effectiveness across a wide range of domains, particularly in addressing complex real-world challenges. In robotics, IL has significantly advanced the state of the art in manipulation tasks \citep{zhu2022viola,wan2024lotus,stepputtis2020language,chi2023diffusionpolicy}, enabling robots to perform intricate operations with precision and adaptability. Similarly, IL has achieved impressive results in locomotion tasks \citep{chiu2024learning,seo2023deep,huang2024diffuseloco}, where it has facilitated the development of robust and agile motion controllers for various robotic platforms. Beyond robotics, IL has also demonstrated its versatility in domains such as autonomous driving \citep{pan2017agile,bronstein2022hierarchical,cheng2024pluto}, where it is used to model complex decision-making processes and ensure safe and efficient vehicle navigation. Moreover, IL has started making meaningful contributions to healthcare \citep{deuschel2023contextualized}, providing support in medical decision-making and enhancing the interpretability of complex diagnostic processes. These achievements highlight the broad applicability of IL and its potential to drive transformative progress across diverse fields.

The simplest approach to imitation learning is to apply behavioral cloning directly to the provided expert dataset, as demonstrated in prior works like IBC \citep{florence2022implicit} and Diffusion Policy \citep{chi2023diffusionpolicy}. However, this approach is not dynamics aware and may result in lack of generalization when encountering out-of-distribution states. To address these shortcomings, methods like GAIL \citep{ho2016generative}, SQIL \citep{reddy2019sqil}, IQ-Learn \citep{garg2021iq}, MAIL \citep{baram2016model} and CFIL \citep{freund2023coupled} have introduced value or reward estimation to facilitate a deeper understanding of the environment, while leveraging online interactions to enhance exploration. Specifically, GAIL, MAIL, and IQ-Learn frame the imitation learning problem as an adversarial training process, distinguishing between the state-action distributions of the expert and the learner.

Recent advancements in latent world models for imitation learning have made significant progress. Several prior works, including V-MAIL \citep{rafailov2021visual}, CMIL \citep{kolev2024efficient}, Ditto \citep{demoss2023ditto}, EfficientImitate \citep{yin2022planning}, and IQ-MPC \citep{li2024rewardfreeworldmodelsonline}, have integrated adversarial imitation learning frameworks with world models to address imitation learning tasks. However, as discussed in Section \ref{sec:drawbacks-iqmpc}, we found that even with world models, the adversarial objectives can still suffer from instability in certain scenarios. To overcome this issue, we propose replacing the adversarial reward or value formulation with a novel density estimation approach based on random network distillation (RND) \citep{burda2018exploration}, which mitigates the instability. Specifically, we perform density estimation in the latent space of the world model, leveraging the superior properties of latent representations and their enhanced dynamics-awareness, as the latent dynamics model is trained directly within this space. Unlike existing methods that use RND for imitation learning \citep{wang2019random}, our approach jointly learns the reward model and other components of the world model, estimating both the expert and behavioral distributions simultaneously. In contrast, the existing Random Expert Distillation \citep{wang2019random} estimates distributions in the original observation and action spaces and decouples the reward model learning from the downstream RL process, making it hard to solve complex tasks with high dimensional observation and action spaces. We evaluate our approach across a range of tasks in DMControl \citep{tassa2018deepmind}, Meta-World \citep{yu2020meta}, and ManiSkill2 \citep{gu2023maniskill2}, demonstrating stable performance and achieving expert-level results.

In conclusion, the contributions of our work are summarized as follows:
\begin{itemize}
    \item We propose a novel reward model formulation for world model online imitation learning based on random network distillation for density estimation.
    \item We demonstrate that our approach exhibits superior stability compared to previous approaches with adversarial formulations and achieves expert-level performance across a range of imitation learning tasks, including both locomotion and manipulation.
\end{itemize}

\section{Preliminary}
We formulate our decision-making problem as Markov Decision Processes (MDPs). MDPs can be defined via a tuple $\langle\mathcal{S},\mathcal{A},p_0,\mathcal{P},r,\gamma\rangle$. In details, $\mathcal{S}$ and $\mathcal{A}$ represent the state and action spaces, $p_0$ is the initial state distribution, $\mathcal{P}:\mathcal{S}\times\mathcal{A}\rightarrow\Delta_\mathcal{S}$ depicts the transition probability, $r(\mathbf{s},\mathbf{a})$ is the reward function, and $\gamma\in(0,1)$ is the discount factor. Let $\mathcal{Z}$ denote the latent state space of the world model. The expert latent state-action distribution and the behavioral latent state-action distribution (induced by the behavioral policy $\pi$) over $\mathcal{Z} \times \mathcal{A}$ are denoted by $\rho_E$ and $\rho_\pi$, respectively.
\subsection{Random Network Distillation}
Random Network Distillation (RND) \citep{burda2018exploration} is a technique for promoting exploration. In details, it leverages a fixed randomly parameterized network $f_{\bar\theta}(x)$ and a learnable predictor network $f_{\theta}(x)$. During training, RND minimizes the following MSE loss for dataset $\mathcal{D}$ for certain data distribution $\rho$:
\begin{equation}
    \mathcal{L}_{RND}(\theta) = \mathbb{E}_{x\sim\mathcal{D}}\Vert f_{\bar\theta}(x) - f_{\theta}(x)\Vert_2^2
\end{equation}
During the evaluation, we obtain a data point $x'$ for unknown data distribution $\rho'$. By computing the L2 norm $\Vert f_{\bar\theta}(x') - f_{\theta}(x')\Vert_2^2$, we can estimate the difference between distribution $\rho$ and $\rho'$. This can also be interpreted as performing density estimation for the new data point $x'$ within the original distribution $\rho$. A similar methodology has been used in imitation learning and inverse reinforcement learning \citep{wang2019random}.
\subsection{World Models}
Recent world models in the context of robotics control and reinforcement learning often represent a model-based RL method with latent spaces. The model learns a latent state transition model $\mathbf{z}'=d_\theta(\mathbf{z},\mathbf{a})$, along with a encoder $\mathbf{z}=h_\theta(\mathbf{z})$ and a policy model $\mathbf{a}=\pi_\theta(\mathbf{z})$. The decision-making process often includes planning with latent unrolling. For models based on the Recurrent State-space Model (RSSM) \citep{hafner2019learning}, the latent states often are split into a deterministic part and a stochastic part. PlaNet \citep{hafner2019dream} and Dreamer series \citep{hafner2019learning, hafner2020mastering, hafner2023mastering} leverage decoders for observation reconstruction, while TD-MPC series \citep{hansen2022temporal, hansen2023td} leverages a decoder-free architecture and conducts planning solely in the latent space.

\begin{figure}[t]
    \centering
    \includegraphics[scale=0.4]{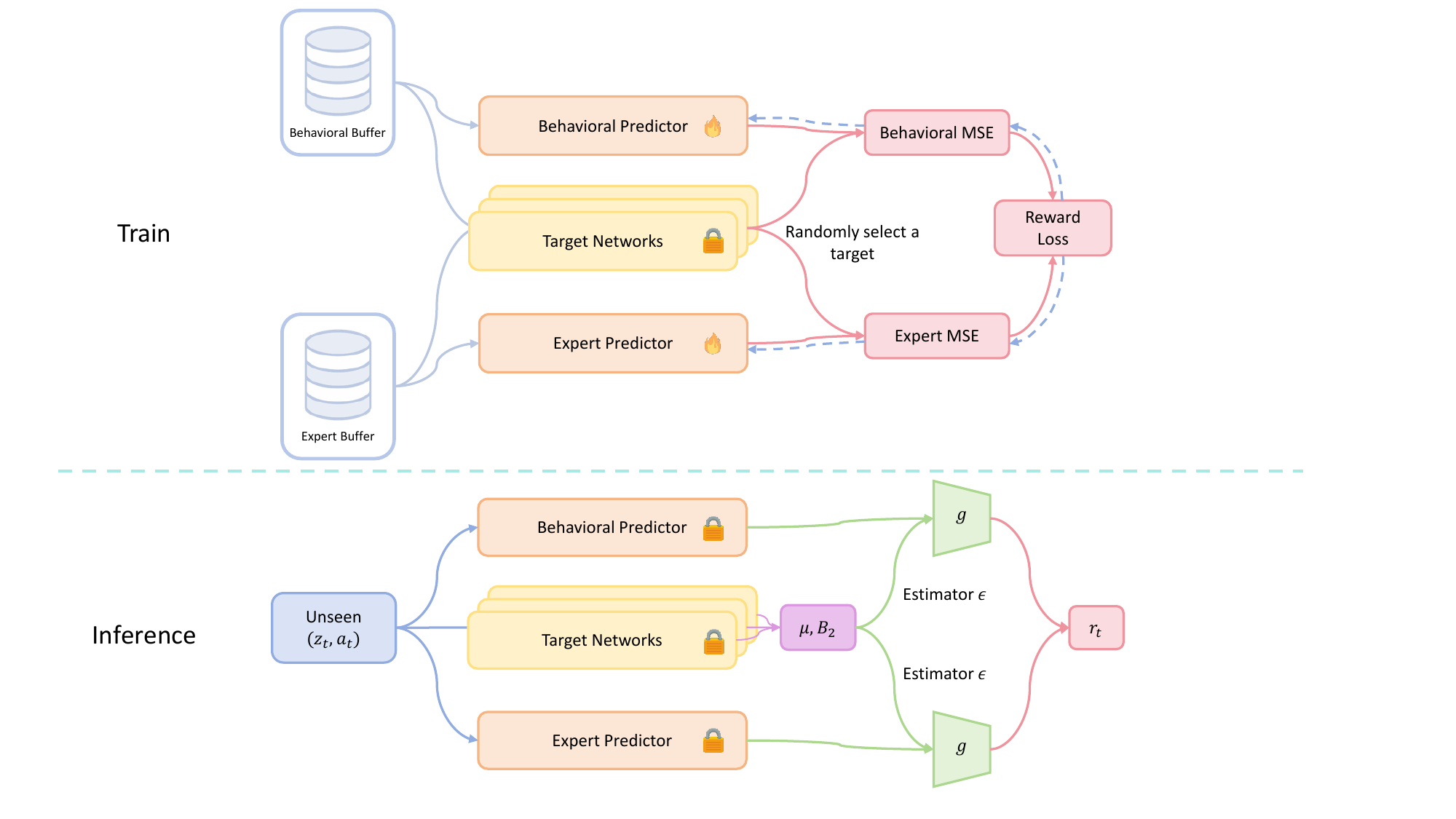}
    \caption{\textbf{Coupled Distributional Random Expert Distillation} We present the architecture of our CDRED reward model. During training, the behavioral and expert predictors are trained using latent representations encoded from observations and actions sampled from the behavioral and expert buffers. The {\color{blue}dotted blue lines} indicate the gradient backpropagation paths. During inference, rewards are estimated by the outputs of the behavioral and expert predictors, along with the mean and second-order moments of the target network's output, for an unseen latent state-action pair.}
    \label{fig:pipeline}
\end{figure}

\section{Methodology}
In this section, we will go over the motivation and detailed methodology of our method, \textbf{C}oupled \textbf{D}istributional \textbf{R}andom \textbf{E}xpert \textbf{D}istillation, or \textbf{CDRED} as an abbreviation. We show that our method is stabler and more reasonable compared to naively apply Random Expert Distillation (RED) \citep{wang2019random} on imitation learning with world models.
\begin{figure}
    \centering
    \includegraphics[scale=0.42]{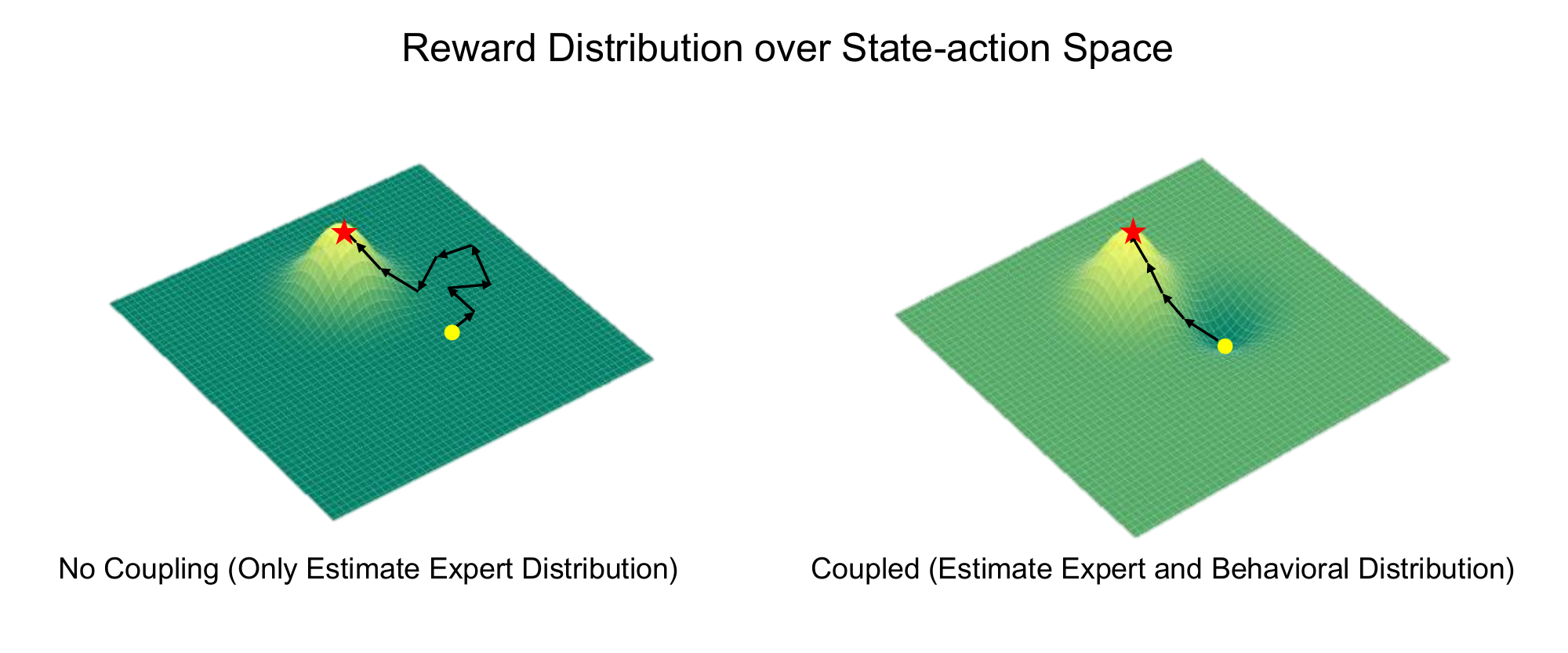}
    \caption{\textbf{Intuitive Illustration for Coupled Distribution Estimation} When the state-action distribution of the initial policy differs significantly from that of the expert distribution, the initial rewards tend to approach zero. This often leads to a slower or even unsuccessful learning process. By estimating the behavioral distribution in conjunction with the expert distribution, we can effectively model the rewards to guide the behavioral distribution closer to that of the expert.}
    \label{fig:illustration}
\end{figure}
\subsection{Motivation}
\label{sec:drawback-red}
Random Expert Distillation (RED) \citep{wang2019random} performs imitation learning by estimating the support of expert policy distribution. During training, it minimizes $K$ pairs of predictors and fixed random targets in expert dataset with $N$ data points $\mathcal{D}_E=\{\mathbf{s}_i,\mathbf{a}_i\}_{0:N}$:
\begin{equation}
    \hat\theta_k = \underset{\theta}{\text{argmin}}~\frac{1}{N}\sum_{i=0}^{N-1}(f_\theta(\mathbf{s}_i,\mathbf{a}_i)-f_{\bar\theta_k}(\mathbf{s}_i,\mathbf{a}_i))^2
\end{equation}
In order to determine if a state-action pair is within the support of expert policy, it computes the L2 norm deviation for an unknown state-action pair $(\mathbf{s},\mathbf{a})$ using $K$ pairs of predictors and targets:
\begin{equation}
    \mathcal{L}_{RED}(\mathbf{s},\mathbf{a})=\frac{1}{K}\sum_{k=0}^{K-1}(f_{\hat\theta_k}(\mathbf{s},\mathbf{a})-f_{\bar\theta_k}(\mathbf{s},\mathbf{a}))^2
\end{equation}
By leveraging a reward in the shape of \( r(\mathbf{s}, \mathbf{a}) = \exp(-\sigma~\mathcal{L}_{RED}(\mathbf{s}, \mathbf{a})) \), the approach effectively guides the downstream RL policy towards the expert distribution. However, this method may encounter challenges when the initial behavioral policy distribution is far from the expert distribution or when RED is applied naively on large latent spaces in world models.

To address these difficulties, we introduce a coupled approach. This approach jointly estimates both the expert distribution and the behavioral distribution; it encourages policy exploration during the early stages of training.  We provide an intuitive illustration in Figure \ref{fig:illustration} and describe the detailed methodology in Section \ref{sec:coupled-RED}. 
In this coupled approach, we need to estimate the behavioral distribution during online training,  which naturally raises the problem of inconsistent final rewards, as noted by \cite{yang2024exploration}. Thus, we adopt their method for tracking the frequency of data occurrence, which we describe in Section \ref{sec:inconsistent-reward}.



\begin{figure}[h]
    \centering
    \begin{minipage}{0.45\textwidth}
        \centering
        \includegraphics[width=\textwidth]{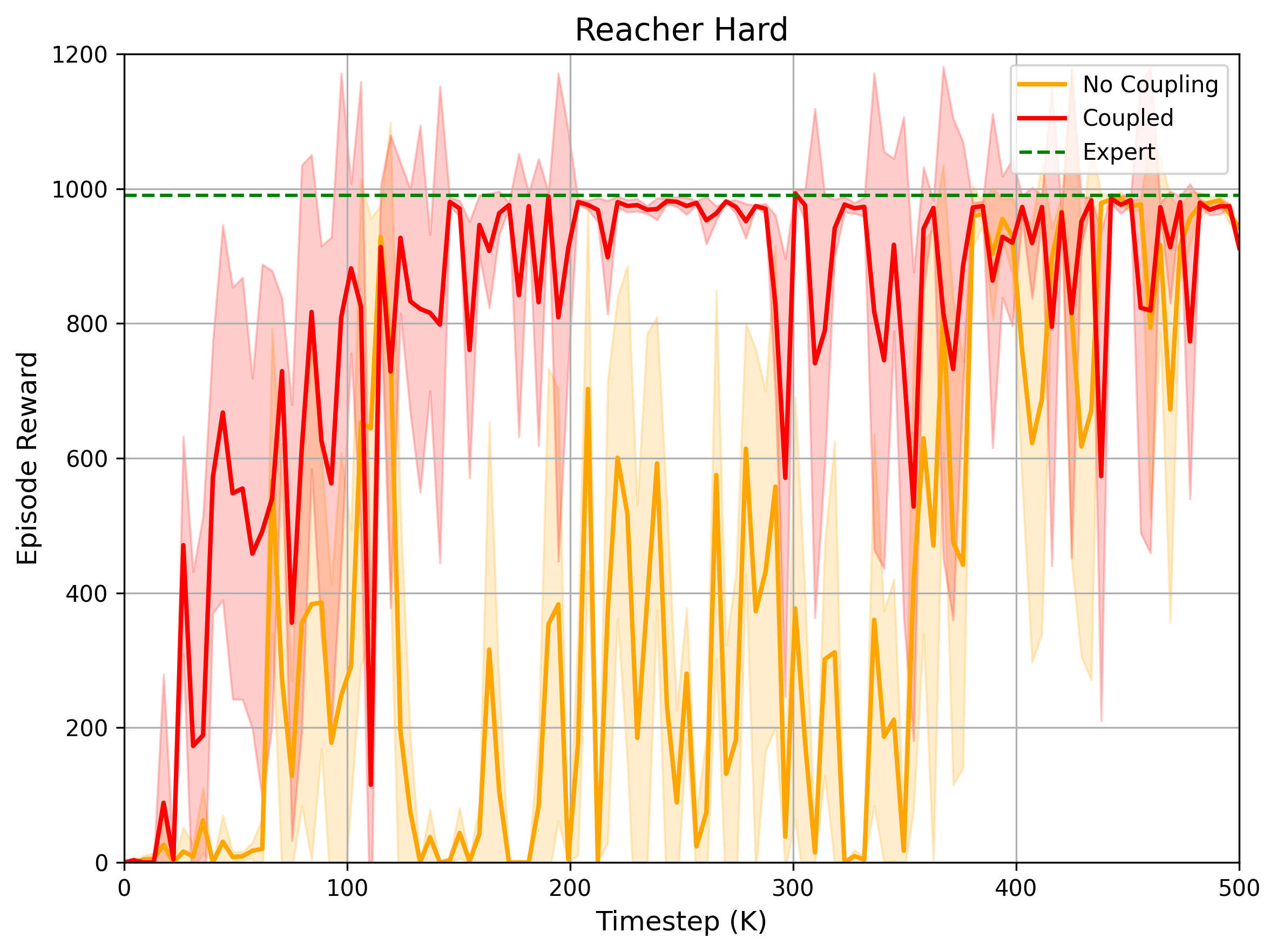}
        \\ \textbf{Faster convergence}
    \end{minipage}
    \begin{minipage}{0.45\textwidth}
        \centering
        \includegraphics[width=\textwidth]{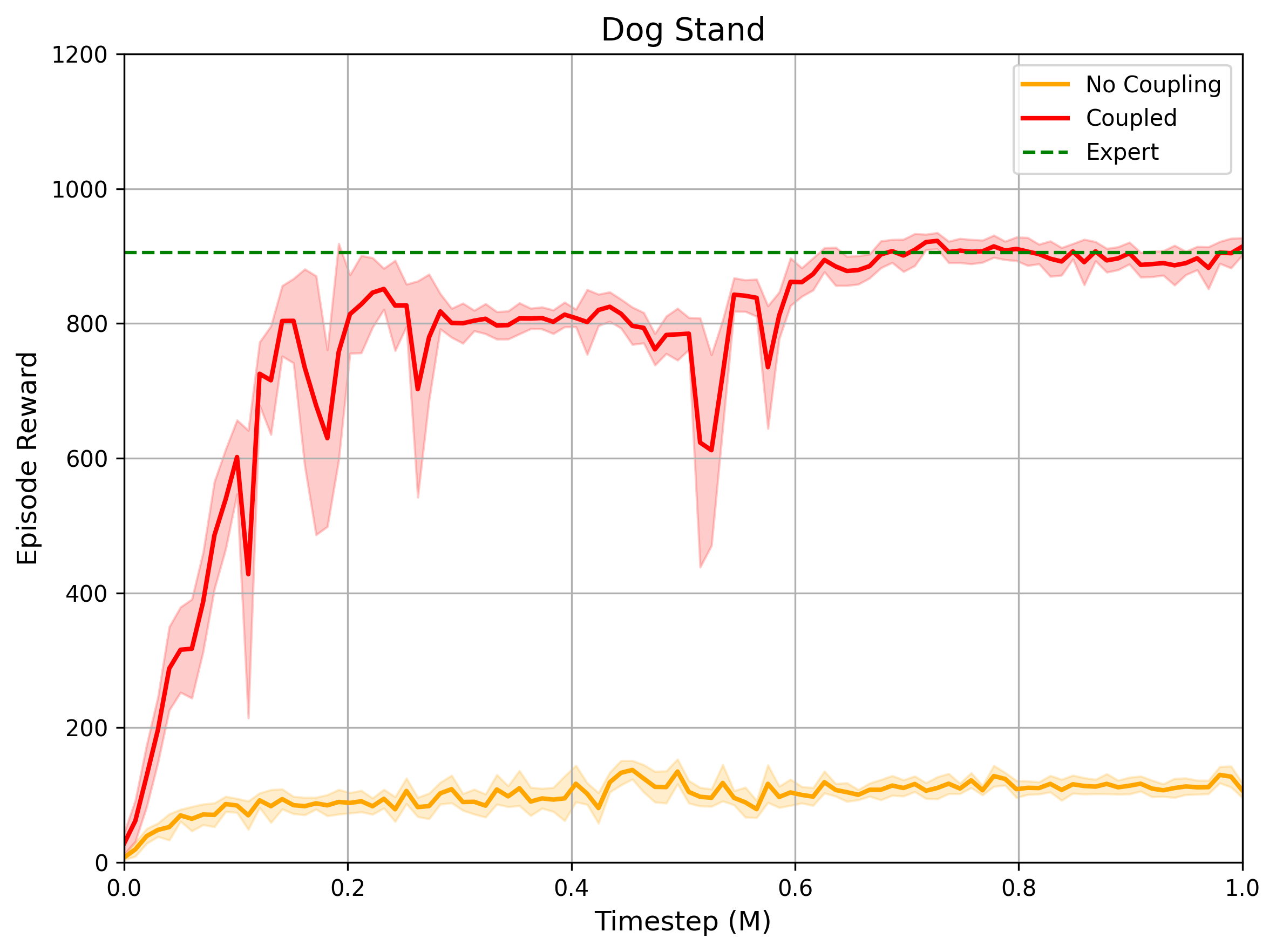}
        \\ \textbf{Better performance in complex settings}
    \end{minipage}
    \caption{\textbf{Advantages of Coupled Density Estimation} We demonstrate the empirical performance boost of our coupled density estimation in terms of leveraging random network distillations for reward modeling based on state-action distribution estimation. With coupled estimation, we observe faster convergence to optimal in many simple cases (Left) and better performance in complex tasks (Right).}
    \label{fig:iqmpc-drawbacks}
\end{figure}

\subsection{Mitigating Inconsistent Reward Estimation}
\label{sec:inconsistent-reward}
Inconsistencies can arise at various stages of RND model training \citep{yang2024exploration}. During the initial stage, these inconsistencies stem from extreme values in one network, which can be mitigated by using multiple target networks (denoted as $K$ target networks). In the final stage, inconsistencies occur when the resulting reward distribution does not align with the actual state-action distribution. To address this, an unbiased estimator for the state-action occurrence count $n$ is necessary. We should track state-action occurrence frequencies in order to maintain consistency when the distributional RND model is trained online. In this section, we replace the original state $\mathbf{s}_t$ with the latent representation $\mathbf{z}_t$ for the input of the RND model. Following \cite{yang2024exploration}, we denote the random variable $c(\mathbf{z}_t, \mathbf{a}_t)$ as the output of a target network $f_{\bar{\theta}_k}$, where $k$ is sampled uniformly from the interval $[0, K)$. For a predictor $f$ estimating a distribution $\rho$ (which can be either the expert distribution $\rho_E$ or the behavioral distribution $\rho_\pi$), by minimizing the $L_2$-norm loss $\| f(\mathbf{z}_t, \mathbf{a}_t) - c(\mathbf{z}_t, \mathbf{a}_t) \|_2^2$, the optimal predictor $f^*(\mathbf{z}_t, \mathbf{a}_t)$ is given by:

\begin{equation}
\label{eqn:optimal-predictor}
    f^*(\mathbf{z}_t,\mathbf{a}_t)=\frac{1}{n}\sum_{i=1}^{n}c_i(\mathbf{z}_t,\mathbf{a}_t)
\end{equation}

where $c_i(\mathbf{z}_t,\mathbf{a}_t)$ is representing the $c(\mathbf{z}_t,\mathbf{a}_t)$ for the $i$-th occurrence for state-action pair $(\mathbf{z}_t,\mathbf{a}_t)$ in distribution $\rho$.  In order to track the occurrence count $n$, we adopt a lemma proposed by \cite{yang2024exploration}:

\begin{lemma}[Unbiased Estimator]
    For a state-action distribution $\rho$, $f^*$ is the optimal predictor on this distribution defined in Eq. \ref{eqn:optimal-predictor}, the following statistic is an unbiased estimator of $1/n$ with consistency for this distribution:
    \begin{equation*}
        y(\mathbf{z}_t,\mathbf{a}_t) = \frac{[f^*(\mathbf{z}_t,\mathbf{a}_t)]^2-[\mu_{\bar\theta}(\mathbf{z}_t,\mathbf{a}_t)]^2}{B_2(\mathbf{z}_t,\mathbf{a}_t)-[\mu_{\bar\theta}(\mathbf{z}_t,\mathbf{a}_t)]^2}
    \end{equation*}
    where the second-order moment is:
    \begin{equation*}
        B_2(\mathbf{z}_t,\mathbf{a}_t)=\frac{1}{K}\sum_{k=0}^{K-1}[f_{\bar\theta_k}(\mathbf{z}_t,\mathbf{a}_t)]^2
    \end{equation*}
\label{lem:unbiased-estimator}
\end{lemma}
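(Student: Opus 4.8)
The plan is to regard the $n$ occurrence-outputs $c_1,\dots,c_n$ as independent and identically distributed copies of the random variable $c(\mathbf{z}_t,\mathbf{a}_t)$, all randomness arising from the uniform choice of index $k\in[0,K)$ while the $K$ target networks $f_{\bar\theta_k}$ stay fixed. Under this view the two moments in the statement are precisely the first two moments of $c$ with respect to the uniform law on $k$: we have $\mathbb{E}[c]=\tfrac{1}{K}\sum_{k=0}^{K-1}f_{\bar\theta_k}(\mathbf{z}_t,\mathbf{a}_t)=\mu_{\bar\theta}(\mathbf{z}_t,\mathbf{a}_t)$ and $\mathbb{E}[c^2]=\tfrac{1}{K}\sum_{k=0}^{K-1}[f_{\bar\theta_k}(\mathbf{z}_t,\mathbf{a}_t)]^2=B_2(\mathbf{z}_t,\mathbf{a}_t)$. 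Because $f^*$ of Eq.~\ref{eqn:optimal-predictor} is exactly the sample mean of these i.i.d.\ draws, the lemma collapses to a routine sample-mean moment identity, which is where I would start.

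For unbiasedness I would invoke the bias--variance decomposition $\mathbb{E}[(f^*)^2]=\mathrm{Var}(f^*)+(\mathbb{E}[f^*])^2$. Averaging $n$ i.i.d.\ terms gives $\mathbb{E}[f^*]=\mu_{\bar\theta}$ and $\mathrm{Var}(f^*)=\tfrac{1}{n}\mathrm{Var}(c)=\tfrac{1}{n}(B_2-\mu_{\bar\theta}^2)$, so that
\begin{equation*}
\mathbb{E}\!\left[(f^*)^2-\mu_{\bar\theta}^2\right]=\frac{1}{n}\left(B_2-\mu_{\bar\theta}^2\right).
\end{equation*}
The decisive observation is that the denominator $B_2-\mu_{\bar\theta}^2$ of $y$ equals $\mathrm{Var}(c)$ and is therefore a deterministic constant once the target networks are frozen; it factors out of the expectation. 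Dividing both sides by it yields $\mathbb{E}[y(\mathbf{z}_t,\mathbf{a}_t)]=1/n$. I would also note for well-posedness that this argument needs $\mathrm{Var}(c)=B_2-\mu_{\bar\theta}^2>0$, i.e.\ non-degenerate target networks, which is generic for random initializations.

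Consistency is the part I expect to require the most care. Here I would control the dispersion of $y$ about $1/n$ through $\mathrm{Var}(y)=\mathrm{Var}((f^*)^2)/(B_2-\mu_{\bar\theta}^2)^2$, expanding the fourth moment of the sample mean $f^*$ and arguing that the fluctuations can be suppressed in the relevant limit---for instance as the predictor is trained to its optimum so that the realized predictor converges to $f^*$, or by pooling independent measurements---to obtain convergence in probability to $1/n$. The main obstacle is thus not the algebra but the probabilistic bookkeeping: I must justify rigorously that, in the online training regime, the per-occurrence targets $c_i$ are genuinely independent with the stated moments and that the numerator's randomness is independent of the frozen denominator, and I must state precisely the limiting regime in which consistency is asserted, since $n$ is itself the estimand rather than a growing sample size.
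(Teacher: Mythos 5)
Your unbiasedness argument is correct and is essentially the paper's own: the paper expands $\mathbb{E}[f_*^2]=\frac{1}{n^2}\mathbb{E}\big[\sum_i c_i^2+\sum_{i\neq j}c_ic_j\big]=\frac{B_2}{n}+\frac{n-1}{n}\mu_{\bar\theta}^2$ directly, which is exactly your bias--variance identity $\mathbb{E}[(f^*)^2]=\mathrm{Var}(f^*)+\mu_{\bar\theta}^2$ with $\mathrm{Var}(f^*)=\frac{1}{n}(B_2-\mu_{\bar\theta}^2)$ written out term by term. Both arguments rest on the same modeling assumptions you flag: the $c_i$ are i.i.d.\ outputs of a uniformly sampled target index, and the denominator $B_2-\mu_{\bar\theta}^2=\mathrm{Var}(c)$ is a deterministic constant once the targets are frozen (your non-degeneracy remark is a sensible well-posedness point the paper leaves implicit).

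The genuine gap is in the consistency half, which you correctly single out as the delicate part but then leave as a plan rather than a proof. The paper's notion of consistency is concrete and entirely algebraic: it computes the fourth moment of the sample mean via the multinomial expansion, $\mathbb{E}[f_*^4]=\frac{1}{n^4}\big(nB_4+4A_n^2\mu_{\bar\theta}B_3+3A_n^2B_2^2+6A_n^3\mu_{\bar\theta}^2B_2+A_n^4\mu_{\bar\theta}^4\big)$ with $A_n^i=n!/(n-i)!$, substitutes into $\mathrm{Var}[y]=\big(\mathbb{E}[f_*^4]-\mathbb{E}^2[f_*^2]\big)/(B_2-\mu_{\bar\theta}^2)^2$, and finds that the numerator polynomial in $n$ has degree two against an explicit factor of $n^3$ in the denominator, so $\mathrm{Var}[y]=O(1/n)\to 0$ as $n\to\infty$. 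Your musings about what ``the relevant limit'' should be --- training the predictor to optimality, or pooling independent measurements --- point at a different (and unaddressed) question; the lemma's consistency claim is simply that $y$ concentrates on $1/n$ as the occurrence count $n$ grows, and establishing it requires actually carrying out the fourth-moment bookkeeping you deferred, or at minimum a bound showing $\mathrm{Var}\big((f^*)^2\big)=O(1/n)$. As written, your proposal asserts consistency without proving it.
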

\begin{proof}
    See Appendix \ref{sec:additional-proof} or prior work \citep{yang2024exploration}.
\end{proof}

In this way, we are able to estimate the data distribution with higher consistency as the training proceeds.  Following \cite{yang2024exploration}, we construct the following estimator for $\sqrt{1/n}$ as an additional bonus correction term:
\begin{equation}
\label{eqn:correction-term}
    \epsilon(\mathbf{z}_t,\mathbf{a}_t, f) = \sqrt{\frac{[f(\mathbf{z}_t,\mathbf{a}_t)]^2-[\mu_{\bar\theta}(\mathbf{z}_t,\mathbf{a}_t)]^2}{B_2(\mathbf{z}_t,\mathbf{a}_t)-[\mu_{\bar\theta}(\mathbf{z}_t,\mathbf{a}_t)]^2}}
\end{equation}
This bonus correction is incorporated into the reward model construction discussed in Section \ref{sec:coupled-RED}.

\subsection{Coupled Distributional Random Expert Distillation}
\label{sec:coupled-RED}
We construct a reward model with two predictor networks that share the same random target ensemble on the latent space of a world model.  The distributional random target ensemble consists of $K$ random networks $\{f_{\bar\theta_k}\}_{0:K}$ with fixed parameters. Regarding the predictors, one of them is the expert predictor $f_\phi$ while the other is the behavioral predictor $f_\psi$. A predictor $f$ is defined by $f:\mathcal{Z}\times\mathcal{A}\rightarrow\mathbb{R}^{p}$, while $p$ is the dimension of the low-dimensional embedding space for L2 norm distance computation. 
Following \cite{yang2024exploration}, we ask these two predictors to learn the random targets sampled. This is different to RED which learn $K$ predictors for $K$ targets. Given an expert buffer $\mathcal{B}_E$ and a behavioral buffer $\mathcal{B}_\pi$, we aim to optimize through the following objective:
\begin{equation}
\label{eqn:reward-obj}
\begin{split}
        \mathcal{L}^r(\phi,\psi) = \sum_{t=0}^{H}\lambda^t~&\mathbb{E}_{k\sim\text{Uniform}(0,K)}\Bigg[\mathbb{E}_{(\mathbf{s}_t,\mathbf{a}_t)\sim\mathcal{B}_E}\Big[\Vert f_\phi(\mathbf{z}_t,\mathbf{a}_t)-f_{\bar\theta_k}(\mathbf{z}_t,\mathbf{a}_t)\Vert_2^2\Big]\\
        &+\mathbb{E}_{(\mathbf{s}_t,\mathbf{a}_t)\sim\mathcal{B}_\pi}\Big[\Vert f_\psi(\mathbf{z}_t,\mathbf{a}_t)-f_{\bar\theta_k}(\mathbf{z}_t,\mathbf{a}_t)\Vert_2^2\Big]\Bigg]
\end{split}
\end{equation}
We sample short trajectories with horizon $H$ from the replay buffers and sum up the loss for every step with a discounting factor $\lambda$. Note that this factor is different from the environment discount factor $\gamma$. We update every time with one target network $f_{\bar\theta_k}$, where index $k$ is sampled from a uniform distribution over integers ranging $[0,K)$. $\mathbf{z}_t$ is the latent representation of $\mathbf{s}_t$ with an encoder mapping $\mathbf{z}_t=h(\mathbf{s}_t)$. In this way, we can obtain the estimation for expert distribution $\rho_E$ and behavioral distribution $\rho_\pi$. Furthermore, it enables us to construct a reward model based on the distribution estimations. Incorporating the bias correction term introduced in Eq. \ref{eqn:correction-term}, we are able to compute the reward via:
\begin{equation}
\label{eqn:reward-formulation-modified}
R(\mathbf{z}_t,\mathbf{a}_t)=\zeta~g(-\sigma~b(\mathbf{z}_t,\mathbf{a}_t, f_\phi))-(1-\zeta)~g(-\sigma~b(\mathbf{z}_t,\mathbf{a}_t, f_\psi))
\end{equation}
where
\begin{equation}
\label{eqn:bonus}
    b(\mathbf{z}_t,\mathbf{a}_t, f) = \alpha~\Vert f(\mathbf{z}_t,\mathbf{a}_t)-\mu_{\bar\theta}(\mathbf{z}_t,\mathbf{a}_t)\Vert_2^2 + (1 - \alpha)~\epsilon(\mathbf{z}_t,\mathbf{a}_t, f)
\end{equation}
\begin{equation}
    \mu_{\bar\theta}(\mathbf{z}_t,\mathbf{a}_t)=\frac{1}{K}\sum_{k=0}^{K-1}f_{\bar\theta_k}(\mathbf{z}_t,\mathbf{a}_t)
\end{equation}
The first term in Eq. \ref{eqn:reward-formulation-modified} measures the distance between the current and expert distributions, while the second term encourages exploration by penalizing exploitation. A scaling factor $ \zeta $ balances these terms, with the second term dominating during early training when the policy is sub-optimal, promoting exploration. As the policy approaches optimality, the first term takes over, stabilizing the policy near the expert distribution. Typically, $ \zeta $ is close to 1, allowing the first term to dominate after initial exploration.

The coefficient $ \sigma $ controls the decay rate of the reward function, which is based on the expert distribution for the first term and the behavioral distribution for the second. To ensure stability, the reward is computed using the mean output of $ K $ random target networks. The function $ g(x) $ is monotonically increasing, and both $ g(x) = \exp(x) $ and $ g(x) = x $ work, with slight differences in behavior, as discussed in Appendix \ref{sec:ablation}.

The scalar coefficient $ \alpha $ in Eq. \ref{eqn:bonus} balances the contributions of the first term (the $ L_2 $-norm) and the second term (an estimator for $ \sqrt{1/n} $). Following \cite{yang2024exploration}, we let the first term dominate initially, switching to the second term as training progresses. This can be achieved with a fixed $ \alpha $, rather than a dynamic coefficient. This modification enables consistent online estimation of the state-action distribution, directly supporting reward modeling for online imitation learning.

\subsection{Integrating into World Models for Imitation Learning}
World models learn the policy and underlying environment dynamics by encoding the observations into a latent space and learning the transition model in the latent space. Decoder-free world models such as TD-MPC series \citep{hansen2022temporal,hansen2023td} has proved to be a powerful tool for complex reinforcement learning tasks. We leverage a decoder-free world model containing the following components:
\begin{align}
\text{Encoder:} &\quad \mathbf{z}_t = h(\mathbf{s}_t) \label{eq:encoder} \\
\text{Latent dynamics:} &\quad \mathbf{z}'_t = d(\mathbf{z}_t, \mathbf{a}_t) \label{eq:latent_dynamics} \\
\text{Value function:} &\quad \hat{q}_t = Q(\mathbf{z}_t, \mathbf{a}_t) \label{eq:value_function} \\
\text{Policy prior:} &\quad \mathbf{\hat a}_t = \pi(\mathbf{z}_t) \label{eq:policy_prior}\\
\text{CDRED model:} &\quad \hat r_t = R(\mathbf{z}_t,\mathbf{a}_t) \label{eq:reward-model}
\end{align}
The reward model, i.e., the CDRED model, consists of two predictors and $K$ target networks, estimating the expert and behavioral distributions for reward approximation. The encoder $h:\mathcal{S}\rightarrow\mathcal{Z}$ maps the observation (state-based or vision based) to latent representation. The latent dynamics model $d:\mathcal{Z}\times\mathcal{A}\rightarrow\mathcal{Z}$ learns the transition dynamics over the latent representations, implicitly modeling the environment dynamics. The value function learns to estimate the future return by training on temporal difference objective with the assist of the estimated rewards from the CDRED model. The policy prior learns a stochastic policy which guides the planning process of the world model. The training procedure is outlined in Algorithm \ref{alg:training}, while the planning process is detailed in Algorithm \ref{alg:inference}.
\paragraph{Model Training}
The learnable parameters of the world model are denoted as three parts. While $\phi$ and $\psi$ denote the parameterization of expert predictor and behavioral predictor in the CDRED reward model, the rest of the parameters related to the encoder, latent dynamics, value model and policy prior are represented as $\xi$. Note that the parameters of the target networks $\bar\theta_k$ are not learnable. We train the encoder, dynamics model, value model, and reward model jointly with the following objective:
\begin{equation}
\label{eqn:model-loss}
\mathcal{L}(\phi,\psi,\xi)=\sum_{t=0}^H ~\mathbb{E}_{(\mathbf{s}_t,\mathbf{a}_t,\mathbf{s}'_t)\sim\mathcal{B}_E\cup\mathcal{B}_\pi}\underbrace{\Bigg[\lambda^t
\Big(\Vert\mathbf{z}'_t-\text{sg}(h(\mathbf{s}'_t))\Vert^2_2 + \text{CE}(\hat q_t, q_t)\Big)\Bigg]}_{\text{Consistency and TD Loss}} 
+ \underbrace{\mathcal{L}^r(\phi,\psi)}_{\text{CDRED Loss}}
\end{equation}
The first term contains consistency loss and temporal difference loss to ensure the prediction consistency of the dynamics model and the accuracy for value function estimation. the temporal difference target is computed by $q_t=R(\mathbf{z}_t,\mathbf{a}_t)+\gamma Q(\mathbf{z}'_t,\pi(\mathbf{z}'_t))$ where $R(\mathbf{z}_t,\mathbf{a}_t)$ is the output of the reward model. We convert the regression TD objective into a classification problem for stabler value estimation, which is also used by the TD-MPC series and mentioned by \cite{farebrother2024stop}. $\text{CE}(\hat q_t,q_t)$ is the cross entropy loss between target Q value and current predicted value. The second term is the reward loss, which is shown in Eq.\ref{eqn:reward-obj}. Similar to the computation of reward loss, we also sum up the consistency and TD loss with factor $\lambda$ over a horizon $H$. 
\paragraph{Policy Prior Learning}
Regarding the policy prior update, we adopt maximum entropy objective \citep{haarnoja2018soft} to train a stochastic policy:
\begin{equation}
\label{eqn:policy-learning}
\mathcal{L}^{\pi}(\xi)=\sum_{t=0}^H \lambda^t\Bigg[\mathbb{E}_{(\mathbf{s}_t,\mathbf{a}_t) \sim \mathcal{B}_E\cup\mathcal{B}_\pi}\Big[-Q(\mathbf{z}_t,\pi(\mathbf{z}_t))+\beta\log(\pi(\cdot|\mathbf{z}_t))\Big]\Bigg]
\end{equation}
We use short trajectories with horizon $H$ sampled from both expert and behavioral buffers for policy updates. We sum up the policy loss over the horizon with the same discount factor $\lambda$. $\beta$ is a fixed scalar coefficient to balance the entropy term and the Q value.
\paragraph{Planning} 
Following TD-MPC series \citep{hansen2022temporal,hansen2023td}, we also leverage model predictive path integral (MPPI) \citep{williams2015model} for planning. We optimize using the sampled action sequences $(\mathbf{a}_t,\mathbf{a}_{t+1},...,\mathbf{a}_{t+H})$ in a derivative-free style, maximizing the estimated return for the latent trajectories that have been rolled out using our dynamics model. Mathematically, our objective can be describe as a return maximizing process \citep{hansen2023td}:
\begin{equation}
    \mu^*,\sigma^*=\underset{(\mu,\sigma)}{\text{argmax}}\underset{(\mathbf{a}_t,\mathbf{a}_{t+1},...,\mathbf{a}_{t+H})\sim\mathcal{N}(\mu,\sigma^2)}{\mathbb{E}}\Bigg[\gamma^H Q(\mathbf{z}_{t+H},\mathbf{a}_{t+H})+\sum_{h=t}^{H-1}\gamma^h R(\mathbf{z}_h,\mathbf{a}_h)\Bigg]
\end{equation}
After planning, the agent interacts with the environment using the first action $\mathbf{a}_t\sim\mathcal{N}(\mu^*,(\sigma^*)^2)$ to obtain new observations. New trajectories are stored in behavioral buffer $\mathcal{B}_\pi$ for following training.

\section{Experiments}
\label{sec:experiments}
We conduct experiments across a diverse range of tasks, including locomotion, manipulation, and tasks with both visual and state-based observations. We evaluate our approach using the DMControl \citep{tassa2018deepmind}, Meta-World \citep{yu2020meta} and ManiSkill2 \citep{gu2023maniskill2} environments. As for the baselines, we compare our approach with IQ-MPC \citep{li2024rewardfreeworldmodelsonline}, which integrates a world model architecture, as well as with model-free approaches, specifically IQ-Learn+SAC \citep{garg2021iq} (referred to as IQL+SAC in the plots), CFIL+SAC \citep{freund2023coupled}, HyPE \citep{ren2024hybrid} (In Appendix \ref{sec:hype}) and SAIL \citep{SAIL} (In Appendix \ref{sec:sail}). Additionally, we also incorporate behavioral cloning (BC) as a baseline method in our evaluation. For all experiments, we sample expert trajectories from a trained TD-MPC2 \citep{hansen2023td}. Additionally, we conduct ablation studies on the number of expert trajectories and the choice of function $g$, as detailed in Appendix \ref{sec:ablation}. We further evaluate the robustness of our algorithm in noisy environment dynamics (Appendix \ref{sec:noisy-dynamics}), examine the benefits of constructing the reward model in the latent space (Appendix \ref{sec:latent-space}), and highlight its advantages over existing adversarial training methods (Appendix \ref{sec:drawbacks-iqmpc}). We also provide the quantitative results measuring the training stability in Appendix \ref{sec:training-stability}. All of the experiments are conducted on a single RTX3090 graphic card.

\begin{figure}[h]
    \centering
    \begin{minipage}{0.3\textwidth}
        \centering
        \includegraphics[width=\textwidth]{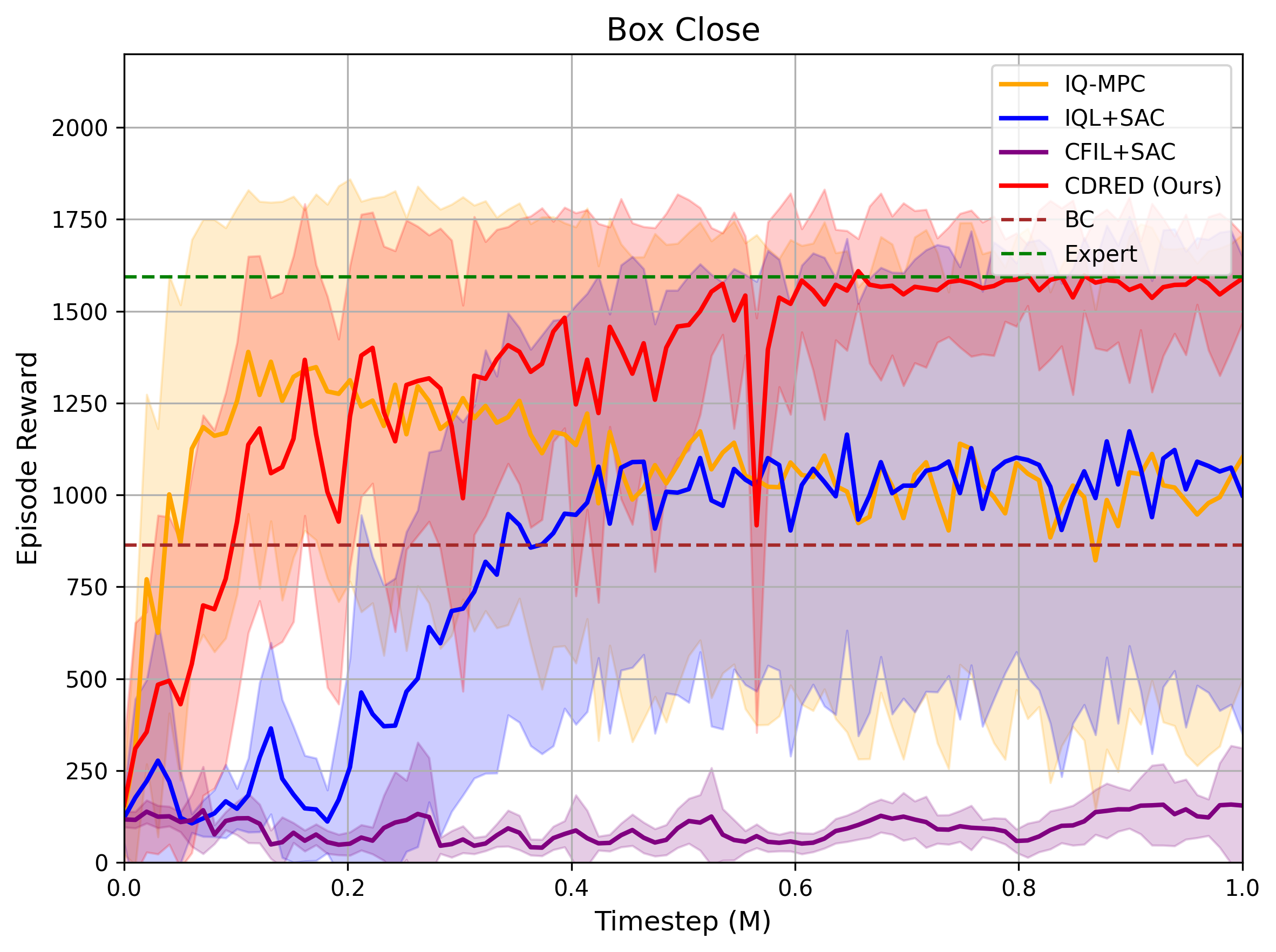}
    \end{minipage}
    \begin{minipage}{0.3\textwidth}
        \centering
        \includegraphics[width=\textwidth]{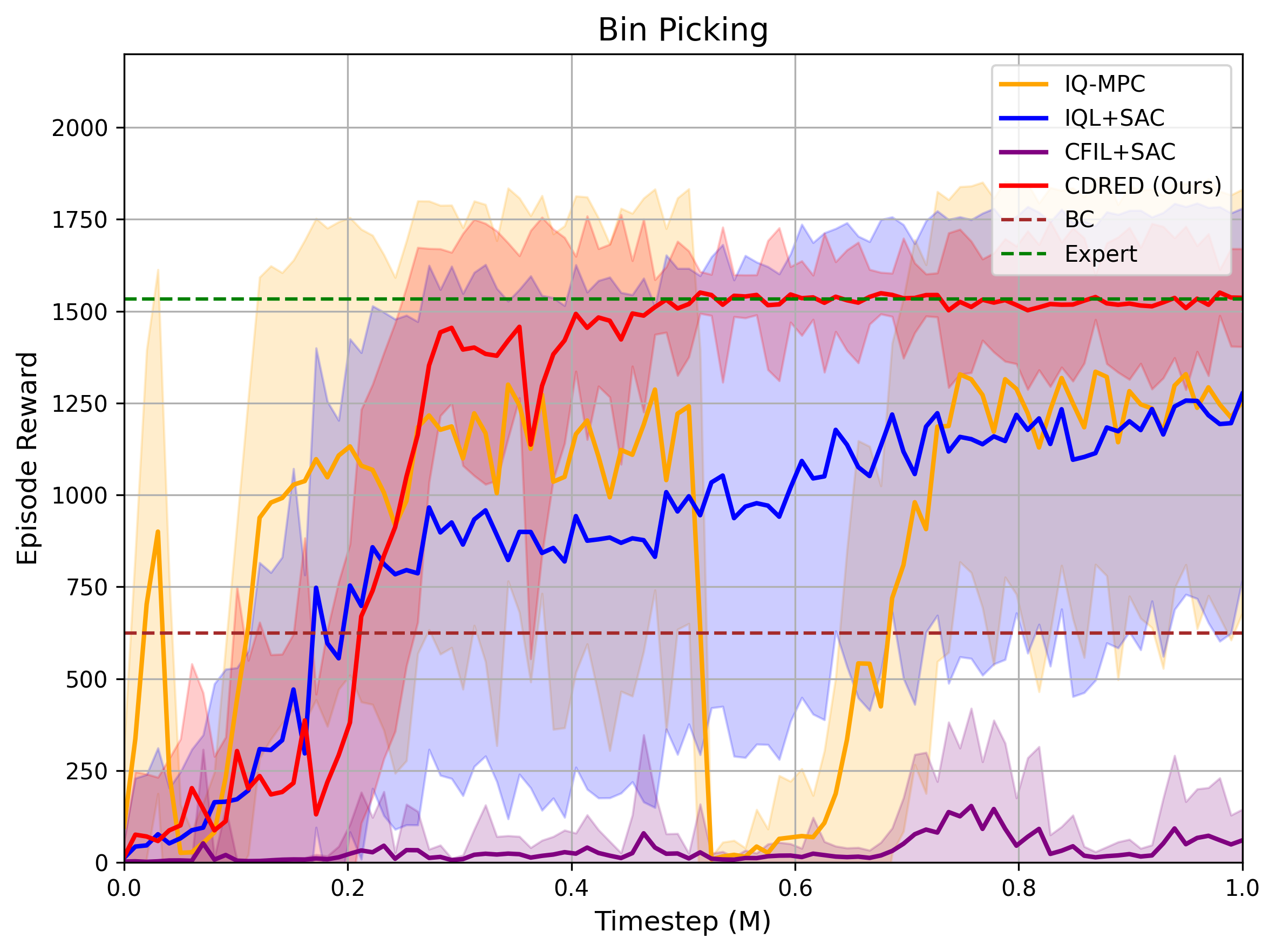}
    \end{minipage}
    \begin{minipage}{0.3\textwidth}
        \centering
        \includegraphics[width=\textwidth]{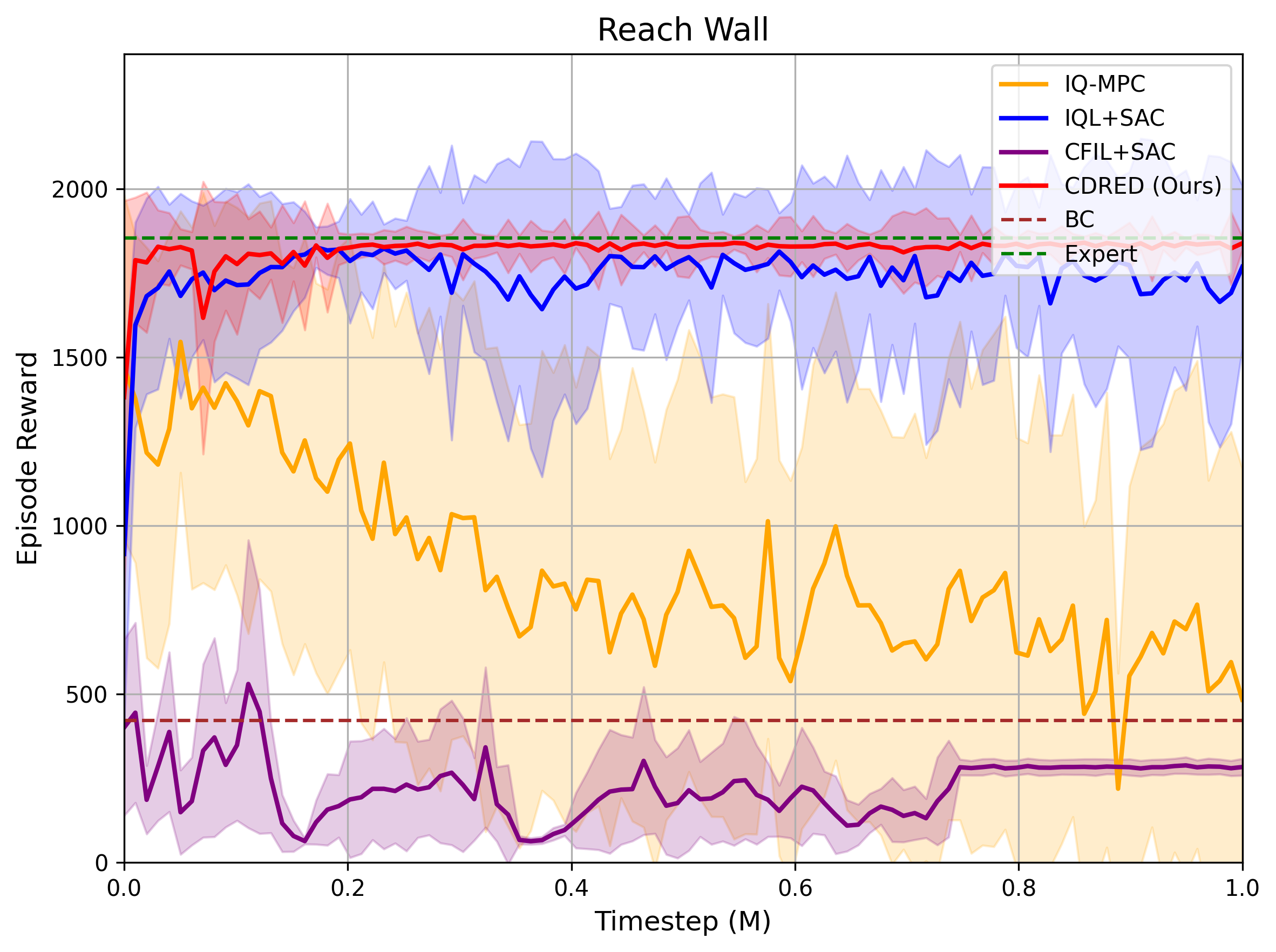}
    \end{minipage}
    
    \vspace{10pt} 
    
    \begin{minipage}{0.3\textwidth}
        \centering
        \includegraphics[width=\textwidth]{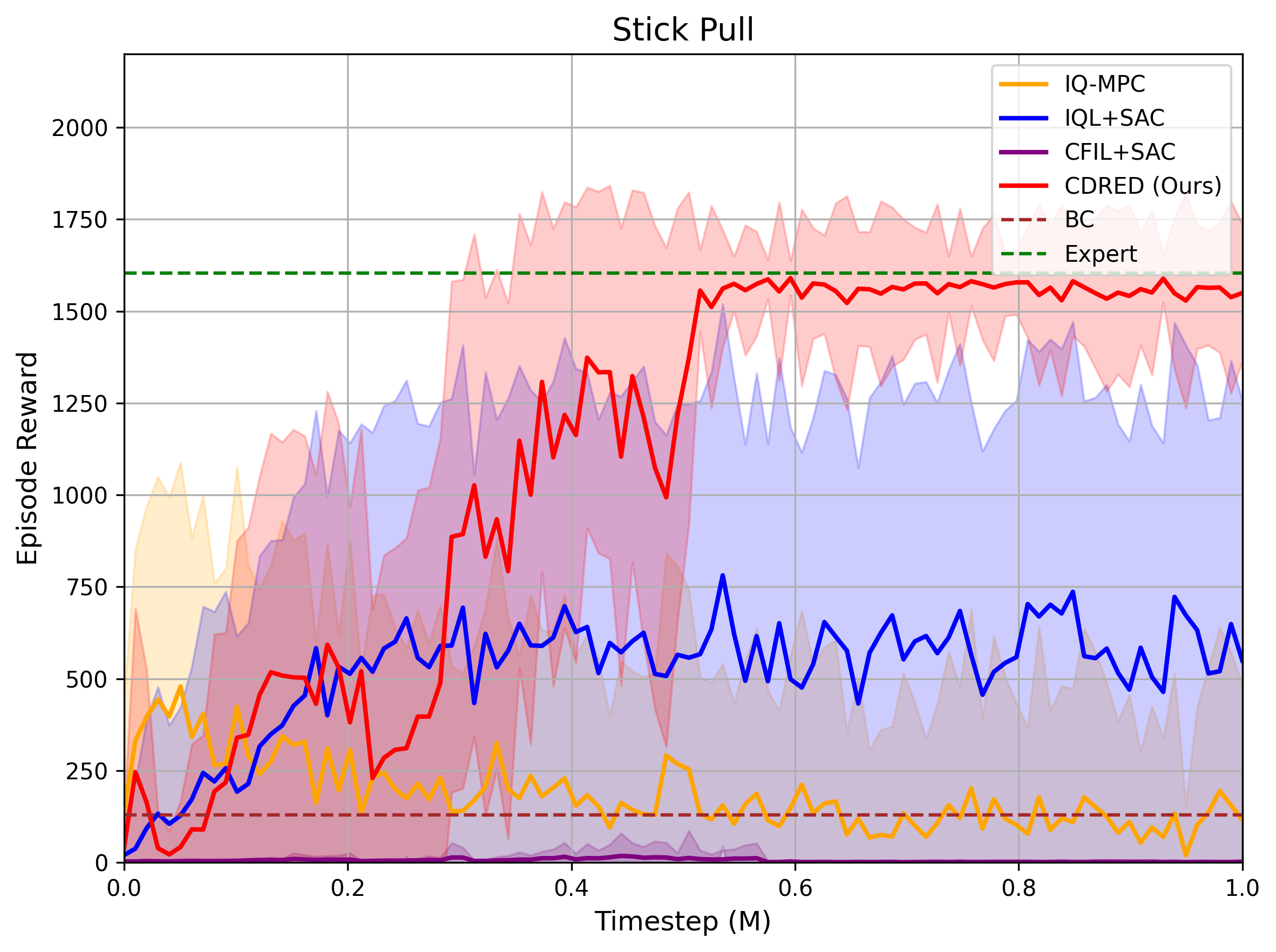}
    \end{minipage}
    \begin{minipage}{0.3\textwidth}
        \centering
        \includegraphics[width=\textwidth]{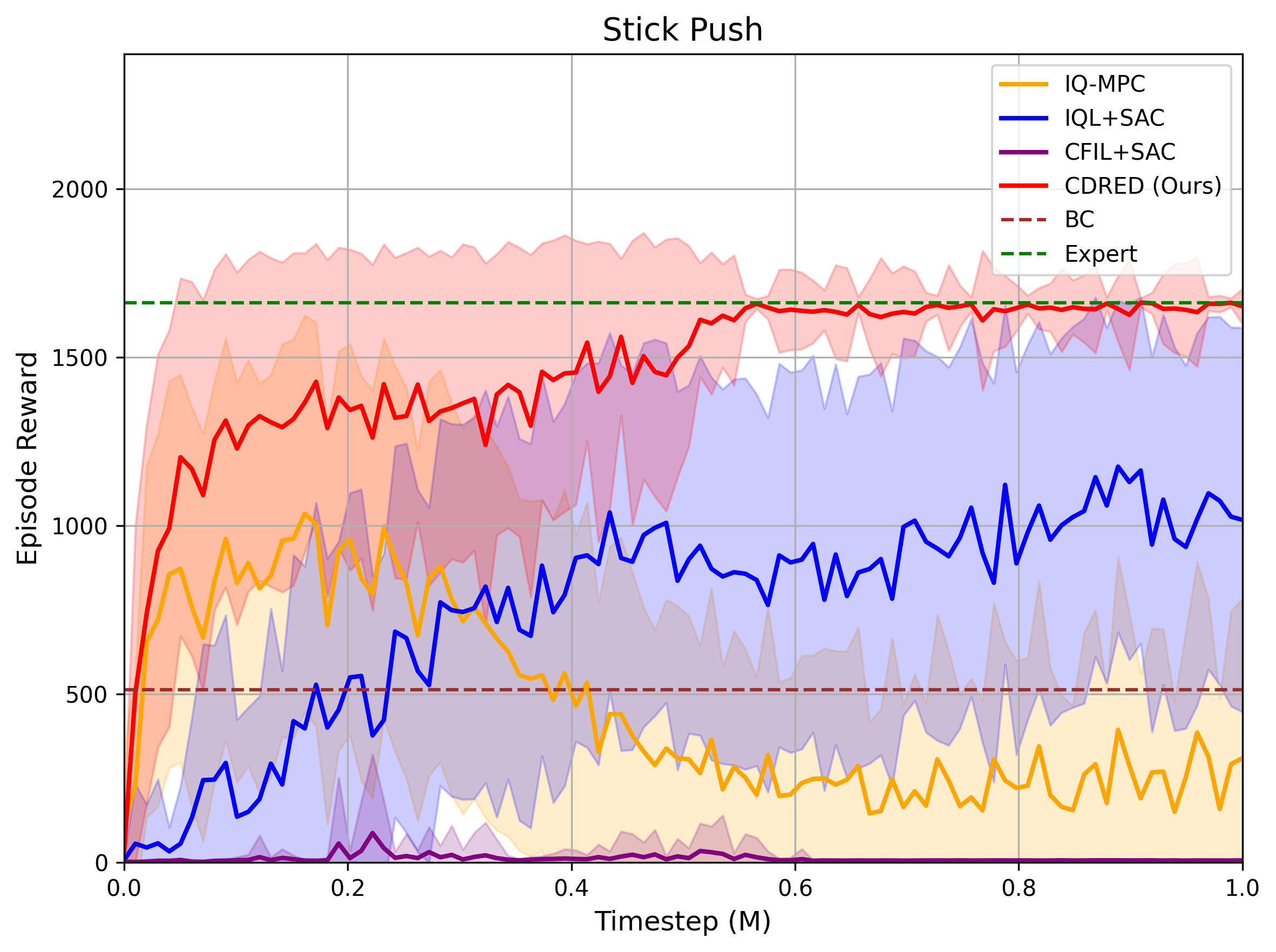}
    \end{minipage}
    \begin{minipage}{0.3\textwidth}
        \centering
        \includegraphics[width=\textwidth]{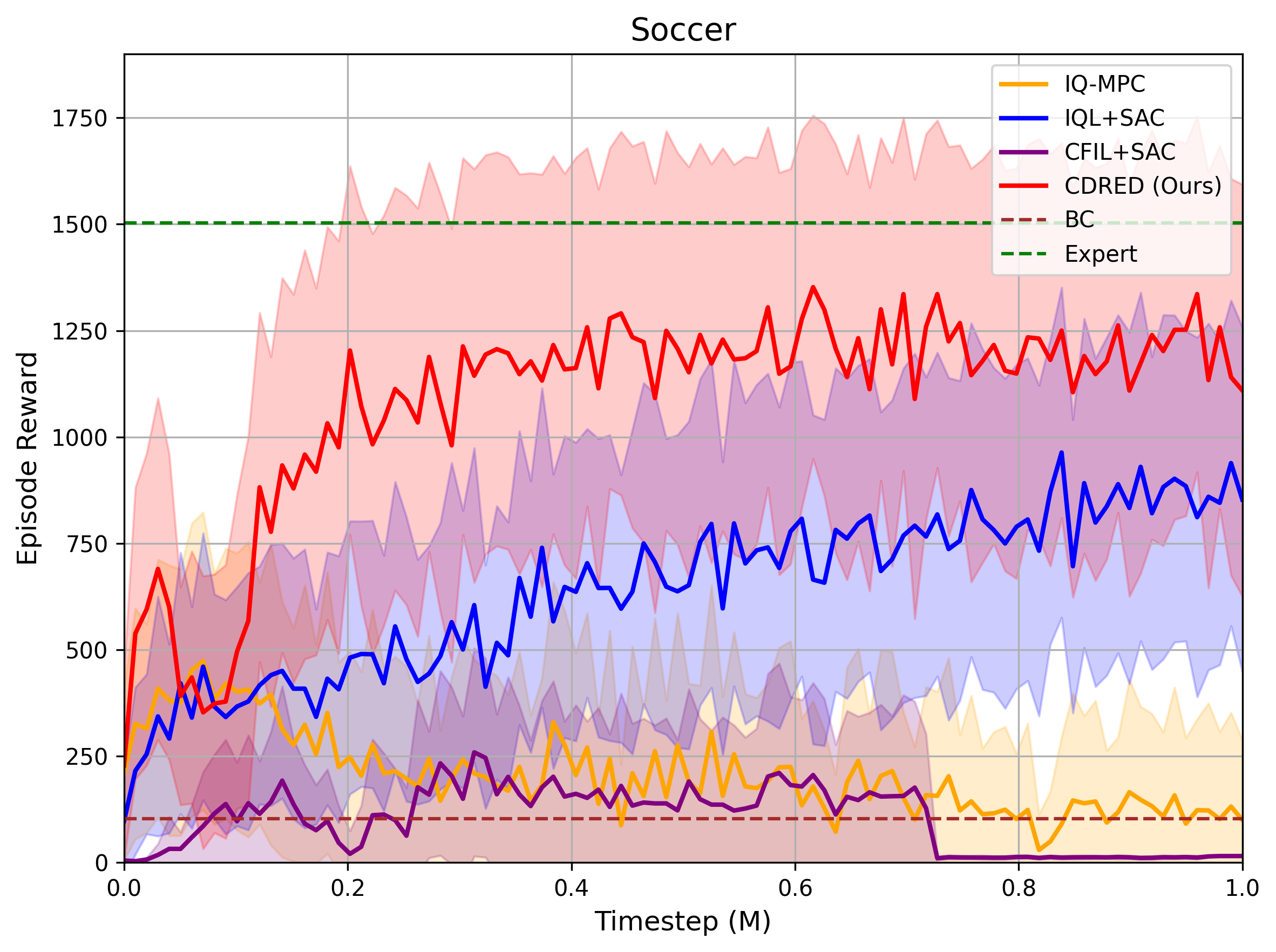}
    \end{minipage}
    \caption{\textbf{Meta-World Results} We evaluate our CDRED method ({\color{red}red lines}) on 6 tasks in Meta-World environments. We show stabler performance on these tasks, outperforming the baselines. IQ-MPC ({\color{orange}orange lines}) suffers from overly powerful discriminator problem mentioned in Section \ref{sec:drawbacks-iqmpc}. We conduct the experiments on 3 random seeds.}
    \label{fig:manipulation-results}
\end{figure}

\subsection{Meta-World Experiments}
We conduct experiments on 6 tasks in Meta-World environments. We use 100 expert trajectories for each task, ensuring that the expert data remains consistent across all algorithms for fair comparison within each task. IQ-MPC suffers from overly powerful discriminators in these tasks, even with gradient penalty applied, due to the adversarial training methodology. CFIL+SAC \citep{freund2023coupled} encounters instability in the training process due to the challenges inherent in training flow models. We show stable and expert-level performance, outperforming these baselines in these tasks. We show the episode reward results in Figure \ref{fig:manipulation-results} and success rate results in Table \ref{tab:manipulation-success-meta-world}.

\begin{table}[h]
    \centering
    \begin{tabular}{c|ccccc}
        \toprule
        \textbf{Method} & \textbf{BC} & \textbf{IQL+SAC} & \textbf{CFIL+SAC} & \textbf{IQ-MPC} & \textbf{CDRED(Ours)}\\
        \midrule
        Box Close & 0.58$\pm$0.12 & 0.61$\pm$0.09 & 0.00$\pm$0.00 & 0.53$\pm$0.18 & \textbf{0.96$\pm$0.03}\\
        Bin Picking & 0.43$\pm$0.18 & 0.75 $\pm$ 0.11 & 0.01$\pm$0.01 & 0.79$\pm$0.05 & \textbf{0.99$\pm$0.01}\\
        Reach Wall & 0.10$\pm$0.08 & 0.90$\pm$0.04 & 0.01$\pm$0.01 & 0.31$\pm$0.14& \textbf{0.98$\pm$0.01}\\
        Stick Pull & 0.02$\pm$0.02 & 0.34$\pm$0.11 & 0.00$\pm$0.00 & 0.13$\pm$0.08 & \textbf{0.92$\pm$0.05}\\
        Stick Push & 0.42$\pm$0.14 & 0.76$\pm$0.14 & 0.00$\pm$0.00 & 0.23$\pm$0.10 & \textbf{0.94$\pm$0.03}\\
        Soccer & 0.04$\pm$0.03 & 0.73$\pm$0.09 & 0.01$\pm$0.01 & 0.12$\pm$0.07 & \textbf{0.81$\pm$0.05}\\
        \bottomrule
    \end{tabular}
    \vspace{+5pt}
    \caption{\textbf{Manipulation Success Rate Results in Meta-World} We show the success rate comparison on 6 tasks in Meta-World. Our CDRED model demonstrates outperforming results compared to existing methods. We compute the success rates over 100 episodes. We evaluate our model and other baselines on 3 random seeds.}
    \label{tab:manipulation-success-meta-world}
\end{table}

\subsection{DMControl Experiments}
We conduct experiments on 6 tasks in DMControl \citep{tassa2018deepmind} environments. For low-dimensional tasks, we utilize 100 expert trajectories, while for high-dimensional tasks, we use 500 expert trajectories. Details on environment dimensionality can be found in Appendix \ref{sec:env-spec}. Our CDRED model performs comparably to IQ-MPC on the Hopper Hop, Walker Run, and Humanoid Walk tasks. However, in Cheetah Run and Dog Stand, IQ-MPC experiences long-term instability, causing the agent to fail after extensive online training. On the Reacher Hard task, IQ-MPC struggles with an overly powerful discriminator, which prevents it from learning an expert-level policy. The model-free methods in baseline algorithms fail to achieve stable, expert-level performance on these tasks. The episode reward results are shown in Figure \ref{fig:locomotion-results}.

\begin{figure}[h]
    \centering
    \begin{minipage}{0.3\textwidth}
        \centering
        \includegraphics[width=\textwidth]{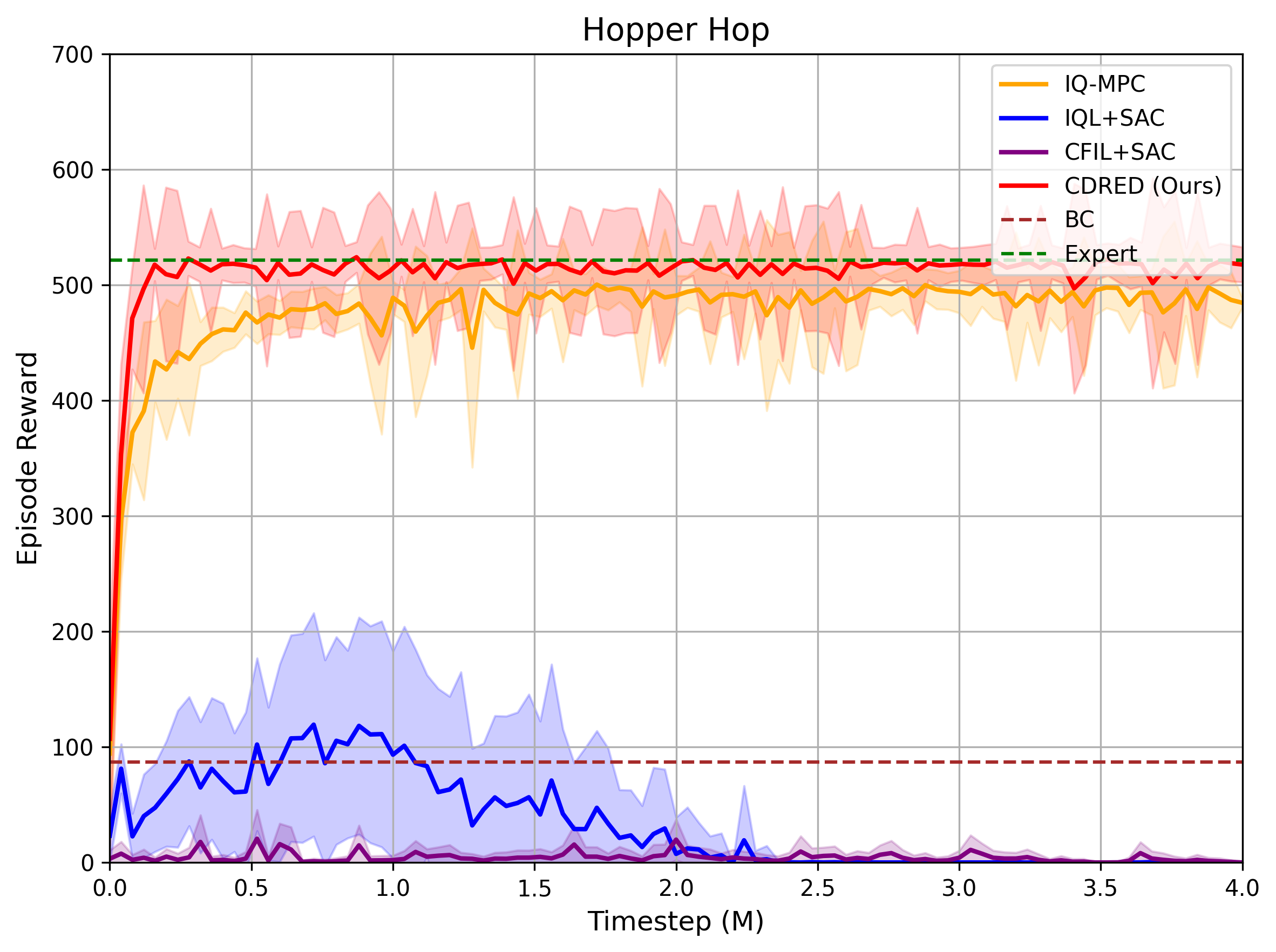}
    \end{minipage}
    \begin{minipage}{0.3\textwidth}
        \centering
        \includegraphics[width=\textwidth]{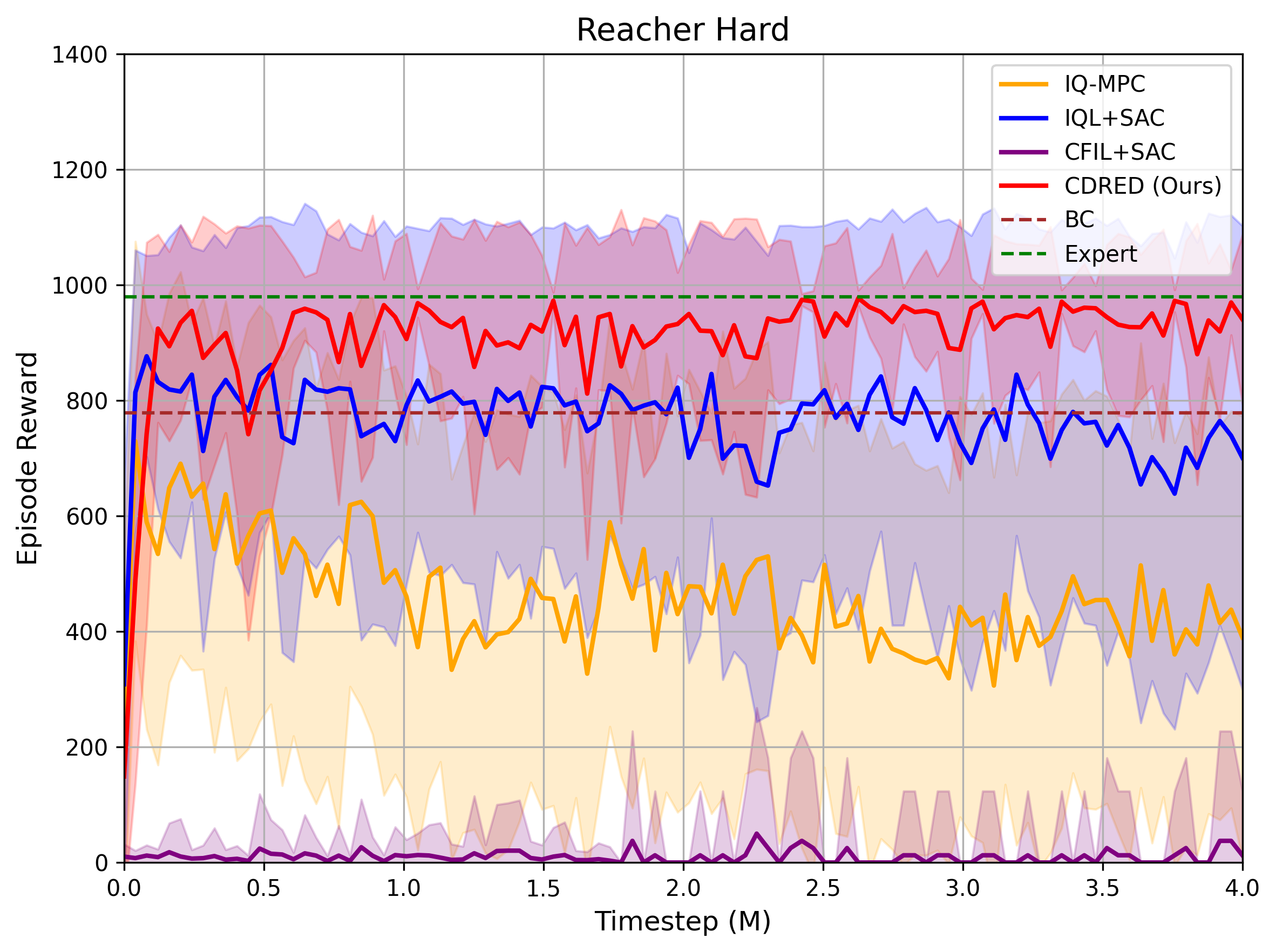}
    \end{minipage}
    \begin{minipage}{0.3\textwidth}
        \centering
        \includegraphics[width=\textwidth]{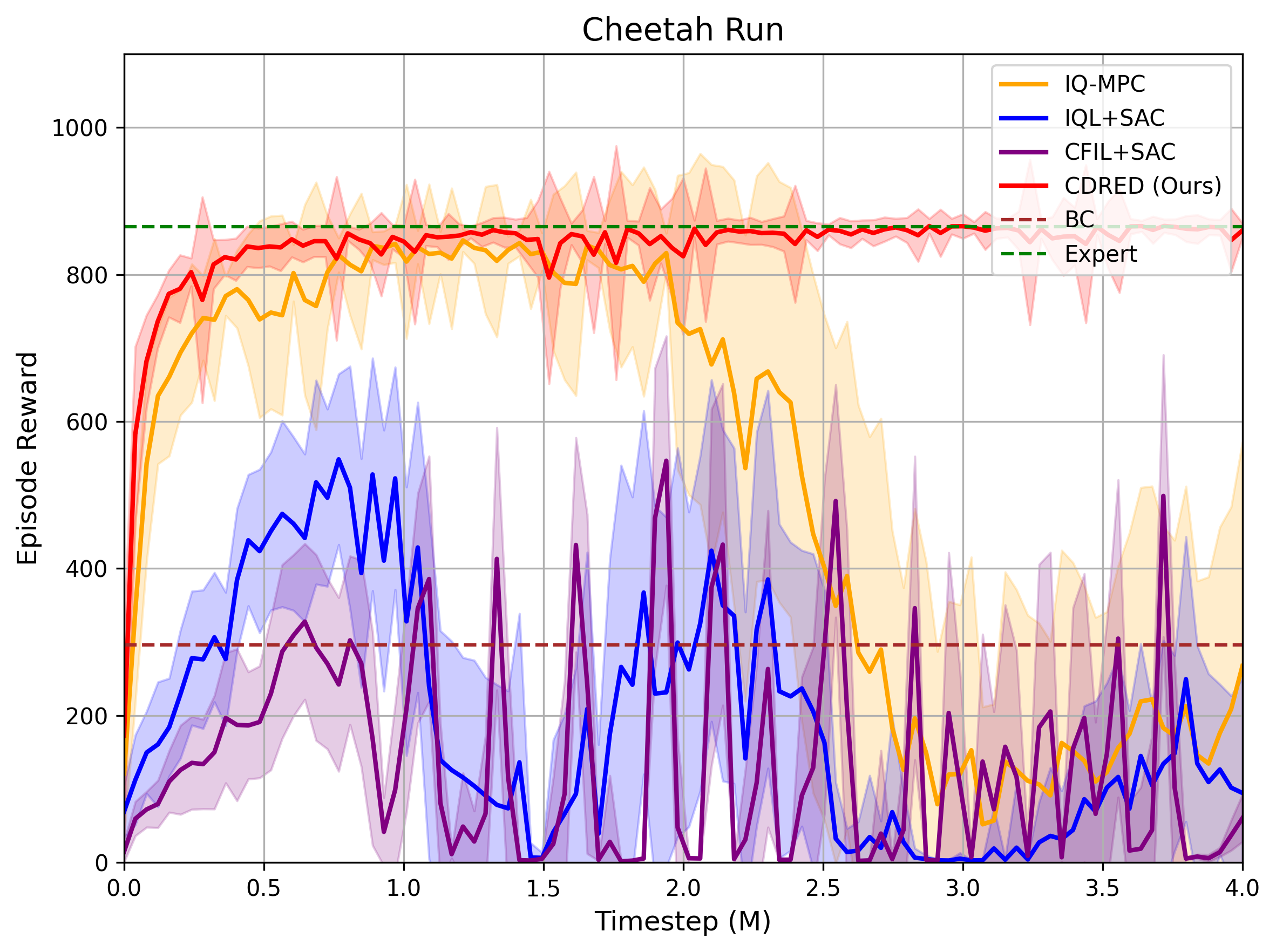}
    \end{minipage}
    
    \vspace{10pt} 
    
    \begin{minipage}{0.3\textwidth}
        \centering
        \includegraphics[width=\textwidth]{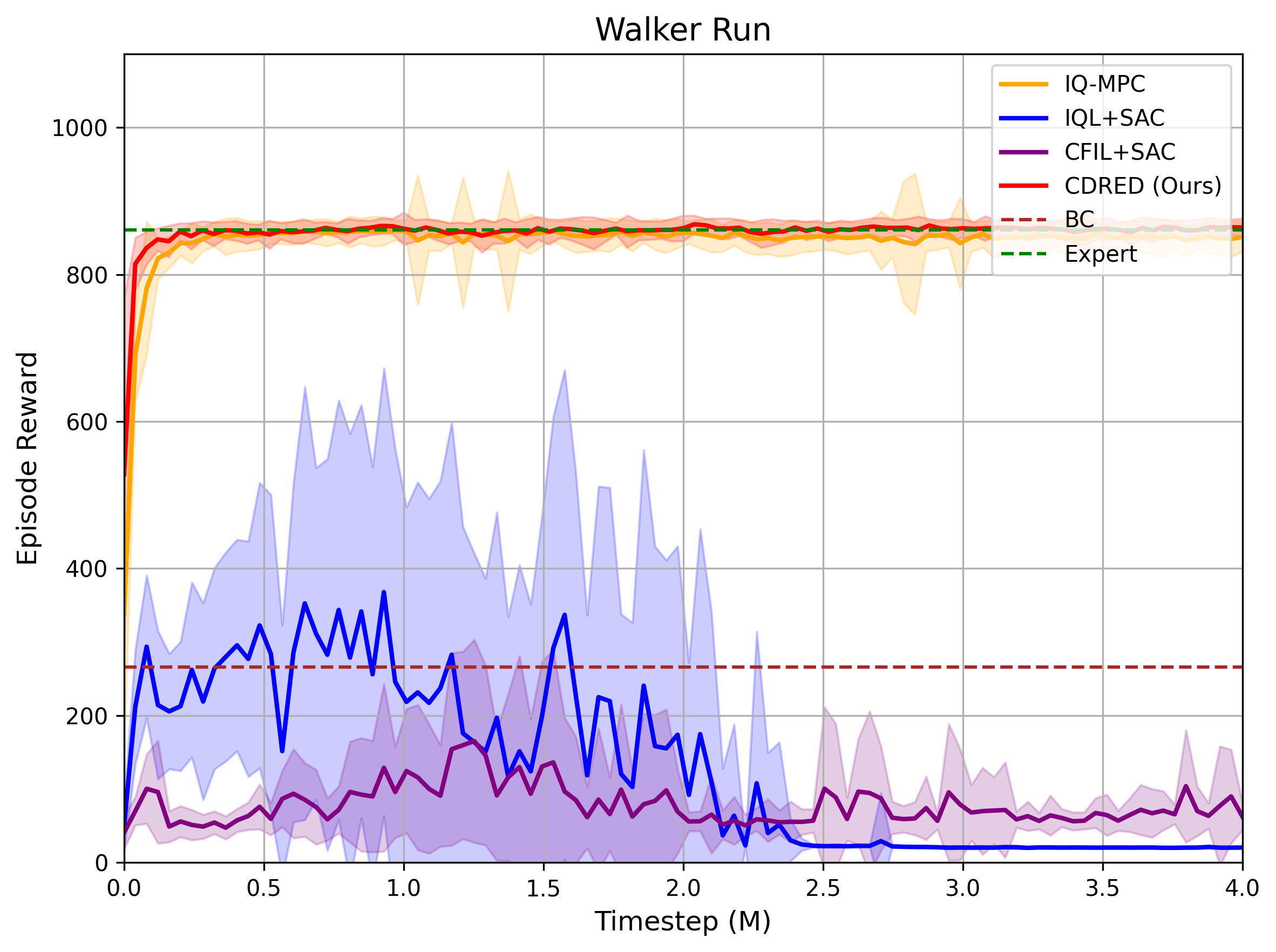}
    \end{minipage}
    \begin{minipage}{0.3\textwidth}
        \centering
        \includegraphics[width=\textwidth]{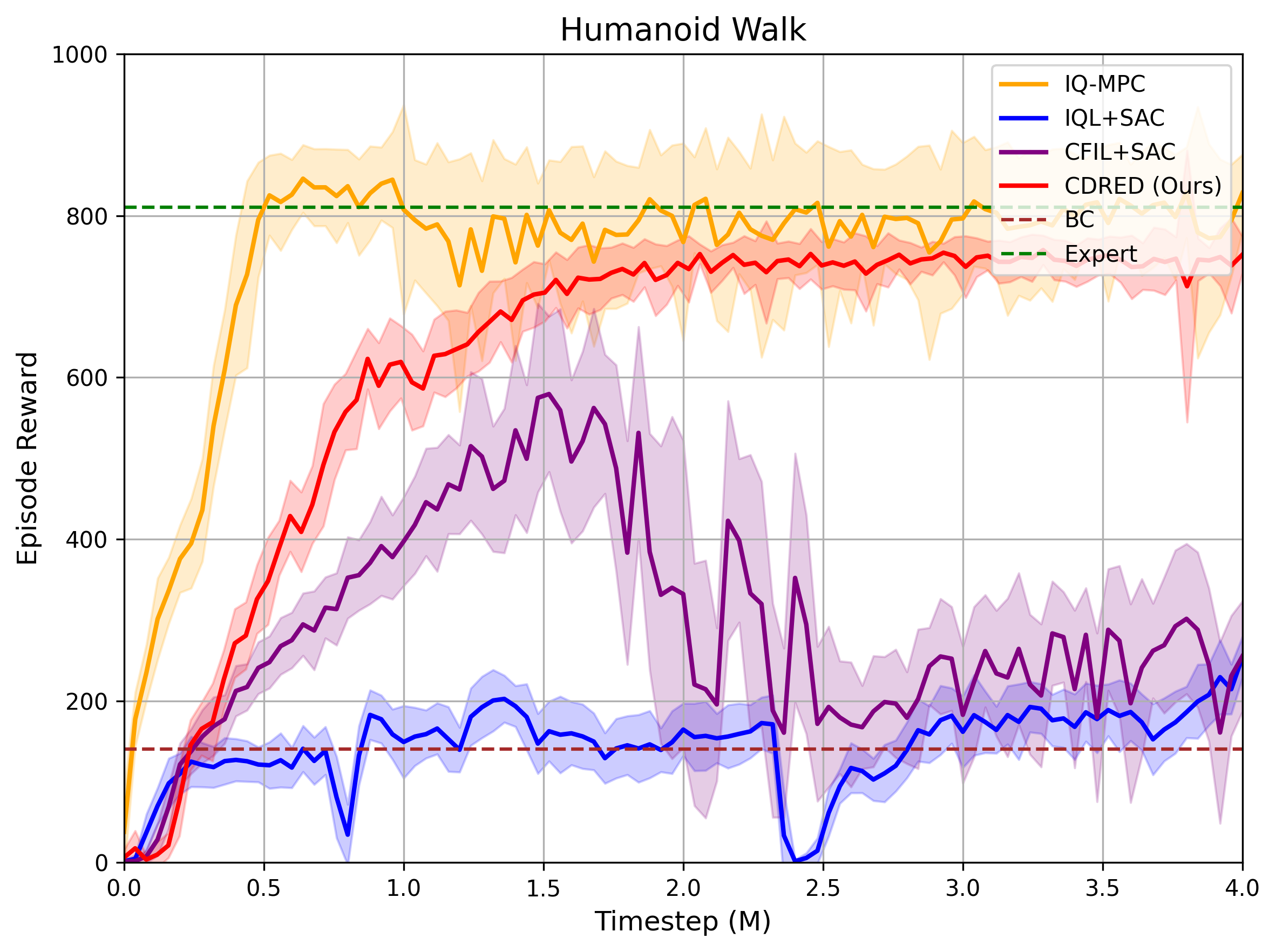}
    \end{minipage}
    \begin{minipage}{0.3\textwidth}
        \centering
        \includegraphics[width=\textwidth]{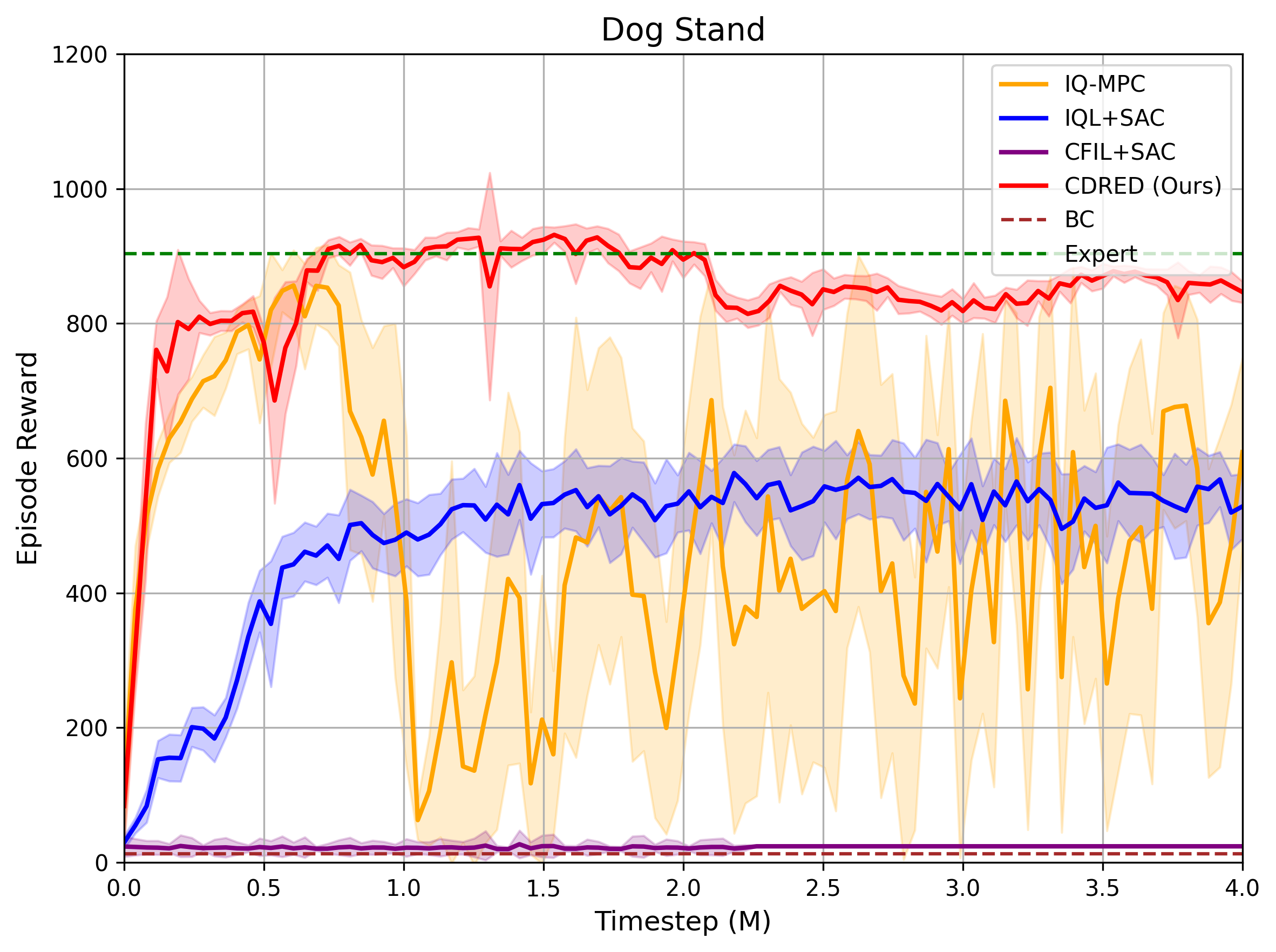}
    \end{minipage}
    \caption{\textbf{DMControl Results} We evaluate our CDRED method ({\color{red}red lines}) on 6 tasks in DMControl environments. Our approach achieves results comparable to IQ-MPC ({\color{orange}orange lines}) in Hopper Hop, Walker Run, and Humanoid Walk, while demonstrating greater stability across the remaining tasks. We conduct the experiments on 3 random seeds.}
    \label{fig:locomotion-results}
\end{figure}
\subsection{Vision-based Experiments}
In addition to experiments using state-based observations, we also benchmark our method on tasks with visual observations. Specifically, we select three tasks from DMControl \citep{tassa2018deepmind} with visual observations. To create these visual datasets, we render visual observations based on state-based expert trajectories, replacing the original state-based observations in the expert data. For each task, we use 100 expert trajectories generated by a trained TD-MPC2 model \citep{hansen2023td}. We show our results in Figure \ref{fig:visual-locomotion-results}. Interestingly, we observe that visual IQ-MPC encounters an issue with an overly powerful discriminator in the Cheetah Run task when using trajectories generated by a trained state-based TD-MPC2 policy, where state observations are replaced by RGB images rendered from those states. However, IQ-MPC performs well when using expert trajectories generated by a TD-MPC2 policy trained directly on visual observations.

\begin{figure}[h]
    \centering
    \begin{minipage}{0.3\textwidth}
        \centering
        \includegraphics[width=\textwidth]{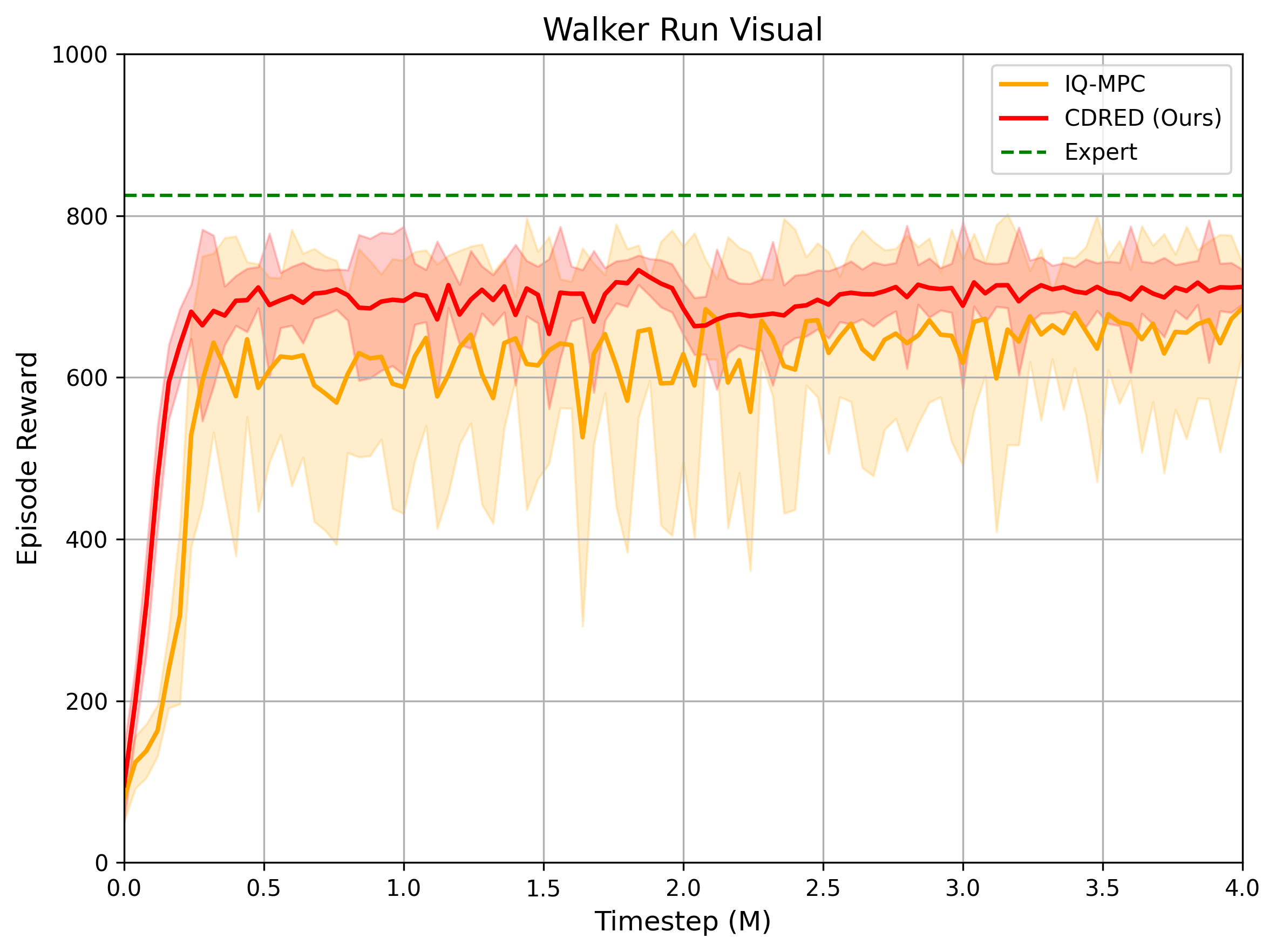}
    \end{minipage}
    \begin{minipage}{0.3\textwidth}
        \centering
        \includegraphics[width=\textwidth]{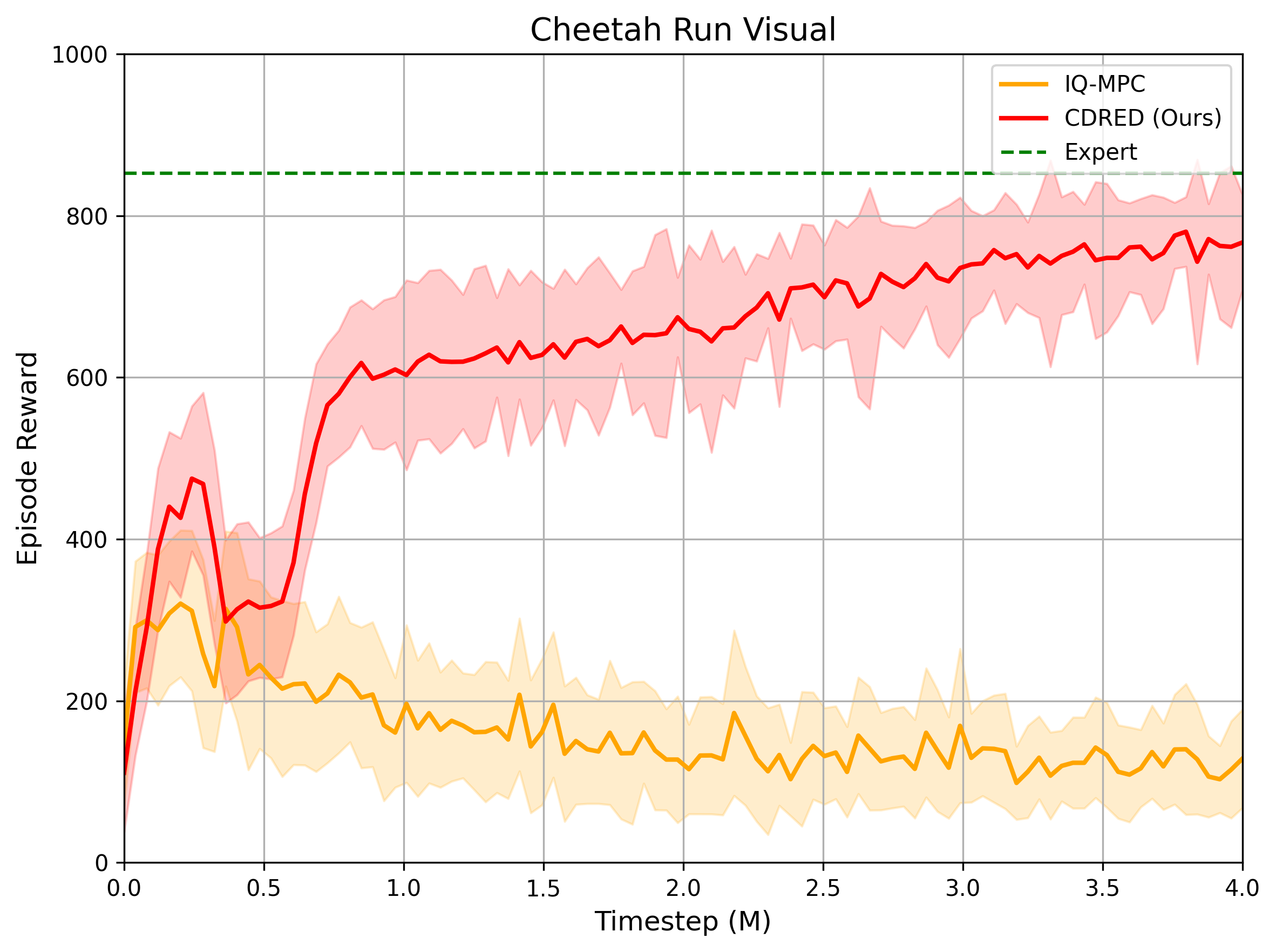}
    \end{minipage}
    \begin{minipage}{0.3\textwidth}
        \centering
        \includegraphics[width=\textwidth]{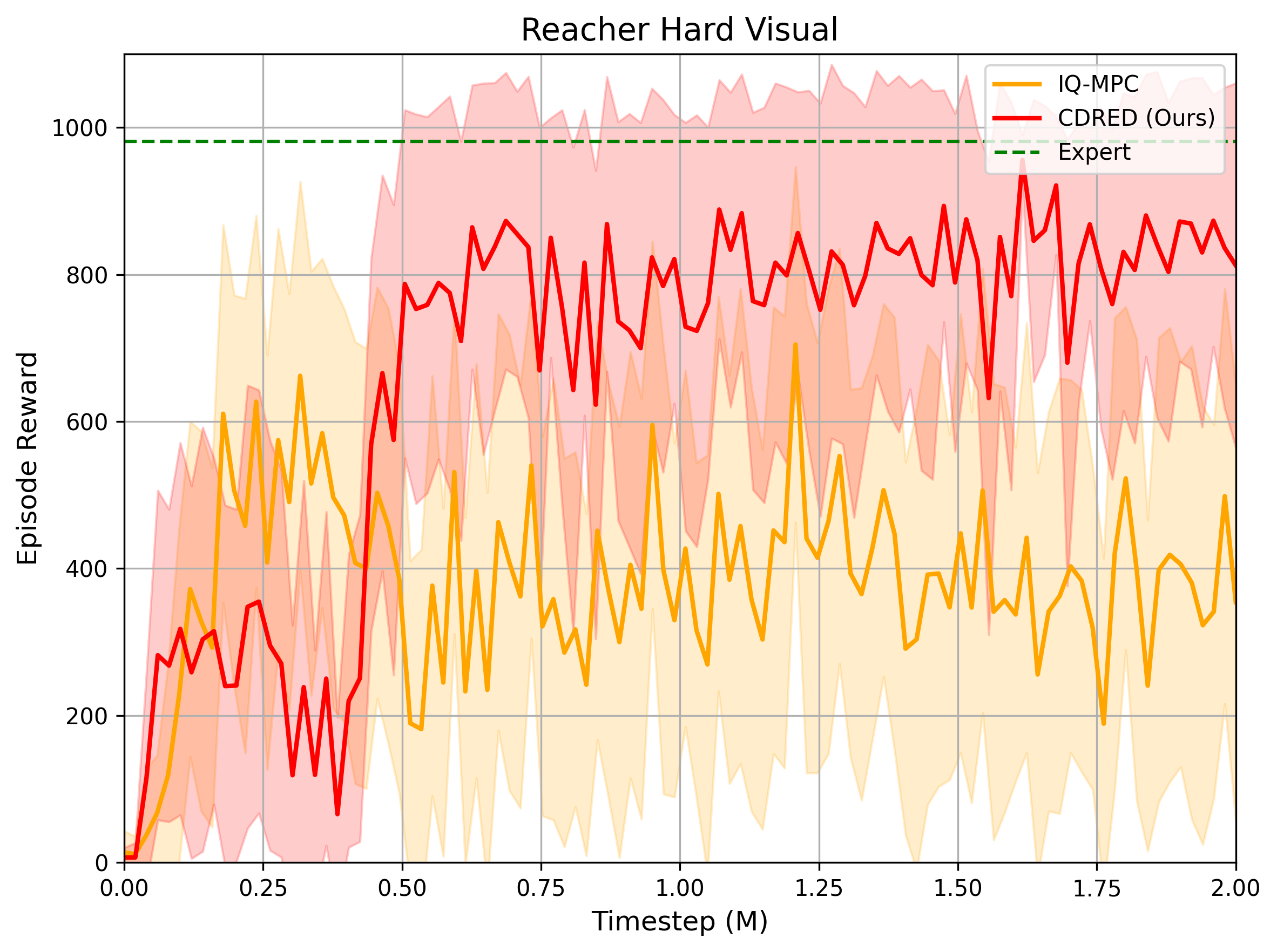}
    \end{minipage}
    \caption{\textbf{Visual Experiment Results} We compare the results of our model with IQ-MPC on tasks with visual observations. Our approach outperforms IQ-MPC in Cheetah Run and Reacher Hard tasks, while obtains comparable performance on Walker Run task. We conduct the experiments on 3 random seeds.}
    \label{fig:visual-locomotion-results}
\end{figure}

\section{Conclusion}

We propose a novel approach for world model-based online imitation learning, featuring an innovative reward model formulation. Unlike traditional adversarial approaches that may introduce instability during training, our reward model is grounded in density estimation for both expert and behavioral state-action distributions. This formulation enhances stability while maintaining high performance. Our model demonstrates expert-level proficiency across various tasks in multiple benchmarks, including DMControl, Meta-World, and ManiSkill2. Furthermore, it consistently retains stable performance throughout long-term online training. With its robust reward modeling and stability, our approach has the potential to tackle complex real-world robotics control tasks, where reliability and adaptability are crucial.






\bibliographystyle{plainnat}
\bibliography{main}


\appendix

\section{Hyperparameters and Architectural Details}
\subsection{Architectural Details}
We show the overall model architecture via a Pytorch style notation. We leverage layernorm \citep{ba2016layer} and Mish activations \citep{misra2019mish} for our model. The detialed architecture is displayed as following:
\begin{lstlisting}
WorldModel(
    (_encoder): ModuleDict(
        (state): Sequential(
            (0): NormedLinear(in_features=state_dim, out_features=256, bias=True, act=Mish)
            (1): NormedLinear(in_features=256, out_features=512, bias=True, act=SimNorm)
        )
    )
    (_dynamics): Sequential(
        (0): NormedLinear(in_features=512+action_dim, out_features=512, bias=True, act=Mish)
        (1): NormedLinear(in_features=512, out_features=512, bias=True, act=Mish)
        (2): NormedLinear(in_features=512, out_features=512, bias=True, act=SimNorm)
    )
    (_reward): CDRED_Reward(
        (behavioral_predictor): Sequential(
            (0): NormedLinear(in_features=512+action_dim, out_features=512, bias=True, act=Mish)
            (1): NormedLinear(in_features=512, out_features=512, bias=True, act=Mish)
            (2): Linear(in_features=512, out_features=64, bias=True)
        )
        (expert_predictor): Sequential(
            (0): NormedLinear(in_features=512+action_dim, out_features=512, bias=True, act=Mish)
            (1): NormedLinear(in_features=512, out_features=512, bias=True, act=Mish)
            (2): Linear(in_features=512, out_features=64, bias=True)
        )
        (target_networks)[not learnable]: Vectorized ModuleList(
            (0-4): 5 x Sequential(
                (0): NormedLinear(in_features=512+action_dim, out_features=512, bias=True, act=Mish)
                (1): NormedLinear(in_features=512, out_features=512, bias=True, act=Mish)
                (2): Linear(in_features=512, out_features=64, bias=True)
            )
        )
    )
    (_pi): Sequential(
        (0): NormedLinear(in_features=512, out_features=512, bias=True, act=Mish)
        (1): NormedLinear(in_features=512, out_features=512, bias=True, act=Mish)
        (2): Linear(in_features=512, out_features=2*action_dim, bias=True)
    )
    (_Qs): Vectorized ModuleList(
        (0-4): 5 x Sequential(
            (0): NormedLinear(in_features=512+action_dim, out_features=512, bias=True, dropout=0.01, act=Mish)
            (1): NormedLinear(in_features=512, out_features=512, bias=True, act=Mish)
            (2): Linear(in_features=512, out_features=101, bias=True)
        )
    )
    (_target_Qs): Vectorized ModuleList(
        (0-4): 5 x Sequential(
            (0): NormedLinear(in_features=512+action_dim, out_features=512, bias=True, dropout=0.01, act=Mish)
            (1): NormedLinear(in_features=512, out_features=512, bias=True, act=Mish)
            (2): Linear(in_features=512, out_features=num_bins, bias=True)
        )
    )
)
\end{lstlisting}

\subsection{Hyperparameter Details}

The specific hyperparameters used in the CDRED reward model are as follows:
\begin{itemize}
    \item The predictors and target networks project latent state-action pairs to an embedding space with dimension $p = 64$.
    \item We use an ensemble of $5$ target networks for the CDRED reward model.
    \item The function $g(x) = x$ is used in all experiments.
    \item The value of $\zeta = 0.8$ is used across all experiments.
    \item We adopt $\alpha = 0.9$ for all experiments.
    \item A StepLR learning rate scheduler is employed with $\gamma_{\text{lr}} = 0.1$, with a scheduler step of $500K$ for Meta-World and ManiSkill2 experiments, and $2M$ for DMControl experiments.
\end{itemize}
The remaining hyperparameters are consistent with those used in TD-MPC2 \citep{hansen2023td}.

\newpage
\section{Training and Planning Algorithms}
\subsection{Training Algorithm}

In this section, we present the detailed training algorithm for the CDRED world model, as shown in Algorithm \ref{alg:training}. For clarity, let $\theta = \{\phi, \psi, \xi\}$ represent all learnable parameters of the world model, and $\theta^-$ denote a fixed copy of $\theta$.
\begin{algorithm}[H]
\caption{~~CDRED World Model {(\emph{training})}}
\label{alg:training}
    \begin{algorithmic}
    \REQUIRE $\theta, \theta^{-}$: randomly initialized network parameters\\
    ~~~~~~~~~~$\eta, \tau, \lambda, \mathcal{B}_\pi, \mathcal{B}_E$: learning rate, soft update coefficient, horizon discount coefficient, behavioral buffer, expert buffer
    \FOR{training steps}
    \STATE \emph{// Collect episode with CDRED world model from $\mathbf{s}_{0} \sim p_{0}$:}
    \FOR{step $t=0...T$}
    \STATE Compute $\mathbf{a}_{t}$ with $\pi_{\theta}(\cdot | h_{\theta}(\mathbf{s}_{t}))$ using Algorithm \ref{alg:inference}\hfill{$\vartriangleleft$ \emph{Planning with MPPI}}
    \STATE $(\mathbf{s}'_{t},r_{t}) \sim \text{env.step}(\mathbf{a}_t)$
    \STATE $\mathcal{B}_\pi \leftarrow \mathcal{B}_\pi \cup (\mathbf{s}_{t}, \mathbf{a}_{t}, r_{t}, \mathbf{s}'_{t})$\hfill{$\vartriangleleft$ \emph{Add to behavioral buffer}}
    \STATE $\mathbf{s}_{t+1}\leftarrow\mathbf{s}'_t$
    \ENDFOR
    \STATE { \emph{// Update reward-free world model using collected data in $\mathcal{B}_\pi$ and $\mathcal{B}_E$:}}
    \FOR{num updates per step}
    \STATE $(\mathbf{s}_{t}, \mathbf{a}_{t}, \mathbf{s}'_t)_{0:H} \sim \mathcal{B}_\pi\cup\mathcal{B}_E$\hfill{$\vartriangleleft$ \emph{Combine behavioral and expert batch}}
    \STATE $\mathbf{z}_{0} = h_{\theta}(\mathbf{s}_{0})$\hfill{$\vartriangleleft$ \emph{Encode first observation}}
    \STATE { \emph{// Unroll for horizon $H$}}
    \FOR{$t = 0...H$}
    \STATE $\mathbf{z}_{t+1} = d_{\theta}(\mathbf{z}_{t}, \mathbf{a}_{t})$\hfill{$\vartriangleleft$ \emph{Unrolling using the latent dynamics model}}
    \STATE $\hat q_t = Q(\mathbf{z}_t,\mathbf{a}_t)$\hfill{$\vartriangleleft$ \emph{Estimate the Q value}}
    \STATE $\mathbf{z}'_t=h(\mathbf{s}'_t)$\hfill{$\vartriangleleft$ \emph{Encode the ground-truth next state}}
    \STATE $\hat r_t = R(\mathbf{z}_t,\mathbf{a}_t)$\hfill{$\vartriangleleft$ \emph{Estimate Reward using the CDRED reward model}}
    \STATE $q_t = \hat r_t + \gamma Q(\mathbf{z}'_t,\pi(\mathbf{z}'_t))$\hfill{$\vartriangleleft$ \emph{Compute the TD target using the estimated reward}}
   
    \ENDFOR
     \STATE Compute model loss $\mathcal{L}$\hfill{$\vartriangleleft$ \emph{Equation \ref{eqn:model-loss}}}
     \STATE Compute policy prior loss $\mathcal{L}^\pi$\hfill{$\vartriangleleft$ \emph{Equation \ref{eqn:policy-learning}}}
    \STATE $\theta \leftarrow \theta - \frac{1}{H} \eta \nabla_{\theta} (\mathcal{L} + \mathcal{L}^\pi)$\hfill{$\vartriangleleft$ \emph{Update online network}}
    \STATE $\theta^{-} \leftarrow (1 - \tau) \theta^{-} + \tau \theta$\hfill{$\vartriangleleft$ \emph{Soft update}}
    \ENDFOR
    \ENDFOR
    \end{algorithmic}
\end{algorithm}

\newpage
\subsection{Planning Algorithm}

In this section, we present the detailed MPPI planning algorithm for the CDRED world model, as shown in Algorithm \ref{alg:inference}. For simplicity, let $\theta = \{\phi, \psi, \xi\}$ represent all learnable parameters of the world model.

\begin{algorithm}[H]
\caption{~~CDRED World Model {(\emph{inference})}}
\label{alg:inference}
\begin{algorithmic}[1]
\REQUIRE $\theta:$ learned network parameters\\
~~~~~~~~~~$\mu^{0}, \sigma^{0}$: initial parameters for $\mathcal{N}$\\
~~~~~~~~~~$N, N_{\pi}$: number of sample/policy trajectories\\
~~~~~~~~~~$\mathbf{s}_{t}, H$: current state, rollout horizon
\STATE Encode state $\mathbf{z}_{t} \leftarrow h_{\theta}(\mathbf{s}_{t})$
\FOR{each iteration $j=1..J$}
\STATE Sample $N$ trajectories of length $H$ from $\mathcal{N}(\mu^{j-1}, (\sigma^{j-1})^{2} \mathrm{I})$
\STATE {Sample $N_{\pi}$ trajectories of length $H$ using $\pi_{\theta}, d_{\theta}$} \\
{\emph{// Estimate trajectory returns $\phi_{\Gamma}$ using $d_{\theta},Q_{\theta},\pi_\theta,R_\theta$ starting from $\mathbf{z}_{t}$ and initialize $\phi_{\Gamma}=0$:}}
\FOR{all $N+N_{\pi}$ trajectories $(\mathbf{a}_{t}, \mathbf{a}_{t+1}, \dots, \mathbf{a}_{t+H})$}
\FOR{step $t=0..H-1$}
\STATE $\mathbf{z}_{t+1} \leftarrow d_{\theta}(\mathbf{z}_{t}, \mathbf{a}_{t})$ \hfill {$\vartriangleleft$ \emph{Latent transition}}
\STATE $\mathbf{\hat a}_{t+1}\sim\pi_\theta(\cdot|\mathbf{z}_{t+1})$
\STATE $\phi_{\Gamma} = \phi_{\Gamma} + \gamma^{t} R_\theta(\mathbf{z}_{t}, \mathbf{a}_{t})$  \hfill {$\vartriangleleft$ \emph{Estimate reward with CDRED reward model}}
\ENDFOR
\STATE $\phi_{\Gamma} = \phi_{\Gamma} + \gamma^{H} Q_\theta(\mathbf{z}_H, \mathbf{a}_H)$ \hfill {$\vartriangleleft$ \emph{Terminal Q value}}
\ENDFOR
\STATE {\emph{// Update parameters $\mu,\sigma$ for next iteration:}}
\STATE $\mu^{j}, \sigma^{j} \leftarrow$ MPPI update with $\phi_\Gamma$.
\ENDFOR
\STATE \textbf{return} $\mathbf{a} \sim \mathcal{N}(\mu^{J}, (\sigma^{J})^{2} \mathrm{I})$
\end{algorithmic}
\end{algorithm}

\section{Task Details and Environment Specifications}
\label{sec:env-spec}
We consider 12 continuous control tasks in locomotion control and robot manipulation. We leverage 6 manipulation tasks in Meta-World \citep{yu2020meta}, 6 locomotion tasks in DMControl \citep{tassa2018deepmind} and 3 tasks in ManiSKill2 \citep{gu2023maniskill2}. In this section, we list the environment specifications for completeness in Table \ref{tab:meta-world-spec},  Table \ref{tab:dmcontrol-spec} and Table \ref{tab:mani-skill-spec}.
\begin{table}[H]
    \centering
    \begin{tabular}{c|cc}
    \toprule
        \textbf{Task} & \textbf{Observation Dimension} & \textbf{Action Dimension} \\
    \midrule
        Box Close & 39 & 4 \\
        Bin Picking & 39 & 4 \\
        Reach Wall & 39 & 4 \\
        Stick Pull & 39 & 4 \\
        Stick Push & 39 & 4 \\
        Soccer & 39 & 4 \\
    \bottomrule
    \end{tabular}
    \vspace{+5pt}
    \caption{\textbf{Meta-World Tasks} We evaluate on 6 tasks in Meta-World. The Meta-World benchmark is specifically constructed to facilitate research in multitask and meta-learning, ensuring a consistent embodiment, observation space, and action space across all tasks.}
    \label{tab:meta-world-spec}
\end{table}

\begin{table}[H]
    \centering
    \begin{tabular}{c|ccc}
    \toprule
        \textbf{Task} & \textbf{Observation Dimension} & \textbf{Action Dimension} & \textbf{High-dimensional?}\\
    \midrule
        Reacher Hard & 6 & 2 & No \\
        Hopper Hop & 15 & 4 & No\\
        Cheetah Run & 17 & 6 & No \\
        Walker Run & 24 & 6 & No \\
    \midrule
        Humanoid Walk & 67 & 24 & Yes \\
        Dog Stand & 223 & 38 & Yes \\
    \bottomrule
    \end{tabular}
    \vspace{+5pt}
    \caption{\textbf{DMControl Tasks} We evaluate on 6 tasks in DMControl. DMControl is a benchmark for reinforcement learning, offering a range of continuous control tasks built on the MuJoCo physics engine. It provides diverse environments for testing algorithms on tasks from basic motions to complex behaviors, supporting standardized evaluation in control and planning research.}
    \label{tab:dmcontrol-spec}
\end{table}

\begin{table}[H]
    \centering
    \begin{tabular}{c|cc}
    \toprule
        \textbf{Task} & \textbf{Observation Dimension} & \textbf{Action Dimension} \\
    \midrule
        Lift Cube & 42 & 4 \\
        Pick Cube & 51 & 4 \\
        Turn Faucet & 40 & 7 \\
    \bottomrule
    \end{tabular}
    \vspace{+5pt}
    \caption{\textbf{ManiSkill2 Tasks} We evaluate on 3 tasks in ManiSkill2. The ManiSkill2 benchmark represents a sophisticated platform designed to advance large-scale robot learning capabilities. It distinguishes itself through comprehensive task randomization and an extensive array of task variations, enabling more robust and generalized robotic skill development.}
    \label{tab:mani-skill-spec}
\end{table}
\section{Additional Experiments}
\subsection{Experiments on ManiSkill2}
We further evaluate our method on additional manipulation tasks in ManiSkill2 \citep{gu2023maniskill2}, achieving stable and competitive results on the pick cube, lift cube, and turn faucet tasks. Notably, IQL+SAC \citep{garg2021iq} and IQ-MPC \citep{li2024rewardfreeworldmodelsonline} also perform relatively well in these scenarios. Table \ref{tab:manipulation-success-maniskill} summarizes the success rates of each method across the ManiSkill2 tasks.

\begin{table}[H]
    \centering
    \begin{tabular}{c|cccc}
        \toprule
        \textbf{Method} & \textbf{IQL+SAC} & \textbf{CFIL+SAC} & \textbf{IQ-MPC} & \textbf{CDRED(Ours)} \\
        \midrule
        Pick Cube & 0.61$\pm$0.13 & 0.00$\pm$0.00 & 0.79$\pm$0.05 & \textbf{0.87$\pm$0.04}\\
        Lift Cube & 0.85 $\pm$ 0.04 & 0.01$\pm$0.01 & 0.89$\pm$0.02 & \textbf{0.93$\pm$0.03}\\
        Turn Faucet & 0.82$\pm$0.04 & 0.00$\pm$0.00 & 0.73$\pm$0.08 & \textbf{0.84$\pm$0.08}\\
        \bottomrule
    \end{tabular}
    \vspace{+5pt}
    \caption{\textbf{Manipulation Success Rate Results in ManiSkill2} We evaluate the success rate of CDRED across three tasks in the ManiSkill2 environment. CDRED demonstrates superior performance compared to IQL+SAC, CFIL+SAC, and IQ-MPC on the Pick Cube and Lift Cube tasks, while achieving comparable results on Turn Faucet. The reported results are averaged over 100 trajectories and evaluated across three random seeds.}
    \label{tab:manipulation-success-maniskill}
\end{table}

\subsection{Ablation Studies}
\label{sec:ablation}
To evaluate the influence of different architecture choices and expert data amounts, we ablate over the expert trajectories number, the $g$ function choice, and the usage of coupling. We show that our approach is still robust under a small number of expert demonstrations.
\paragraph{Ablation on Expert Trajectories Number}
We evaluate the impact of the number of expert trajectories on model performance and find that our model can learn effectively with a limited number of expert trajectories. We conduct this ablation on the Bin Picking task in Meta-World and the Cheetah Run task in DMControl, observing that our model achieves expert-level performance with only five demonstrations. The results are presented in Figure \ref{fig:ablation-traj}. Our model can effectively learn with only 5 expert demonstrations for Cheetah Run and Bin Picking tasks.

\begin{figure}[h]
    \centering
    \begin{minipage}{0.45\textwidth}
        \centering
        \includegraphics[width=\textwidth]{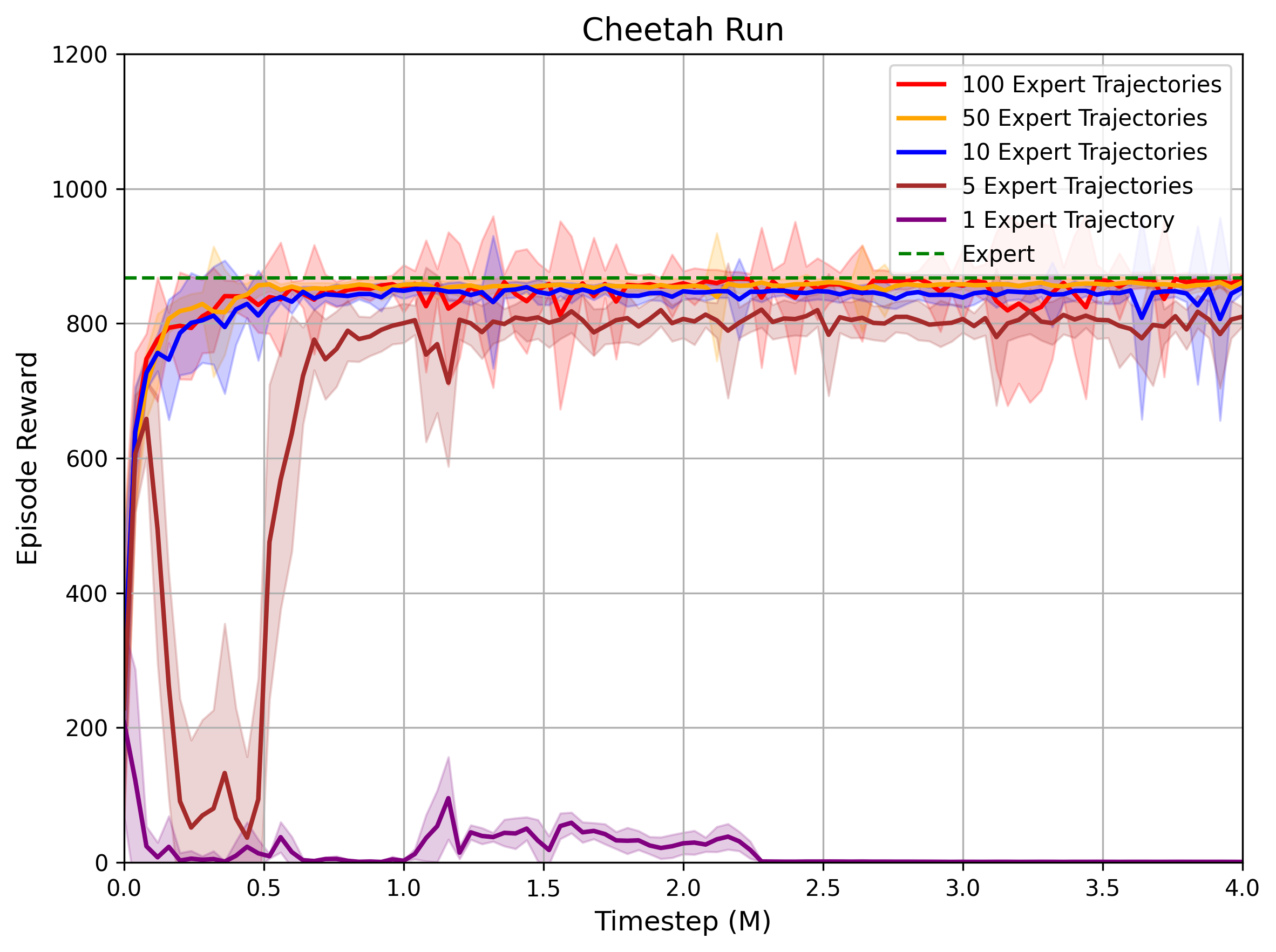}
    \end{minipage}
    \begin{minipage}{0.45\textwidth}
        \centering
        \includegraphics[width=\textwidth]{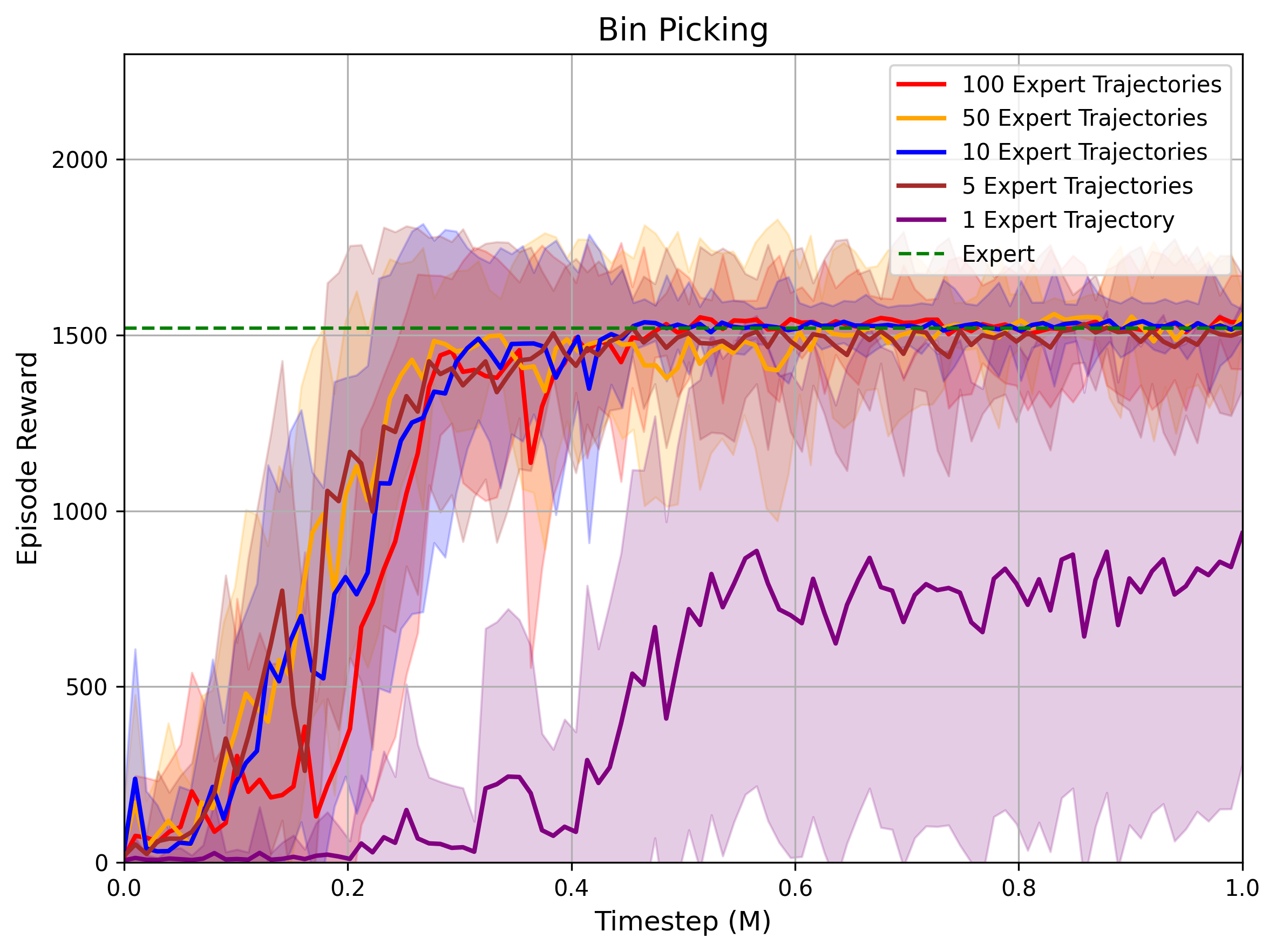}
    \end{minipage}
    \caption{\textbf{Ablation Study on Expert Trajectories Number} We conduct an ablation study on the number of expert trajectories for the Cheetah Run task in DMControl and the Bin Picking task in Meta-World. Our results demonstrate that our model can achieve expert-level performance using only 5 expert demonstrations for both tasks.}
    \label{fig:ablation-traj}
\end{figure}

\paragraph{Ablation on the $g$ Function Choice}
Function $g$ maps the neural network output bonus to the actual reward space. In order to keep the optimal point for the reward function unchanged, we need to leverage a monotonically increasing function. Empirically, we find $g(x)=x$ and $g(x)=\exp(x)$ can both work, but they have different performances in high-dimensional settings. We find $g(x)=x$ tends to provide a faster convergence in high-dimensional tasks such as Dog Stand compared to $g(x)=\exp(x)$. While we haven't observed any significant difference on low-dimensional tasks. We show the ablation in Figure \ref{fig:function-choice}.

\begin{figure}[h]
    \centering
    \begin{minipage}{0.45\textwidth}
        \centering
        \includegraphics[width=\textwidth]{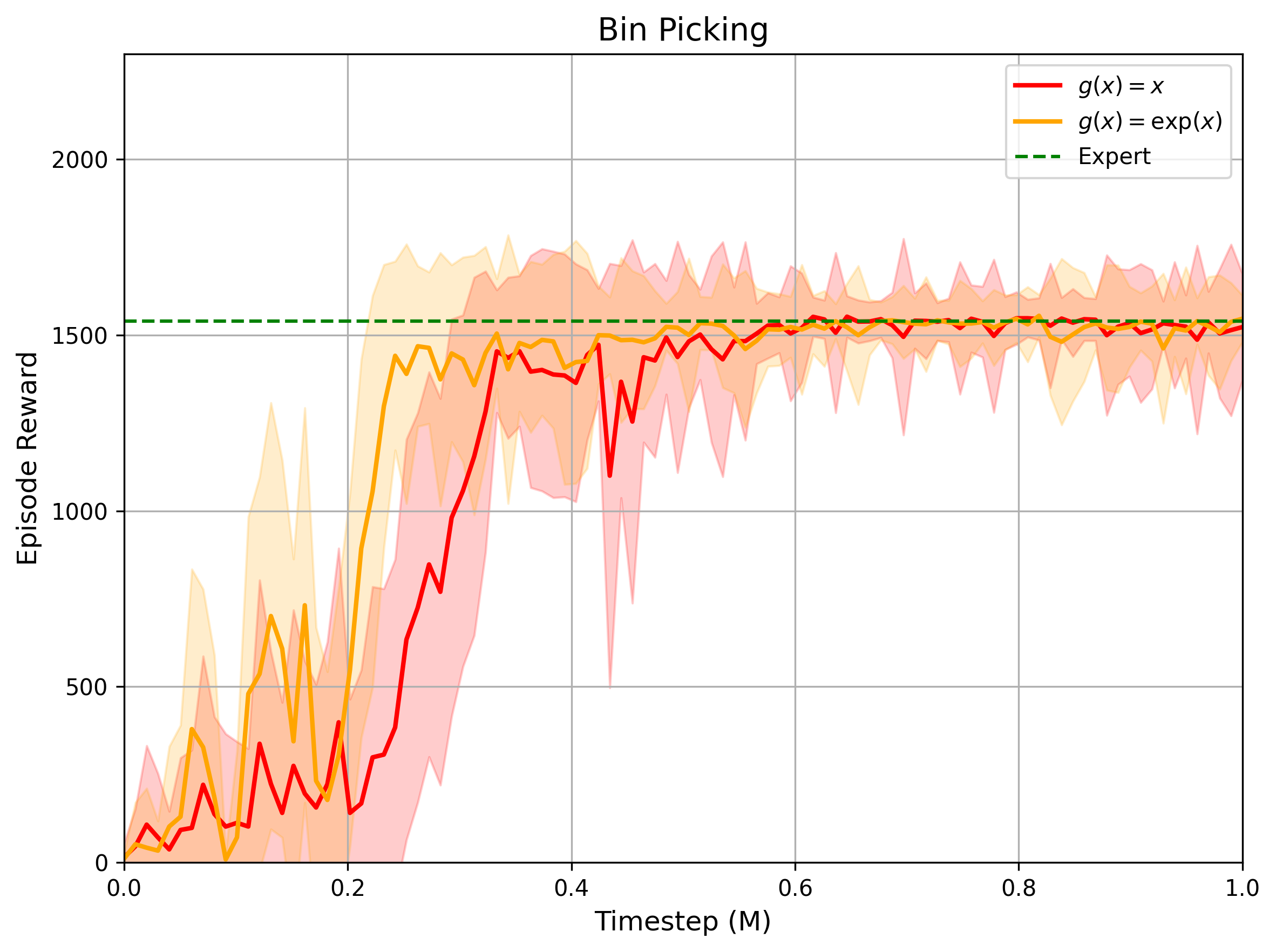}
        \\ \textbf{Low-dimensional task}
    \end{minipage}
    \begin{minipage}{0.45\textwidth}
        \centering
        \includegraphics[width=\textwidth]{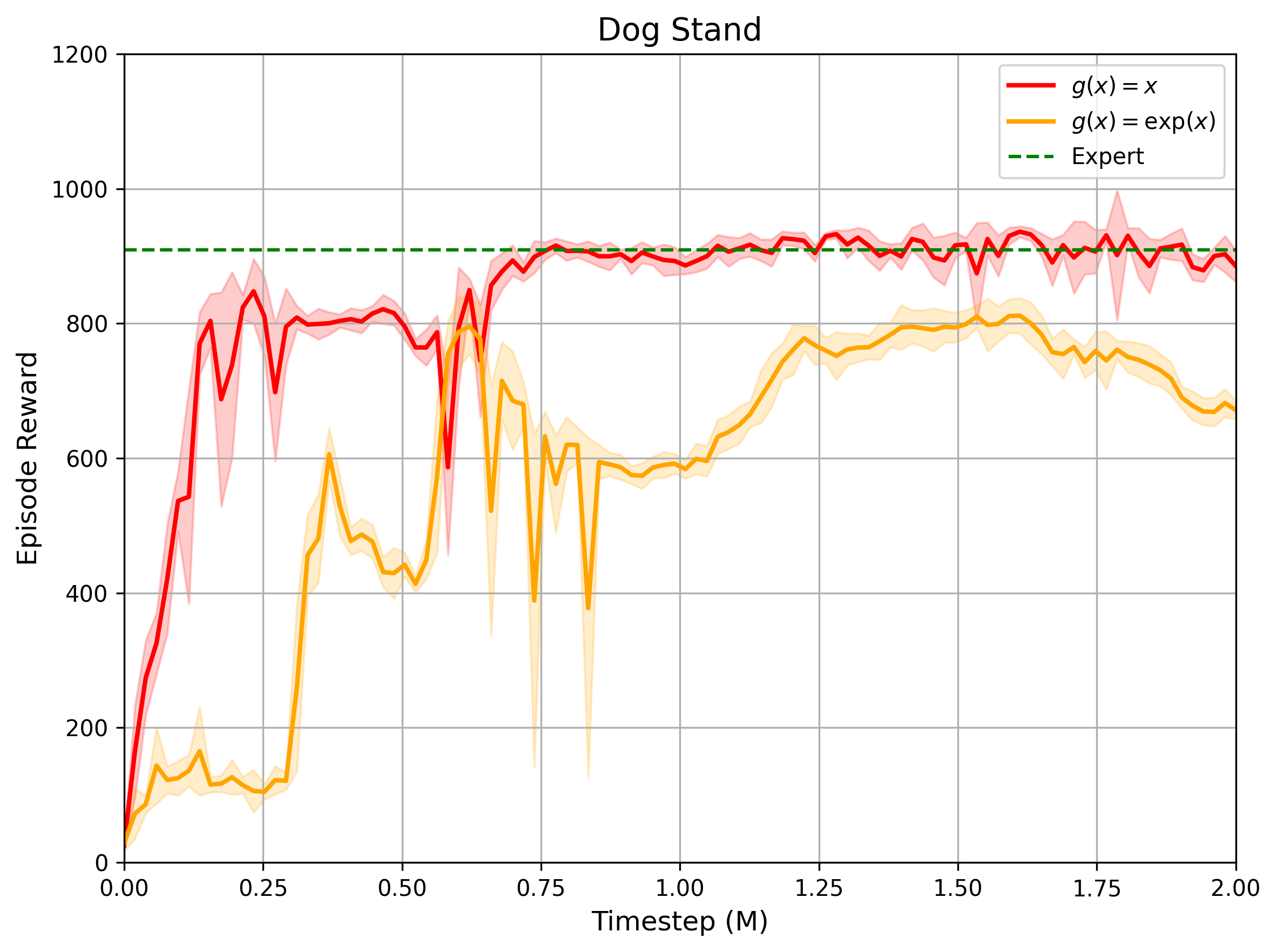}
        \\ \textbf{High-dimensional task}
    \end{minipage}
    \caption{\textbf{Ablation on $g$ function choice} For low-dimensional task (left), both forms of $g(x)$ demonstrate comparable performance. However, in high-dimensional task (right), $g(x) = \exp(x)$ exhibits instability and suboptimal behavior, whereas $g(x) = x$ maintains stability. The task dimensionality information is shown in Appendix \ref{sec:env-spec}.}
    \label{fig:function-choice}
\end{figure}

\begin{figure}[h]
    \centering
    \begin{minipage}{0.45\textwidth}
        \centering
        \includegraphics[width=\textwidth]{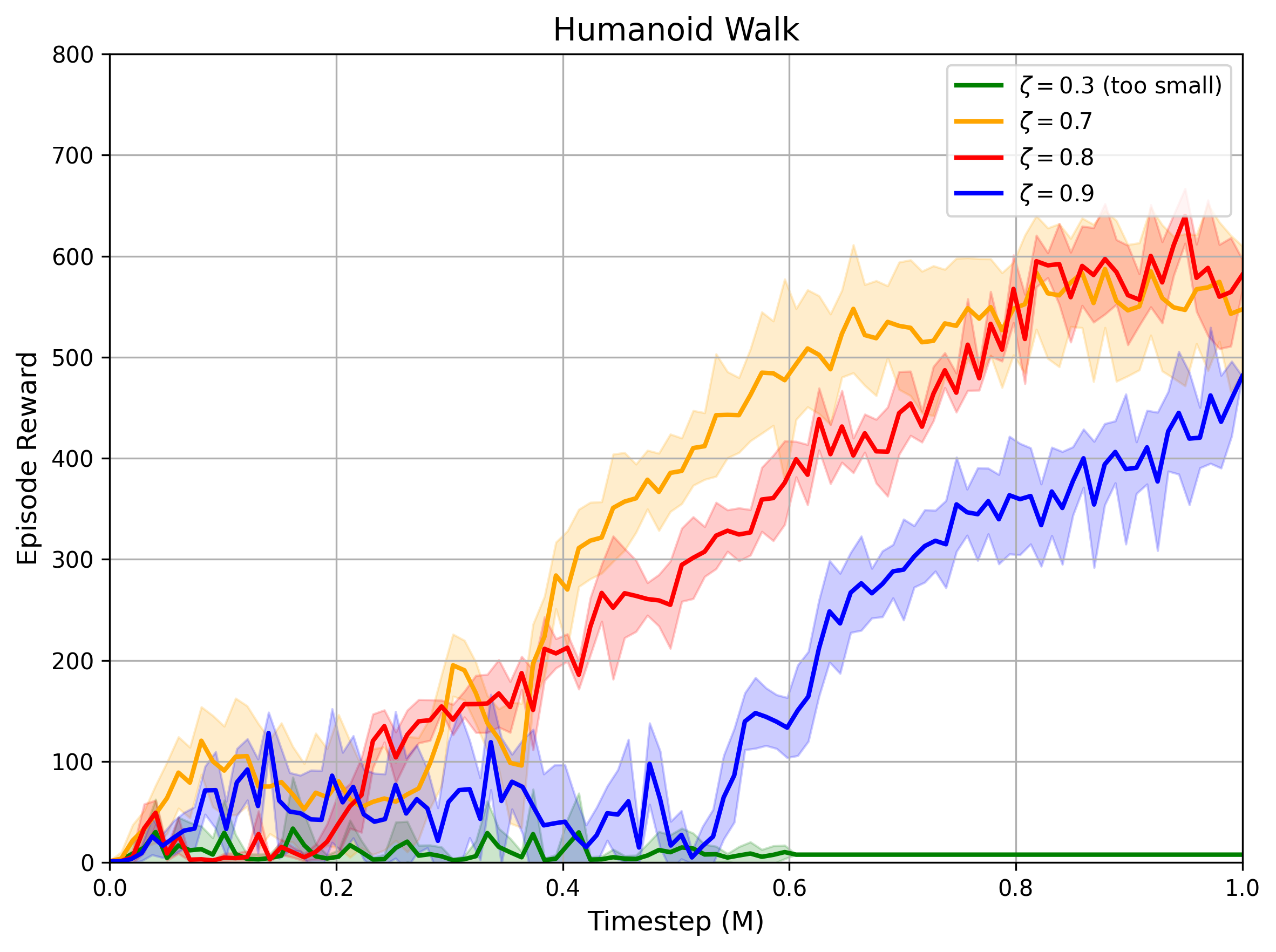}
        {\textbf{Ablation on $\zeta$}}
    \end{minipage}
    \begin{minipage}{0.45\textwidth}
        \centering
        \includegraphics[width=\textwidth]{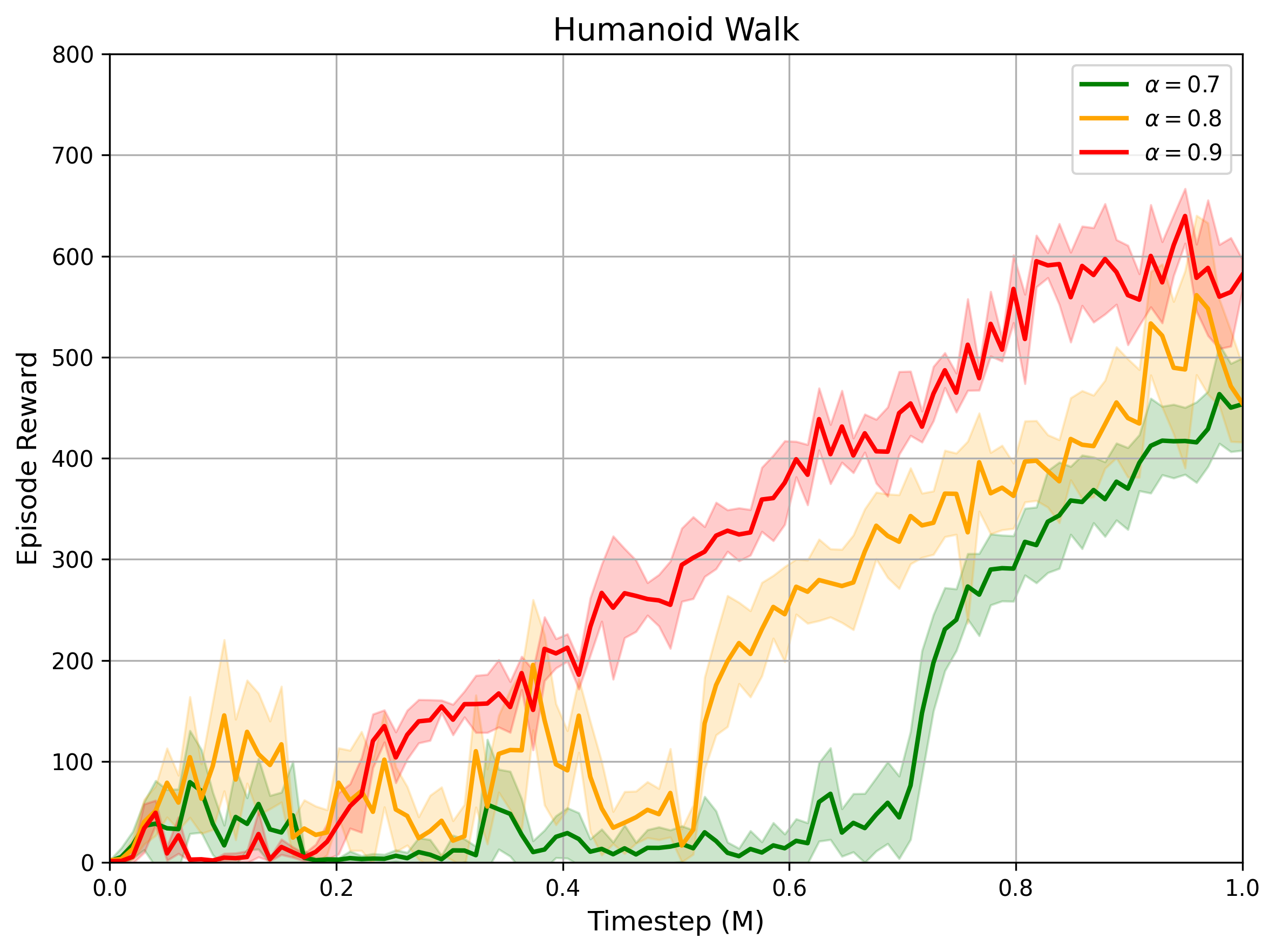}
        {\textbf{Ablation on $\alpha$}}
    \end{minipage}
    \caption{\textbf{Ablation Study on Hyperparameters} We conduct an ablation study on hyperparameter $\zeta$ and $\alpha$. The ablation study is conducted on Humanoid Walk task.}
    \label{fig:ablation-hyperparam}
\end{figure}

\paragraph{Ablation on the Hyperparameter Choice}
We conduct ablation studies on two hyperparameters, $\alpha$ and $\zeta$, introduced in Section \ref{sec:coupled-RED}, which are related to the construction of the reward model. Our experiments demonstrate that these parameters influence the model's convergence during the initial training phase, which is closely tied to the policy's exploration capability. For the hyperparameter $\zeta$, we find that smaller values may encourage exploration, leading to faster convergence. However, if $\zeta$ is too small, the model may fail to learn effectively. For the hyperparameter $\alpha$, larger values may enhance exploration, potentially promoting convergence. The results are aligned with our intuition given in Section \ref{sec:coupled-RED}. We perform the ablation study on the state-based Humanoid Walk task in the DMControl environment, and the results are presented in Figure \ref{fig:ablation-hyperparam}.

\subsection{Additional Comparison with HyPE}
\label{sec:hype}
Hybrid IRL \citep{ren2024hybrid} is a recently proposed method for performing inverse reinforcement learning and imitation learning using hybrid data. In this section, we compare our approach with the model-free method (HyPE) introduced in their work. Our method achieves superior empirical performance on three DMControl locomotion tasks, including the high-dimensional Humanoid Walk task. The results are presented in Figure \ref{fig:hype-comparison}.

\begin{figure}[h]
    \centering
    \begin{minipage}{0.3\textwidth}
        \centering
        \includegraphics[width=\textwidth]{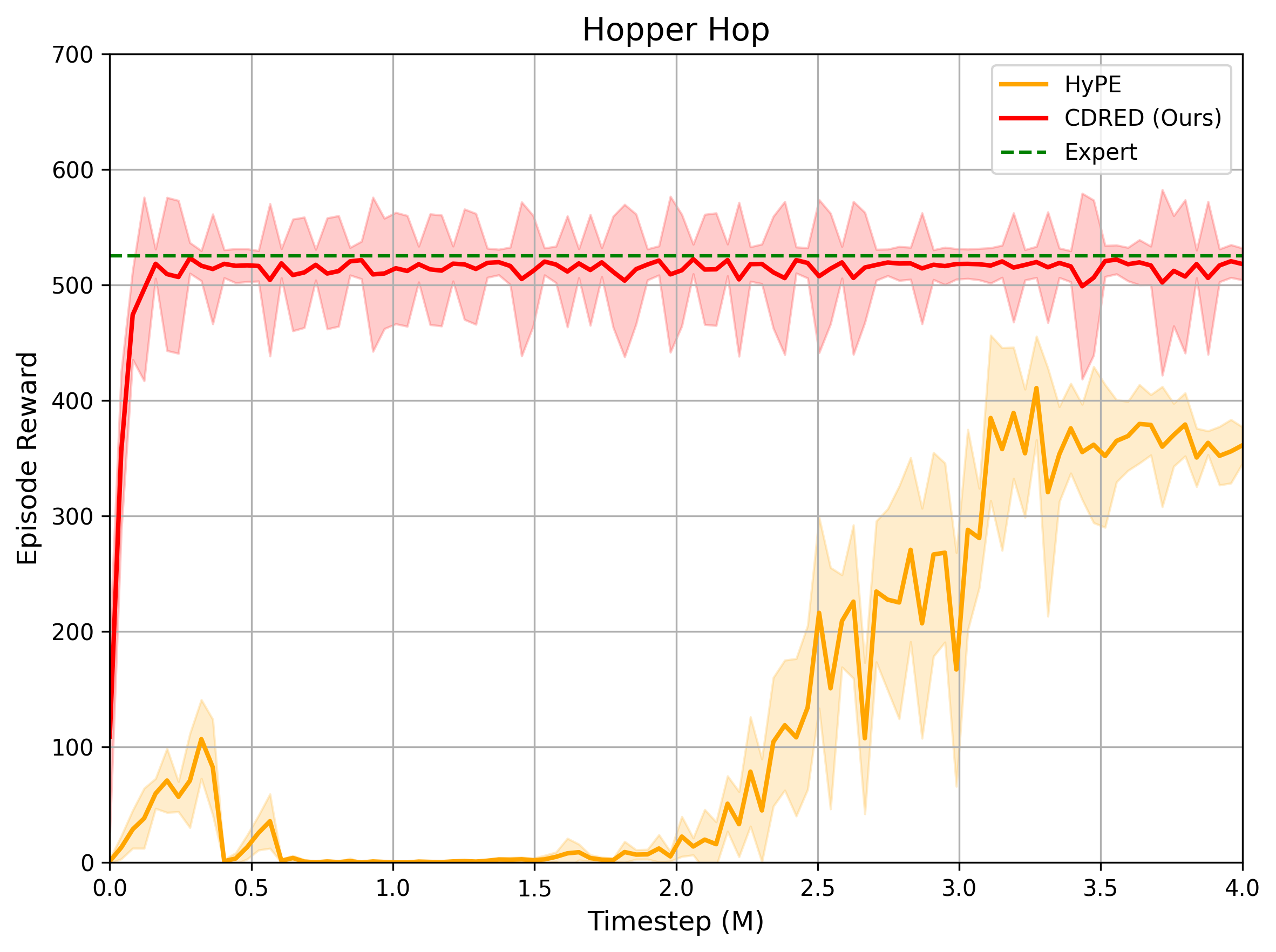}
    \end{minipage}
    \begin{minipage}{0.3\textwidth}
        \centering
        \includegraphics[width=\textwidth]{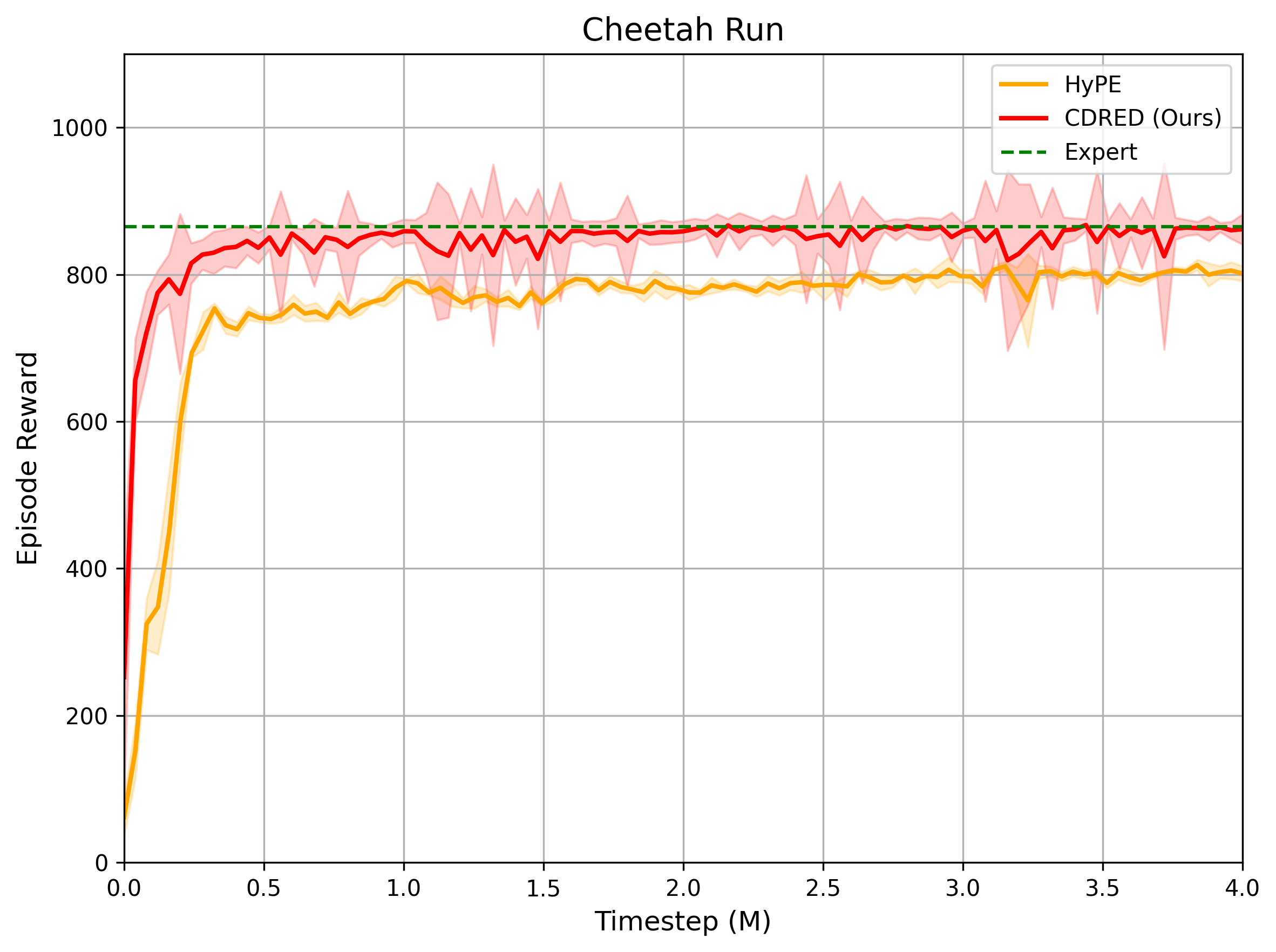}
    \end{minipage}
    \begin{minipage}{0.3\textwidth}
        \centering
        \includegraphics[width=\textwidth]{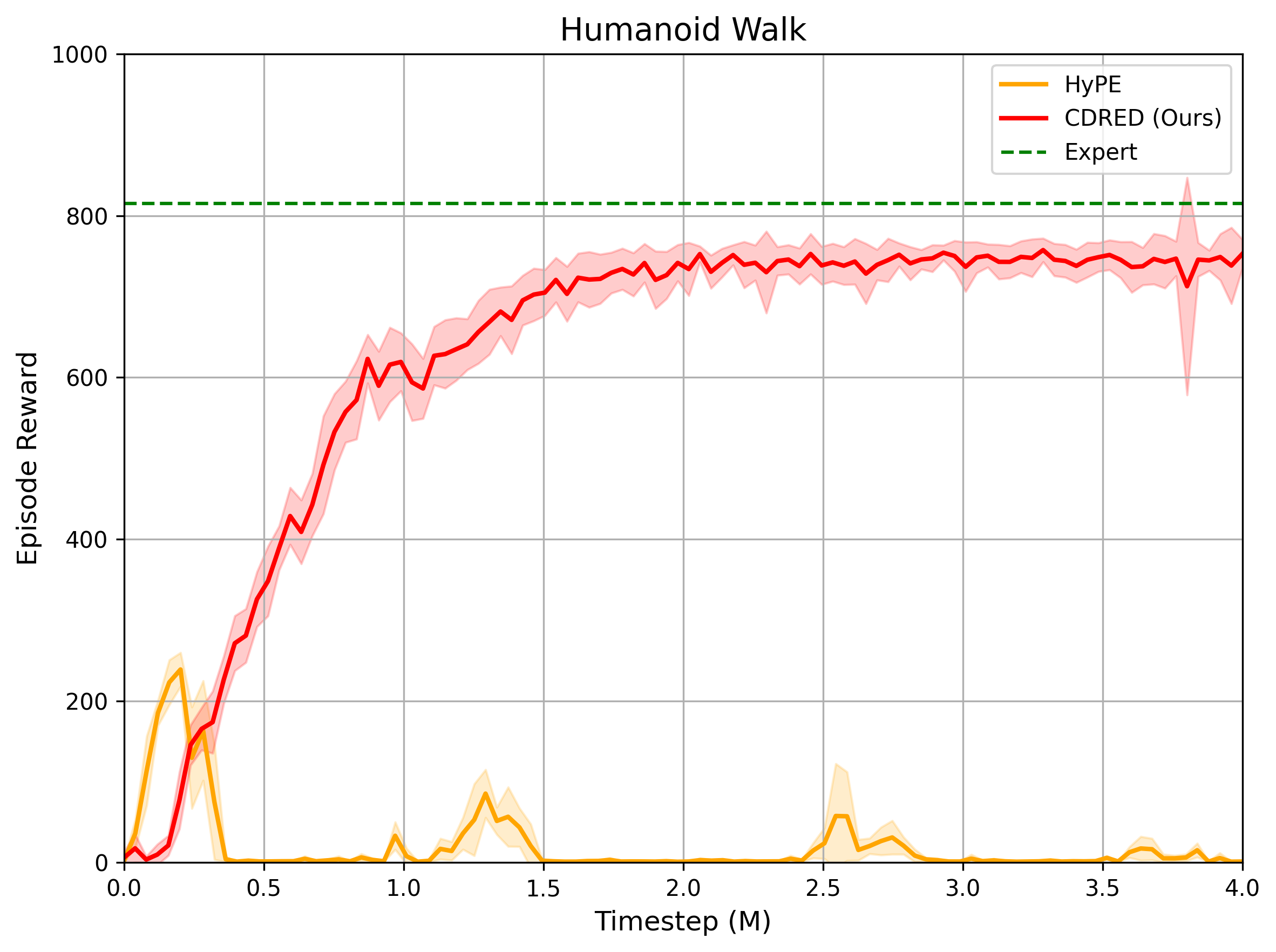}
    \end{minipage}
    \caption{\textbf{Comparison with HyPE} We compare our CDRED approach with the HyPE method \citep{ren2024hybrid} on the Hopper Hop, Cheetah Run, and Humanoid Walk tasks. Among these, the Humanoid Walk task is high-dimensional, while the others are low-dimensional. Our approach demonstrates superior empirical performance and improved sampling efficiency on these tasks.}
    \label{fig:hype-comparison}
\end{figure}

\subsection{Additional Comparison with SAIL}
\label{sec:sail}
Support-weighted Adversarial Imitation Learning (SAIL) \citep{SAIL} is an extension of Generative Adversarial Imitation Learning (GAIL) \citep{ho2016generative} that enhances performance by integrating Random Expert Distillation (RED) rewards \citep{wang2019random}. In this section, we present an additional comparative analysis between our proposed CDRED method and SAIL. The experimental results are illustrated in Figure \ref{fig:sail-comparison}.

\begin{figure}[h]
    \centering
    \begin{minipage}{0.3\textwidth}
        \centering
        \includegraphics[width=\textwidth]{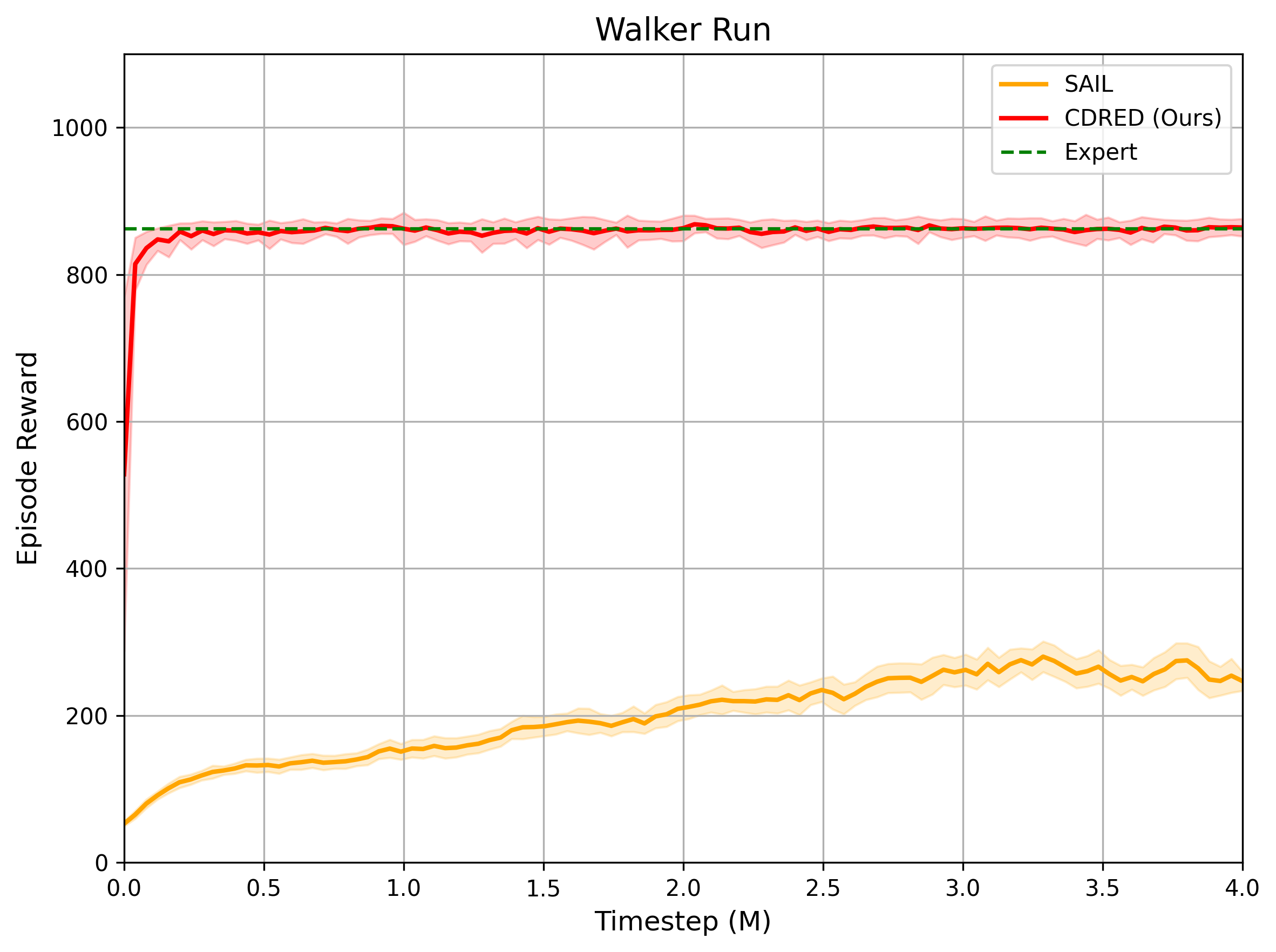}
    \end{minipage}
    \begin{minipage}{0.3\textwidth}
        \centering
        \includegraphics[width=\textwidth]{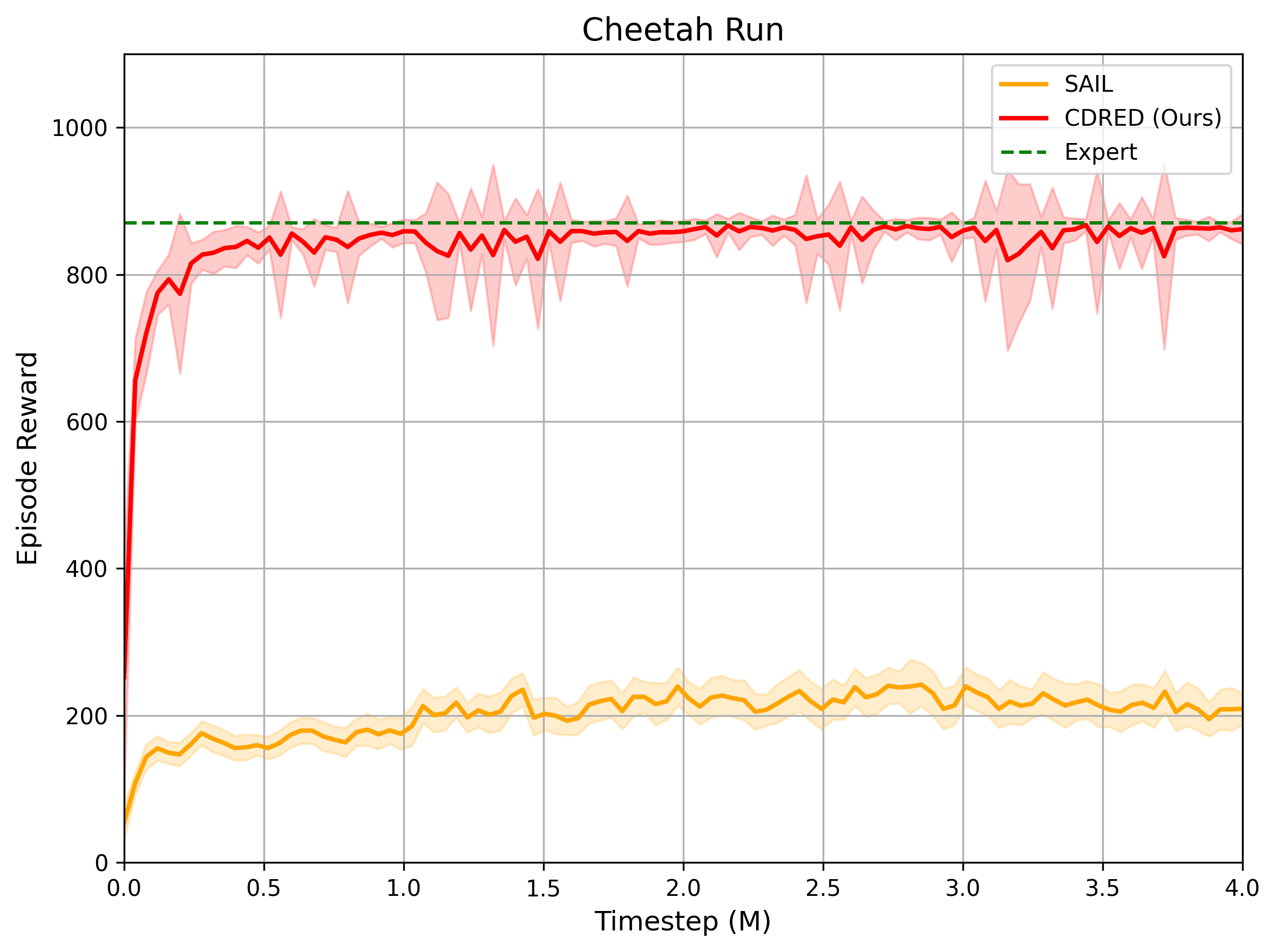}
    \end{minipage}
    \begin{minipage}{0.3\textwidth}
        \centering
        \includegraphics[width=\textwidth]{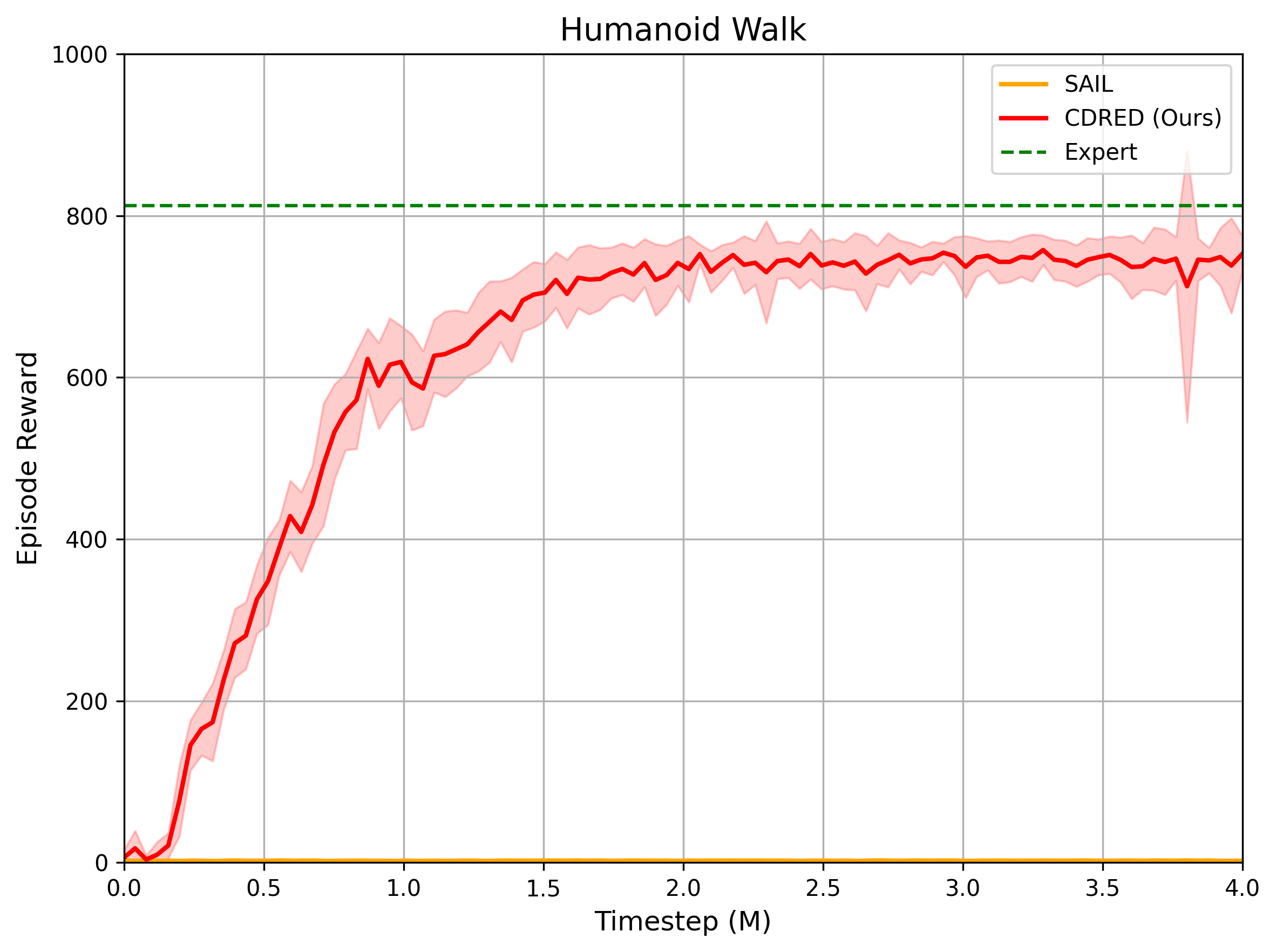}
    \end{minipage}
    \caption{\textbf{Comparison with SAIL} We compare our CDRED approach with the SAIL method \citep{SAIL} on the Walker Run, Cheetah Run, and Humanoid Walk tasks. Among these, the Humanoid Walk task is high-dimensional, while the others are low-dimensional. SAIL fails to learn in the high-dimensional Humanoid Walk task while our approach achieves nearly expert-level performance. Overall, our approach demonstrates superior empirical performance and improved sampling efficiency on these tasks.}
    \label{fig:sail-comparison}
\end{figure}

\subsection{Robustness Analysis under Noisy Dynamics}
\label{sec:noisy-dynamics}
We conduct an additional analysis to evaluate the robustness of our model under noisy environment dynamics. Following the evaluation protocol of Hybrid IRL \citep{ren2024hybrid}, we introduce noise by adding a trembling probability, $p_{\text{tremble}}$. During interactions with the environment, the agent executes a random action with probability $p_{\text{tremble}}$ and follows the action generated by the policy for the remaining time. Our empirical results demonstrate that our model exhibits robustness to noisy dynamics, as its performance only slightly deteriorates from the expert level when noise is introduced. The results for the Cheetah Run and Walker Run tasks are presented in Figure \ref{fig:robustness}.

\begin{figure}[h]
    \centering
    \begin{minipage}{0.45\textwidth}
        \centering
        \includegraphics[width=\textwidth]{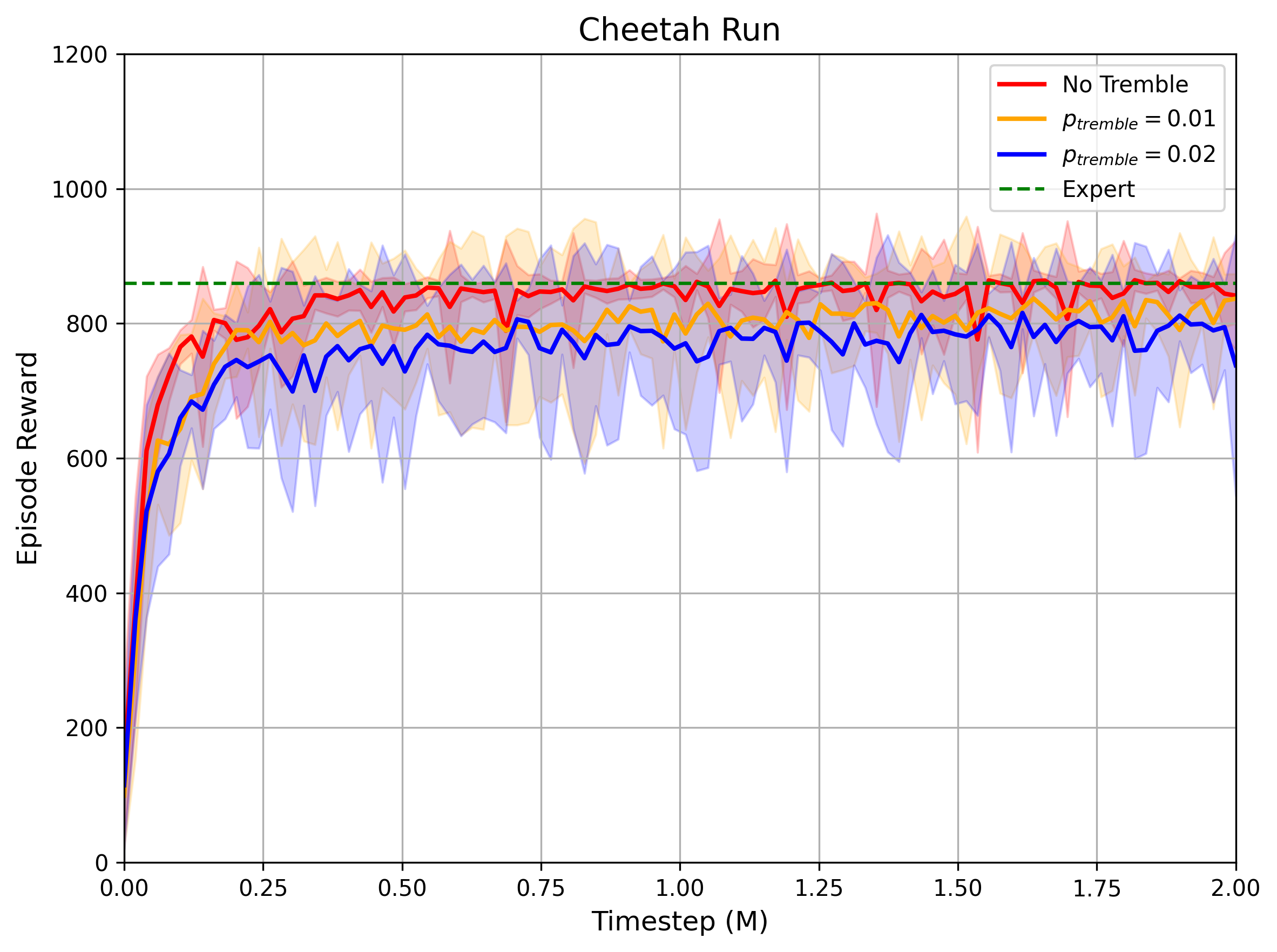}
    \end{minipage}
    \begin{minipage}{0.45\textwidth}
        \centering
        \includegraphics[width=\textwidth]{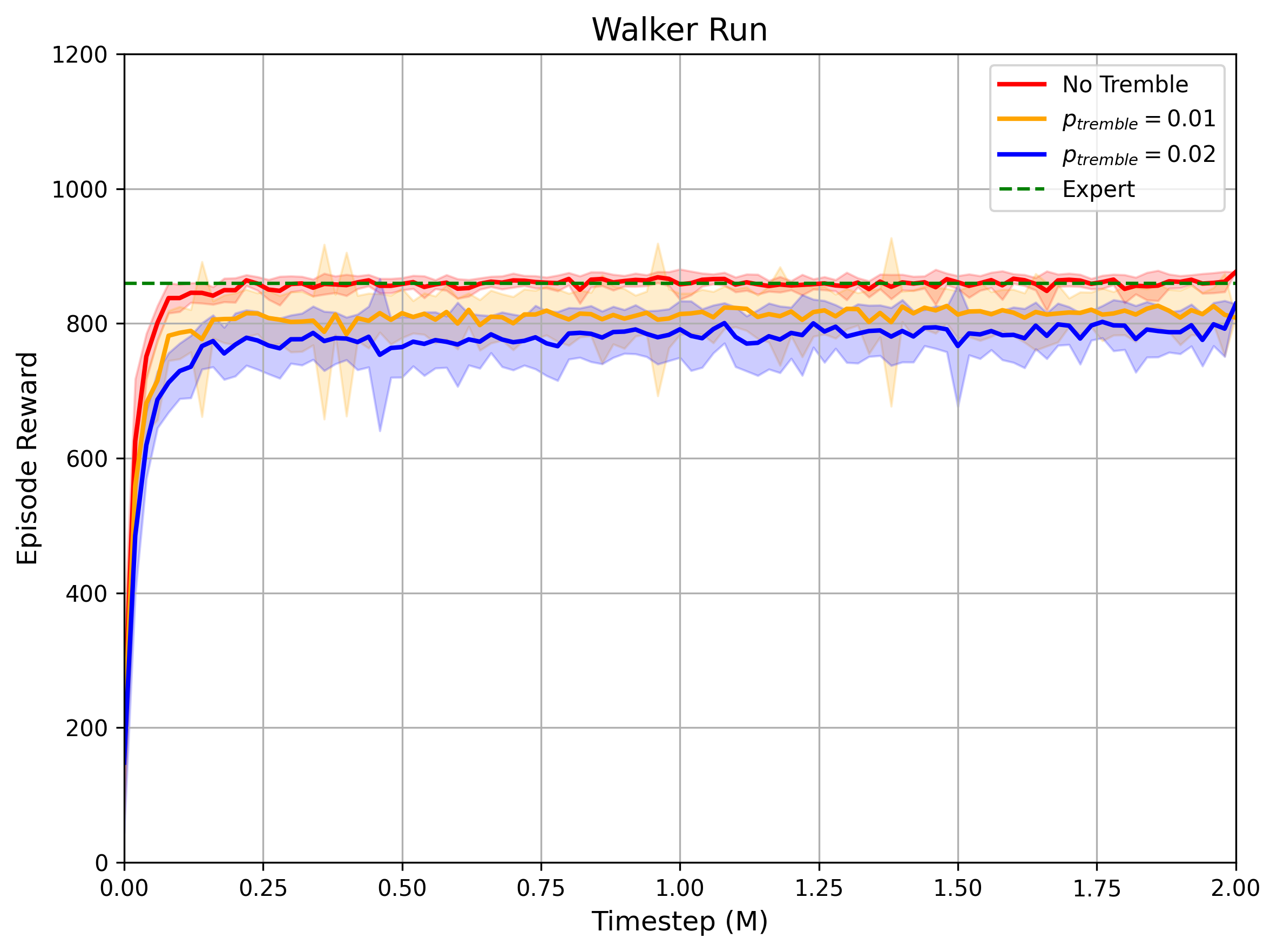}
    \end{minipage}
    \caption{\textbf{Robustness Analysis under Noisy Environment Dynamics} We analyze the performance of our model on the Cheetah Run and Walker Run tasks under stochastic environment dynamics. Our results demonstrate that the model shows notable robustness to noise in the environment dynamics.}
    \label{fig:robustness}
\end{figure}

\subsection{Quantitative Analysis of Training Stability}
\label{sec:training-stability}

To assess the training stability of our algorithm, we examine the mean and maximum gradient norms throughout the training process. This approach is similar to the analysis conducted in TD-MPC2 \citep{hansen2023td}. We compare the gradient norms of our method with those of IQ-MPC \citep{li2024rewardfreeworldmodelsonline}, a world model online imitation learning approach that employs an adversarial formulation, on DMControl tasks. Our results indicate that the gradient norms of our approach are significantly smaller than those of IQ-MPC, suggesting superior training stability. The detailed comparison is presented in Table \ref{tab:gradient_norm}.

\begin{table}[H]
    \centering
    \begin{tabular}{l|cc|cc}
        \toprule
        \textbf{Gradient Norm} & \textbf{IQ-MPC (mean)} & \textbf{CDRED (mean)} & \textbf{IQ-MPC (max)} & \textbf{CDRED (max)} \\
        \midrule
        Humanoid Walk & 12.6 & 0.073 & 198.3 & 0.32 \\
        Hopper Hop & 324.8 & 1.3 & 8538.6 & 4.6 \\
        Cheetah Run & 131.7 & 0.34 & 2342.6 & 3.1 \\
        Walker Run & 344.6 & 0.26 & 1534.7 & 1.8 \\
        Reacher Hard & 11.3 & 0.012 & 65.8 & 0.083 \\
        Dog Walk & 989.7 & 0.059 & 6824.3 & 0.13 \\
        \bottomrule
    \end{tabular}
    \vspace{+5pt}
    \caption{\textbf{Training Stability Analysis} Comparison of gradient norms between our CDRED approach and the IQ-MPC method. The significantly smaller gradient norms of our approach indicate enhanced training stability.}
    \label{tab:gradient_norm}
\end{table}

\subsection{Advantages Compared to Current Methods Involving Adversarial Training}
\label{sec:drawbacks-iqmpc}
The current existing methods \citep{li2024rewardfreeworldmodelsonline,kolev2024efficient,rafailov2021visual,yin2022planning} for world model online imitation learning often involve adversarial training, following the similar problem formulation as GAIL \citep{ho2016generative} or IQ-Learn \citep{garg2021iq}. IQ-MPC \citep{li2024rewardfreeworldmodelsonline} adopted inverse soft-Q objective for critic learning while CMIL \citep{kolev2024efficient}, V-MAIL \citep{rafailov2021visual} and EfficientImitate \citep{yin2022planning} leveraged GAIL style reward modeling. In terms of IQ-Learn, an improved version of GAIL, although its policy can be computed by applying a softmax to the Q-value in discrete control, effectively converting a min-max problem into a single maximization \citep{garg2021iq}, it still requires the maximum entropy RL objective for policy updates in continuous control settings. In such cases, IQ-Learn performs adversarial training between the policy and the critic, which leads to stability issues similar to those encountered in GAIL. IQ-MPC, while performing well in various complex scenarios such as high-dimensional locomotion control and dexterous hand manipulation, still encounters challenges in some cases. These challenges include an imbalance between the discriminator and the policy, as well as long-term instability. These issues stem from using an adversarially trained Q-function as the critic. While IQ-MPC attempts to mitigate them by incorporating regularization terms during the training process, it doesn't fully resolve the problem. Figure \ref{fig:iqmpc-drawbacks} illustrates the drawbacks of IQ-MPC in some cases, namely an overly powerful discriminator and long-term instability. We also demonstrate the quantitative results for training stability analysis in Appendix \ref{sec:training-stability}.

\begin{figure}[h]
    \centering
    \begin{minipage}{0.45\textwidth}
        \centering
        \includegraphics[width=\textwidth]{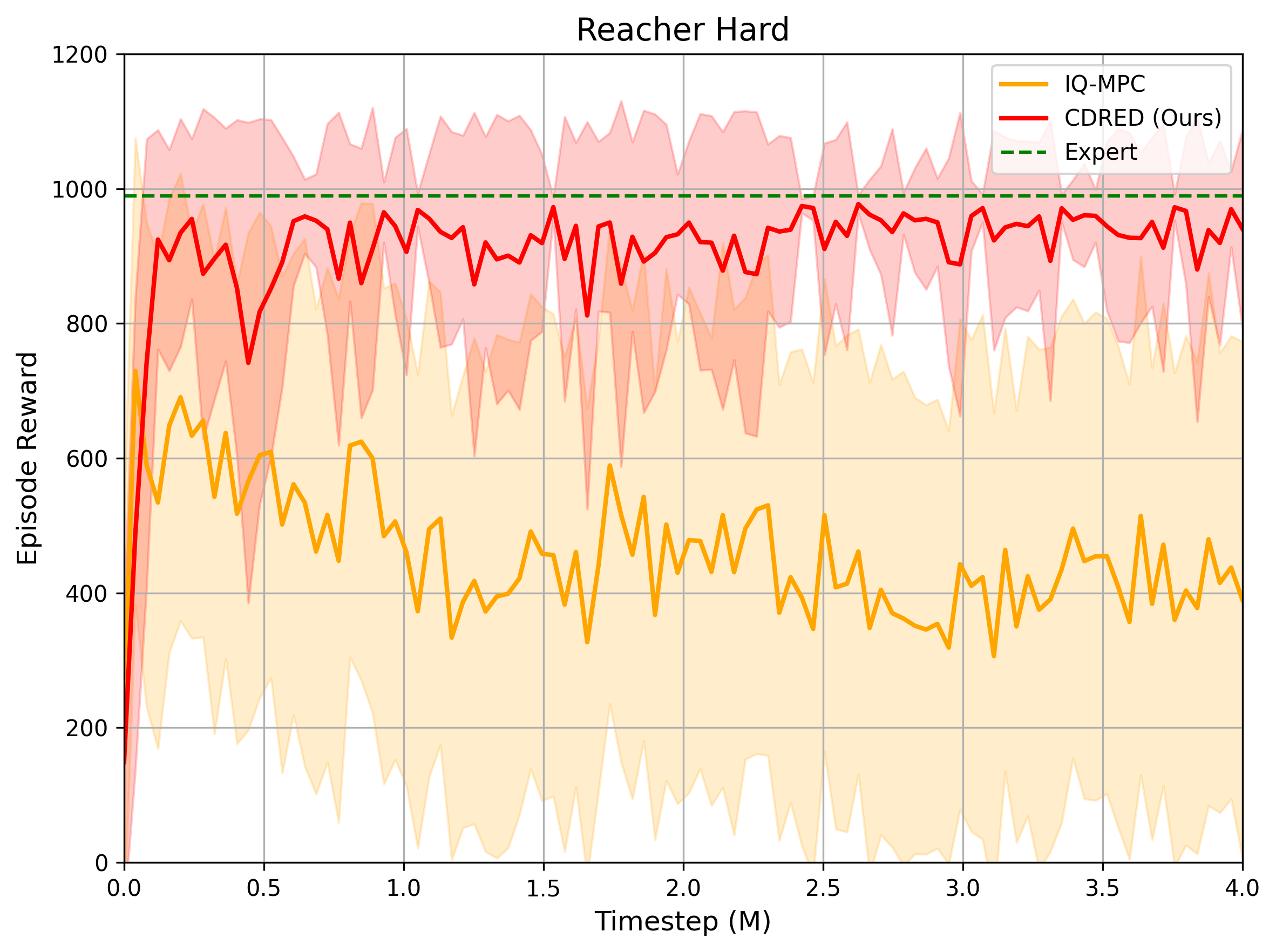}
        \\ \textbf{Overly powerful discriminator}
    \end{minipage}
    \begin{minipage}{0.45\textwidth}
        \centering
        \includegraphics[width=\textwidth]{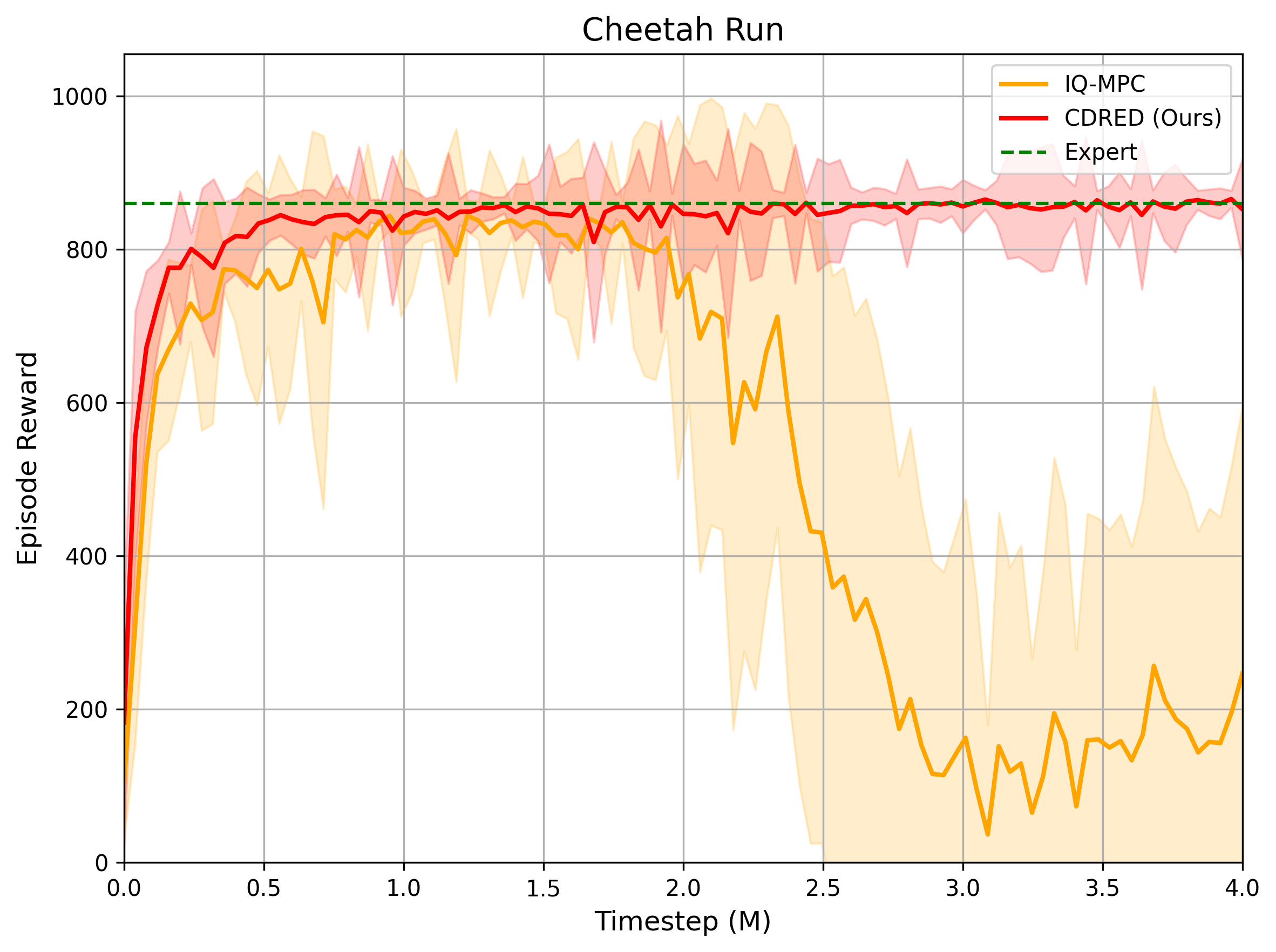}
        \\ \textbf{Long-term instability}
    \end{minipage}
    \caption{\textbf{Drawbacks of Methods Including Adversarial Training} We demonstrate the drawbacks of IQ-MPC \citep{li2024rewardfreeworldmodelsonline} in some tasks, which employs adversarial training for online imitation learning. An overly powerful discriminator (Left) leads to sub-optimal policy learning, while long-term instability (Right) of adversarial training prevents IQ-MPC from maintaining expert-level performance during extended online training. Our CDRED method, which replaces adversarial training with density estimation, is immune to these issues.}
    \label{fig:iqmpc-drawbacks}
\end{figure}

\paragraph{Overly Powerful Discriminator} The generative adversarial training process is often prone to instability \citep{gulrajani2017improved}. IQ-MPC employs generative adversarial training between the policy and the critic, and it also encounters this challenge. To mitigate this issue, IQ-MPC leverages gradient penalty from \cite{gulrajani2017improved} to enforce Lipschitz condition of the gradients in a form of:
\begin{equation}
        \mathcal{L}^{pen}=\sum_{t=0}^H\lambda^t\Bigg[\mathbb{E}_{(\mathbf{\hat s}_t, \mathbf{\hat a}_t)\sim\mathcal{B}}\Big(\Vert \nabla Q(\mathbf{\hat z}_t,\mathbf{\hat a}_t)\Vert_2-1\Big)^2\Bigg]
\end{equation}
In the gradient penalty, $(\mathbf{\hat s}_t,\mathbf{\hat a}_t)$ are data points on straight lines between expert and behavioral distributions, which are generated by linear interpolation. Although it counters the problem to some extent, the performance of IQ-MPC is still not satisfactory in some tasks such as Reacher in DMControl and Meta-World robotics manipulation tasks, for which we will refer to our experimental results in Section \ref{sec:experiments}. An overly powerful discriminator often causes the Q-value difference between the policy and expert distributions to diverge, as noted by \cite{li2024rewardfreeworldmodelsonline}. Specifically, this divergence is reflected in the gap between the expected Q-values under the expert distribution, $\mathbb{E}_{(\mathbf{s},\mathbf{a})_{(0:H)}\sim\mathcal{B}_E}Q(\mathbf{z}_t,\mathbf{a}_t)$, and the policy distribution, $\mathbb{E}_{(\mathbf{s},\mathbf{a})_{(0:H)}\sim\mathcal{B}_\pi}Q(\mathbf{z}_t,\mathbf{a}_t)$. While IQ-MPC can mitigate this divergence to some extent through gradient penalty, it does not eliminate the difference entirely, indicating that the policy does not achieve expert-level performance. We show the Q difference plot in a problematic case in Figure \ref{fig:q-diff}.

\begin{figure}
    \centering
    \includegraphics[width=0.5\linewidth]{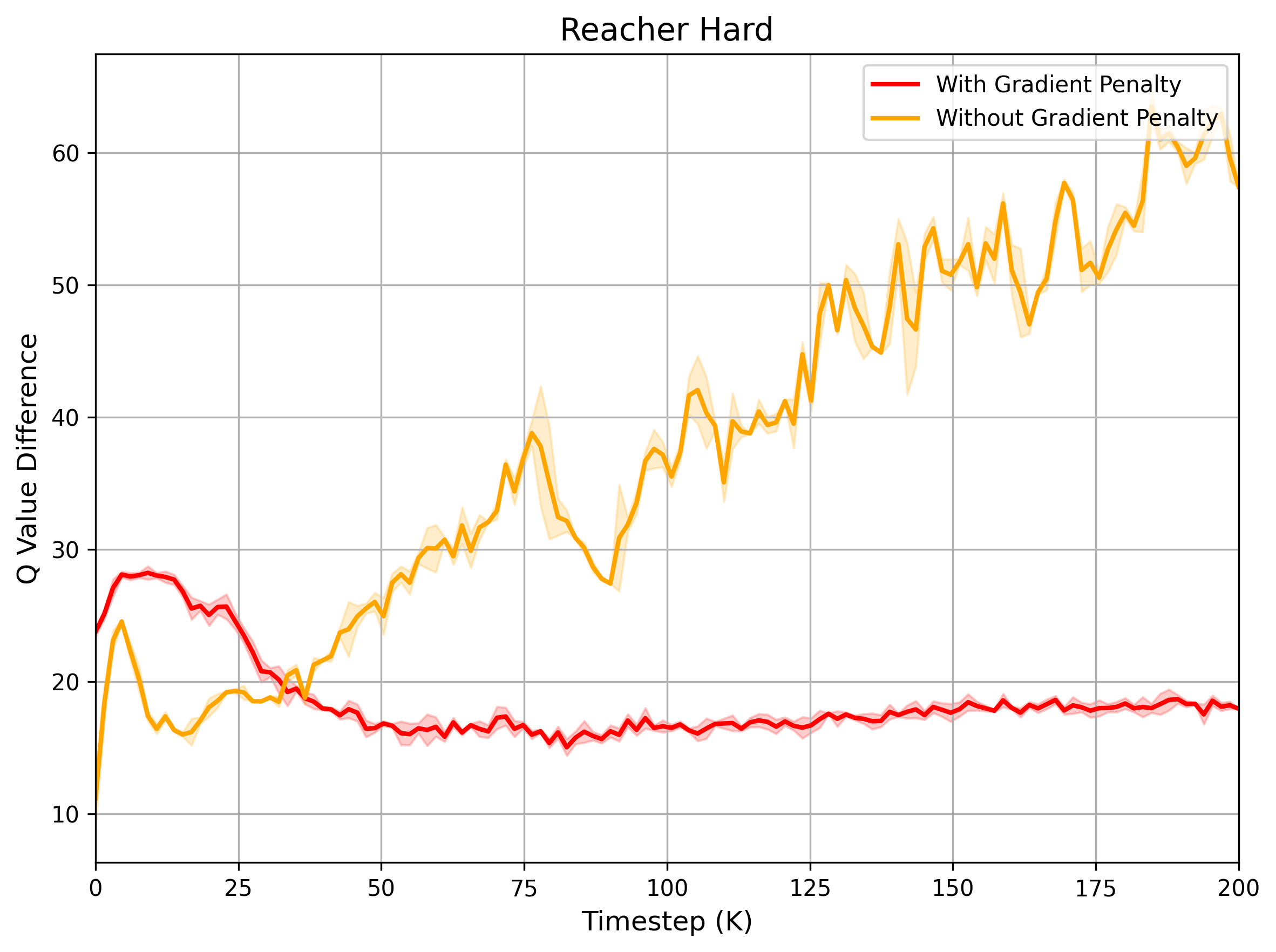}
    \caption{\textbf{IQ-MPC Q Value Difference Visualization} We present the Q-difference plot for IQ-MPC in a problematic scenario (Reacher Hard task in DMControl) where it is affected by an overly powerful discriminator. Although applying a gradient penalty prevents the Q-difference from diverging, it still fails to converge to a value near zero, resulting in a persistently large Q-difference throughout training.}
    \label{fig:q-diff}
\end{figure}

\paragraph{Long-term Instability} Since we're conducting online imitation learning, we prefer to train a policy that can reach expert-level and maintain stable expert-level performance during further training, which is the long-term training stability. Due to the use of adversarial training, we find it hard for IQ-MPC to maintain stable expert-level performance during extensive long-term online training.

\subsection{Improvement of Constructing the Reward Model on the Latent Space}
\label{sec:latent-space}

Original RND \citep{burda2018exploration} and Random Expert Distillation \citep{wang2019random} train their reward or bonus models directly on the original observation space. In contrast, we found that constructing the CDRED reward model using the latent representations from a world model yields better empirical performance. This highlights the superior properties of latent representations, which enable more accurate reward estimation. Furthermore, by training a latent dynamics model within this space, the representations become more dynamics-aware, facilitating the construction of a reward model that effectively captures the underlying dynamics.

To validate this, we compared training the CDRED reward model on the original observation space versus the latent space. Our results indicate that while training on the observation space may exhibit slightly suboptimal behavior in low-dimensional settings, it fails entirely in high-dimensional cases due to the challenges of density estimation on raw observations. These findings are illustrated in Figure \ref{fig:obs-latent-comparsion}.

\begin{figure}[h]
    \centering
    \begin{minipage}{0.45\textwidth}
        \centering
        \includegraphics[width=\textwidth]{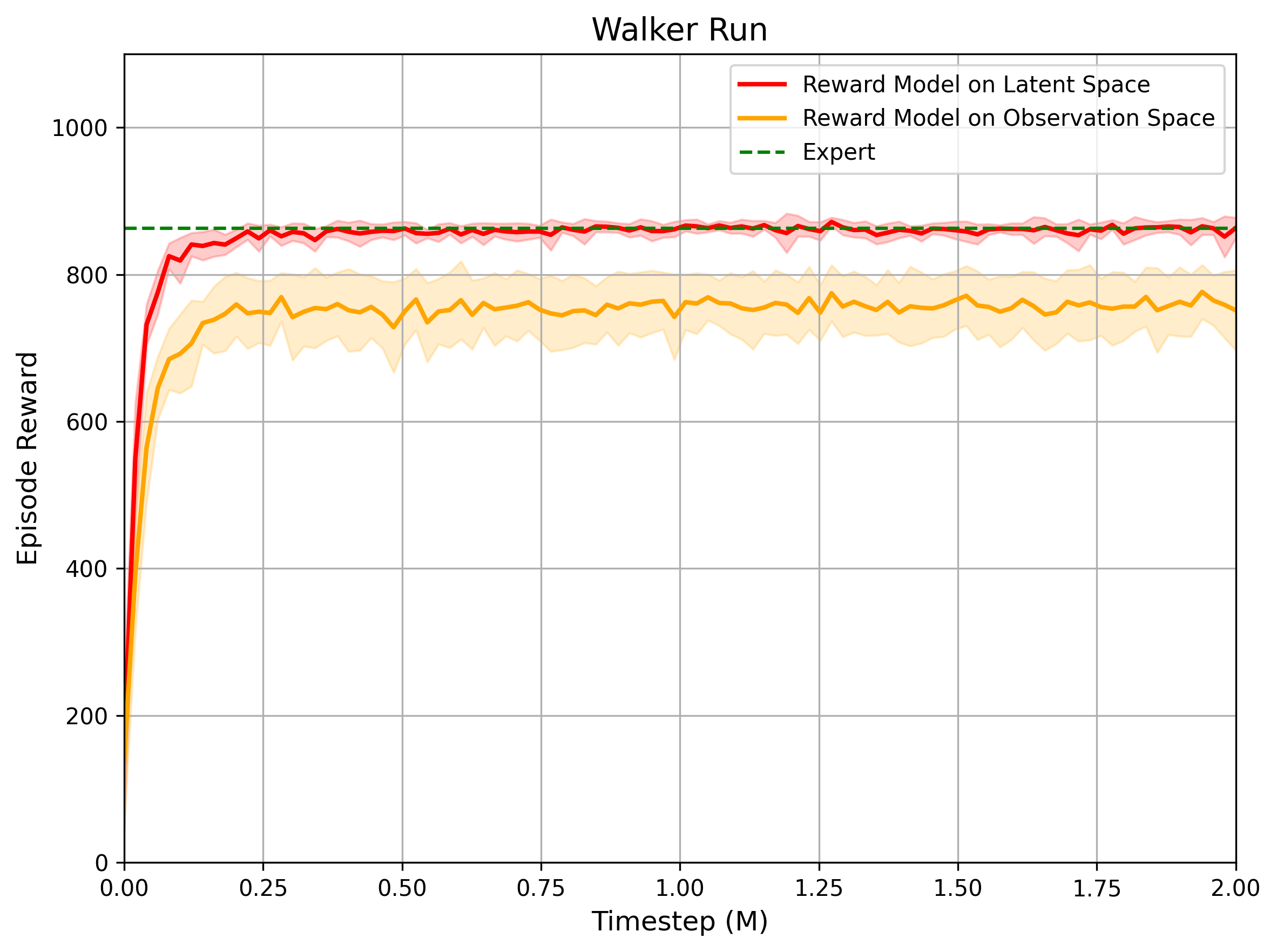}
    \end{minipage}
    \begin{minipage}{0.45\textwidth}
        \centering
        \includegraphics[width=\textwidth]{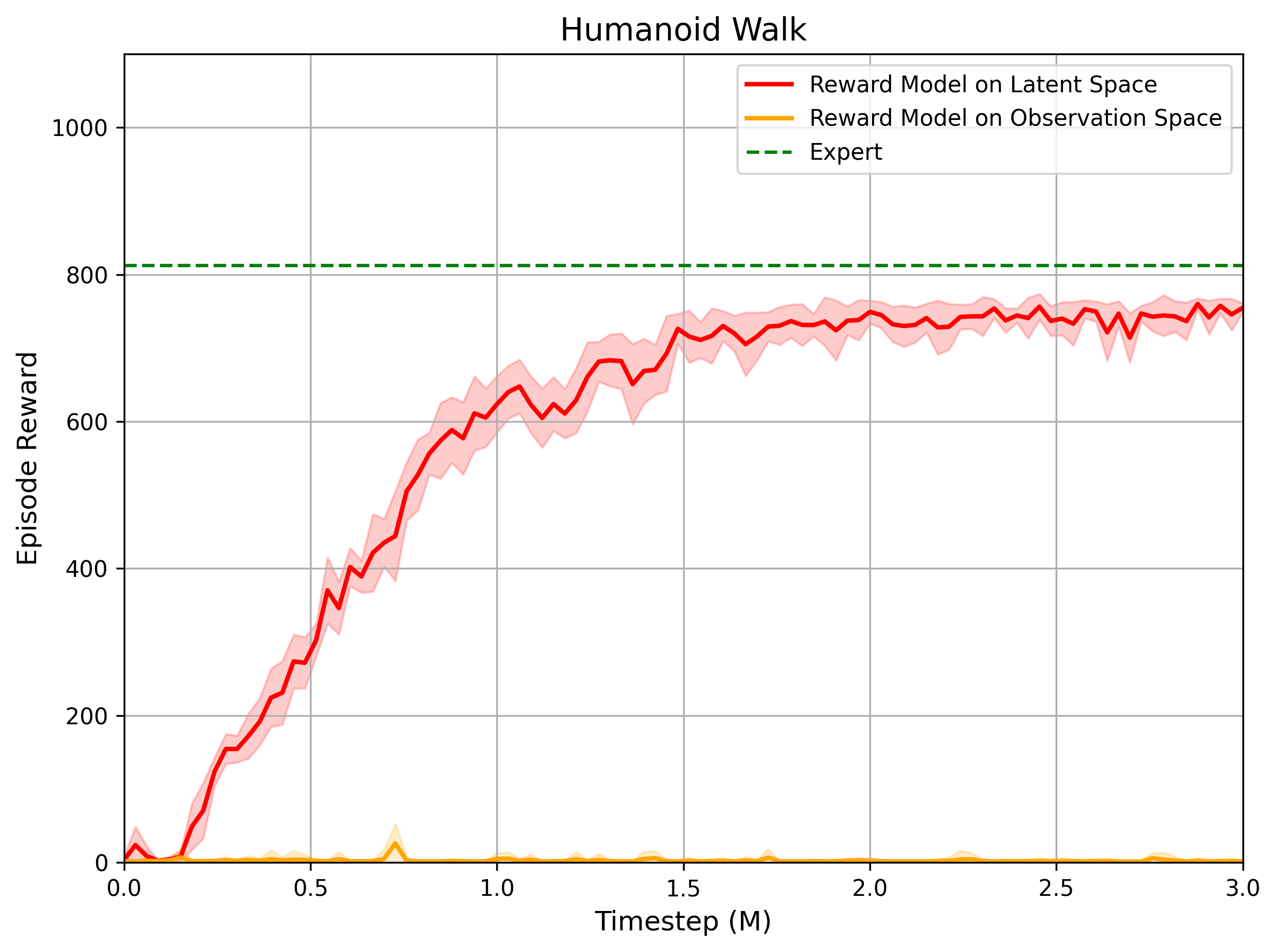}
    \end{minipage}
    \caption{\textbf{Effectiveness of the latent space CDRED reward model} We conduct comparative experiments to evaluate the performance of the CDRED reward model when trained on the latent space of the world model versus the original observation space. Our results show that training the CDRED reward model on the latent space yields superior empirical performance.}
    \label{fig:obs-latent-comparsion}
\end{figure}

\section{Proof of Lemma \ref{lem:unbiased-estimator}}
\label{sec:additional-proof}

For completeness, we adapt the proof from \cite{yang2024exploration} to construct the proof of Lemma \ref{lem:unbiased-estimator}. For a latent state-action pair $(\mathbf{z},\mathbf{a})$ sampled from a latent state-action distribution $\rho$. We denote the moments of the distribution of random variable $c(\mathbf{z},\mathbf{a})$ as:
\begin{align*}
&\mu_{\bar\theta}(\mathbf{z},\mathbf{a}) = \mathbb{E}\Big[{f_{\bar\theta_k}(\mathbf{z},\mathbf{a})}\Big] = \frac{1}{K} \sum_{k=0}^{K-1}f_{\bar\theta_k}(\mathbf{z},\mathbf{a}), \quad\quad\quad
B_2(\mathbf{z},\mathbf{a}) = \mathbb{E}\Big[{(f_{\bar\theta_k}(\mathbf{z},\mathbf{a}))}^2\Big] = \frac{1}{K} \sum_{k=0}^{K-1}(f_{\bar\theta_k}(\mathbf{z},\mathbf{a}))^2,\\
&B_3(\mathbf{z},\mathbf{a}) = \mathbb{E}\Big[{(f_{\bar\theta_k}(\mathbf{z},\mathbf{a}))}^3\Big] = \frac{1}{K} \sum_{k=0}^{K-1}(f_{\bar\theta_k}(\mathbf{z},\mathbf{a}))^3,\quad\quad
B_4(\mathbf{z},\mathbf{a}) = \mathbb{E}\Big[{(f_{\bar\theta_k}(\mathbf{z},\mathbf{a}))}^4\Big] = \frac{1}{K} \sum_{k=0}^{K-1}(f_{\bar\theta_k}(\mathbf{z},\mathbf{a}))^4.
\end{align*}
The calculation for the moments of $f^*(\mathbf{z},\mathbf{a})$ is as follows:
\begin{equation*}
\mathbb{E}[f_*(\mathbf{z},\mathbf{a})] = \mathbb{E}[\frac{1}{n} \sum_{i=1}^{n}c_i(\mathbf{z},\mathbf{a})] 
= \frac{1}{n}\mathbb{E}[\sum_{i=1}^{n}c_i(\mathbf{z},\mathbf{a})]
= \mu_{\bar\theta}(\mathbf{z},\mathbf{a}).
\end{equation*}
\begin{align*}
\mathbb{E}[f_*^2(\mathbf{z},\mathbf{a})] &= \mathbb{E}[(\frac{1}{n} \sum_{i=1}^{n}c_i(\mathbf{z},\mathbf{a}))^2] \\
&= \frac{1}{n^2}\mathbb{E}[(\sum_{i=1}^{n}c^2_i(\mathbf{z},\mathbf{a})+\sum_{i=1}^{n}\sum_{j\ne i}^{n}c_i(\mathbf{z},\mathbf{a})c_j(\mathbf{z},\mathbf{a}))] \\
&= \frac{1}{n^2}\mathbb{E}[nc^2(\mathbf{z},\mathbf{a}) + n(n-1)\mu_{\bar\theta}^2(\mathbf{z},\mathbf{a})]\\
&= \frac{B_2(\mathbf{z},\mathbf{a})}{n}+ \frac{n-1}{n}\mu_{\bar\theta}^2(\mathbf{z},\mathbf{a}).
\end{align*}
\begin{align*}
\mathbb{E} [f^4_*(\mathbf{z},\mathbf{a})] &= \frac{1}{n^{4}} \mathbb{E}\left[\sum_{i = 1}^{n} c_{i}(\mathbf{z},\mathbf{a})\right]^{4} \\
&= \frac{1}{n^{4}}\left(\mathbb{E}\left[\sum_{i = 1} c_{i}(\mathbf{z},\mathbf{a})^{4}\right]+\right. \quad 4 \mathbb{E}\left[\sum_{i \neq j} c_{i}^{3}(\mathbf{z},\mathbf{a}) c_{j}(\mathbf{z},\mathbf{a})\right]+\quad 3 \mathbb{E}\left[\sum_{i \neq j} c^{2}_{i}(\mathbf{z},\mathbf{a}) c^{2}_{j}(\mathbf{z},\mathbf{a})\right]\\
&\left.\quad+6E\left[\sum_{i \neq j \neq k} c_{i}(\mathbf{z},\mathbf{a}) c_{j}(\mathbf{z},\mathbf{a}) c_{k}^{2}(\mathbf{z},\mathbf{a})\right] +\quad \mathbb{E}\left[\sum_{i \neq j \neq k \neq l} c_{i}(\mathbf{z},\mathbf{a}) c_{j}(\mathbf{z},\mathbf{a}) c_{k}(\mathbf{z},\mathbf{a}) c_{l}(\mathbf{z},\mathbf{a})\right]\right)  \\
&= \frac{n B_4(\mathbf{z},\mathbf{a})+4 A_{n}^{2} \mu_{\bar\theta}(\mathbf{z},\mathbf{a}) B_3(\mathbf{z},\mathbf{a})+3 A_{n}^{2}B_2^2(\mathbf{z},\mathbf{a})+6A_{n}^{3} \mu_{\bar\theta}^{2}(\mathbf{z},\mathbf{a}) B_2(\mathbf{z},\mathbf{a})+A_{n}^{4} \mu_{\bar\theta}^{4}(\mathbf{z},\mathbf{a})}{n^{4}}.\\
&(A_n^i=\frac{n!}{(n-i)!} )
\end{align*}
The statistic $y(\mathbf{z},\mathbf{a})$ is defined as follows in Lemma \ref{lem:unbiased-estimator}:
$$
y(\mathbf{z},\mathbf{a}) = \frac{f_*^2(\mathbf{z},\mathbf{a}) - \mu_{\bar\theta}^2(\mathbf{z},\mathbf{a})}{B_2(\mathbf{z},\mathbf{a}) - \mu_{\bar\theta}^2(\mathbf{z},\mathbf{a})},
$$
and its expectation is:
$$
\mathbb{E}[y(\mathbf{z},\mathbf{a})] = \frac{\mathbb{E}[f_*^2(\mathbf{z},\mathbf{a})] - \mu_{\bar\theta}^2(\mathbf{z},\mathbf{a})}{B_2(\mathbf{z},\mathbf{a}) - \mu_{\bar\theta}^2(\mathbf{z},\mathbf{a})} = \frac{1}{n}.
$$
This implies that the statistic $y(\mathbf{z}, \mathbf{a})$ serves as an unbiased estimator for the reciprocal of the frequency of $(\mathbf{z}, \mathbf{a})$. The variance of $y(\mathbf{z}, \mathbf{a})$ is given by:
\begin{align*}
Var[y(\mathbf{z},\mathbf{a})] & = \frac{Var[f_*^2(\mathbf{z},\mathbf{a})]}{(B_2(\mathbf{z},\mathbf{a}) - \mu_{\bar\theta}^2(\mathbf{z},\mathbf{a}))^2}\\ & = \frac{\mathbb{E}[f^4_*(\mathbf{z},\mathbf{a})]-\mathbb{E}^2[f^2_*(\mathbf{z},\mathbf{a})]}{(B_2(\mathbf{z},\mathbf{a}) - \mu_{\bar\theta}^2(\mathbf{z},\mathbf{a}))^2}\\ & = \frac{K_1B_4(\mathbf{z},\mathbf{a})+K_2\mu_{\bar\theta}(\mathbf{z},\mathbf{a})B_3(\mathbf{z},\mathbf{a})+K_3B^2_2(\mathbf{z},\mathbf{a})+K_4\mu_{\bar\theta}^2(\mathbf{z},\mathbf{a})B_2(\mathbf{z},\mathbf{a})+K_5\mu_{\bar\theta}^4(\mathbf{z},\mathbf{a})}{n^3(B_2(\mathbf{z},\mathbf{a}) - \mu_{\bar\theta}^2(\mathbf{z},\mathbf{a}))^2}
\end{align*}
where
\begin{align*}
&K_1 = 1, \quad K_2 = 4n-4, \quad K_3 = 2n-3,\\
&K_4 = 4n^2-16n+12, \quad K_5 = -5n^2+10n-6.
\end{align*}
so we have:
$$
\lim_{n \to \infty} Var[y(\mathbf{z},\mathbf{a})] = 0.
$$
As $n$ approaches infinity, the variance of the statistic approaches zero, indicating the stability and consistency of $y(\mathbf{z}, \mathbf{a})$.

\section{Related Works}

Our work builds on previous advancements in imitation learning and model-based reinforcement learning.

\paragraph{Imitation Learning} Recent advancements in imitation learning (IL) have leveraged deep neural networks and diverse methodologies to enhance performance. Generative Adversarial Imitation Learning (GAIL) \citep{ho2016generative} laid the foundation for adversarial reward learning by formulating it as a min-max optimization problem inspired by Generative Adversarial Networks (GANs) \citep{goodfellow2014generative}. Several approaches have built on GAIL. Model-based Adversarial Imitation Learning (MAIL) \citep{baram2016model} extended GAIL with a forward model trained via data-driven methods. ValueDICE \citep{kostrikov2019imitation} transformed the adversarial framework by focusing on off-policy learning through distribution ratio estimation.

Offline imitation learning has seen significant advancements through approaches like Diffusion Policy \citep{chi2023diffusionpolicy}, which applied diffusion models for behavioral cloning, and Ditto \citep{demoss2023ditto}, which combined Dreamer V2 \citep{hafner2020mastering} with adversarial techniques. Implicit BC \citep{florence2022implicit} demonstrated that supervised policy learning with implicit models improves empirical performance in robotic tasks. DMIL \citep{zhang2023discriminator} leveraged a discriminator to assess dynamics accuracy and the suboptimality of model rollouts against expert demonstrations in offline IL.

Other innovations focused on integrating advanced reinforcement learning techniques. Inverse Soft Q-Learning (IQ-Learn) \citep{garg2021iq} reformulated GAIL's learning objectives, applying them to soft actor-critic \citep{haarnoja2018soft} and soft Q-learning agents. SQIL \citep{reddy2019sqil} contributed an online imitation learning algorithm utilizing soft Q-functions. CFIL \citep{freund2023coupled} introduced a coupled flow method for simultaneous reward generation and policy learning from expert demonstrations. Random Expert Distillation (RED) \citep{wang2019random} proposed an alternative method for constructing reward models by estimating the support of the expert policy distribution.

Model-based methods have also played a pivotal role in advancing IL. V-MAIL \citep{rafailov2021visual} employed variational models to facilitate imitation learning, while CMIL \citep{kolev2024efficient} utilized conservative world models for image-based manipulation tasks. Prior works \citep{englert13model, hu2022model, igl2022symphony} highlighted the potential of model-based imitation learning in real-world robotics control and autonomous driving. A model-based inverse reinforcement learning approach by \citet{das2021model} explored key-point prediction to improve performance in imitation tasks. Hybrid Inverse Reinforcement Learning \citep{ren2024hybrid} offered a novel strategy blending online and expert demonstrations, enhancing agent robustness in stochastic settings. EfficientImitate \citep{yin2022planning} fused EfficientZero \citep{ye2021mastering} with adversarial imitation learning, achieving impressive performance on DMControl tasks \citep{tassa2018deepmind}.

\paragraph{Model-based Reinforcement Learning} Recent advancements in model-based reinforcement learning (MBRL) utilize learned dynamics models, constructed via data-driven methodologies, to enhance agent learning and decision-making. MBPO \citep{janner2019trust} introduced a model-based policy optimization algorithm that ensures stepwise monotonic improvement. Extending this to offline RL, MOPO \citep{yu2020mopo} incorporated a penalty term in the reward function based on the uncertainty of the dynamics model to manage distributional shifts effectively. MBVE \citep{feinberg2018model} augmented model-free agents with model-based rollouts to improve value estimation.

Many approaches focus on constructing dynamics models in latent spaces. PlaNet \citep{hafner2019learning} pioneered this direction by proposing a recurrent state-space model (RSSM) with an evidence lower bound (ELBO) training objective, addressing challenges in partially observed Markov decision processes (POMDPs). Building on PlaNet, the Dreamer algorithms \citep{hafner2019dream, hafner2020mastering, hafner2023mastering} leveraged learned world models to simulate future trajectories in a latent space, enabling efficient learning and planning. The TD-MPC series \citep{hansen2022temporal, hansen2023td} further refined latent-space modeling by developing a scalable world model for model predictive control, utilizing a temporal-difference learning objective to improve performance. Similarly, MuZero \citep{schrittwieser2020mastering} combined a latent dynamics model with tree-based search to achieve strong performance in discrete control tasks, blending planning and policy learning seamlessly. The EfficientZero series \citep{ye2021mastering,wang2024efficientzero} enhances MuZero, achieving superior sampling efficiency in visual reinforcement learning tasks.

\end{document}